\newcommand{\@runningtitle}{}
\title{\textbf{Finite Time Analysis of Constrained Natural Critic-Actor Algorithm with Improved Sample Complexity}}
\author[1]{Prashansa Panda}
\author[1]{Shalabh Bhatnagar}
\affil[1]{Department of Computer Science and Automation, Indian Institute of Science, Bangalore, India}
\date{}
\begin{document}

\newcommand{\fix}{\marginpar{FIX}}
\newcommand{\new}{\marginpar{NEW}}
\newcommand{\Ab}{\mathbf{A}}
\newcommand{\EE}{\mathbb{E}}
\newcommand{\btheta}{\bm{\theta}}
\newcommand{\bgamma}{\bm{\gamma}}
\newcommand{\bbb}{\mathbf{b}}
\newcommand{\cP}{\mathcal{P}}
\newcommand{\RR}{\mathbb{R}}
\newcommand{\cS}{{\mathcal{S}}}
\newcommand{\cE}{\mathcal{E}}
\newcommand{\PP}{\mathbb{P}}
\newcommand{\norm}[1]{\|#1\|}
\newcommand{\cO}{\mathcal{O}}

\ifx\theorem\undefined
\newtheorem{theorem}{Theorem}

\newtheorem{lemma}{Lemma}
\newtheorem{sublemma}{lemma}[lemma]

\ifx\remark\undefined
\newtheorem{remark}{Remark}

\ifx\proposition\undefined
\newtheorem{proposition}{Proposition}

\ifx\assumption\undefined
\newtheorem{assumption}{Assumption}

\ifx\corollary\undefined
\newtheorem{corollary}{Corollary} 

% If your paper is accepted and the title of your paper is very long,
% the style will print as headings an error message. Use the following
% command to supply a shorter title of your paper so that it can be
% used as headings.
%
%\runningtitle{I use this title instead because the last one was very long}

% If your paper is accepted and the number of authors is large, the
% style will print as headings an error message. Use the following
% command to supply a shorter version of the author names so that
% they can be used as headings (for example, use only the surnames)
%
%\runningauthor{Surname 1, Surname 2, Surname 3, ...., Surname n}

\twocolumn[

%\arxivtitle{Finite Time Analysis of Constrained Natural Critic-Actor Algorithm with Improved Sample Complexity}
\maketitle

]

\begin{abstract}
  Recent studies have increasingly focused on non-asymptotic convergence analyses for actor-critic (AC) algorithms. One such effort introduced a two-timescale critic-actor algorithm for the discounted cost setting using a tabular representation, where the usual roles of the actor and critic are reversed. However, only asymptotic convergence was established there. Subsequently, both asymptotic and non-asymptotic analyses of the critic-actor algorithm with linear function approximation were conducted. In our work, we introduce the first natural critic-actor algorithm with function approximation for the long-run average cost setting and under inequality constraints. We provide the non-asymptotic convergence guarantees for this algorithm. Our analysis establishes optimal learning rates and we also propose a modification to enhance sample complexity. We further show the results of experiments on three different Safety-Gym environments where our algorithm is found to be competitive in comparison with other well known algorithms.
\end{abstract}

\section{INTRODUCTION}

Actor-Critic (AC) methods have demonstrated strong effectiveness in addressing a wide range of reinforcement learning (RL) problems. Pure actor-based methods, like REINFORCE, often suffer from high variance in policy gradient estimates, while critic-only approaches such as Q-learning perform well in tabular settings but may become unstable or diverge when combined with function approximation. AC methods mitigate these issues by integrating policy-based and value-based techniques. In this framework, the actor’s role is to learn the optimal policy guided by value estimates from the critic, whereas the critic aims to evaluate the value function for the policy defined by the actor. Stability in these algorithms is typically achieved by employing distinct timescales for the updates of the actor and critic, a concept we elaborate on in the following sections.

The Actor-Critic (AC) framework is structured to emulate the policy iteration (PI) method used in Markov Decision Processes (MDPs) \citep{puterman}. AC algorithms employ coupled stochastic recursions that operate on two distinct timescales, with the actor typically updating at a slower rate than the critic. This separation of timescales plays a crucial role in achieving stability of the iterates and ensuring their almost sure convergence. Specifically, from the perspective of the faster timescale, the slower process appears nearly constant, while from the slower timescale's viewpoint, the faster process seems to have reached equilibrium. This dynamically allows the AC algorithm to effectively approximate PI and converge to the optimal policy. The asymptotic convergence of such two-timescale AC algorithms is often analyzed using the ordinary differential equation (ODE) method. There has recently been a surge in research efforts related to constrained reinforcement learning recently, primarily driven by applications in safe reinforcement learning (Safe-RL). In this framework, each state transition is associated not only with a single-stage cost reflecting the action's effectiveness and the resulting next state, but also with additional single-stage constraint costs that capture safety considerations. The objective is to minimize the long-term cost while ensuring that the long-term constraint costs remain within predefined thresholds. Typically, the problem setting may involve multiple such constraint costs.

In \citep{bhatnagar2023actorcritic}, a novel critic-actor (CA) algorithm was introduced under the lookup table setting for the infinite-horizon discounted cost problem. In contrast to conventional AC schemes, the roles of actor and critic were interchanged by reversing their timescales, with the critic (actor) updates on the slower (faster) timescale. This reversed configuration leads the CA algorithm to mimic value iteration instead of policy iteration. Subsequently, in \cite{Panda_Bhatnagar_2025}, the asymptotic and non-asymptotic convergence properties of a two-timescale Critic-Actor algorithm with linear function approximation have been analyzed.

In this work, we advance the Critic-Actor (CA) framework by proposing the first Natural CA algorithm under inequality constraints, which also integrates function approximation and is tailored for the long-run average cost setting. The algorithm functions on three different timescales. The average cost estimate and the actor operate on the fastest timescale, followed by the critic, while the Lagrange multiplier is updated on the slowest timescale. The critic update employs linear function approximation, while the actor uses a natural policy gradient approach. We conduct a non-asymptotic analysis of the algorithm and derive sample complexity bounds. This analysis enables us to determine optimized learning rates for the actor and critic updates. Subsequently, we also modify the learning rates to improve sample complexity.\\

\noindent \textbf{Main Contributions:}\\
 (\textit{a}) We present the first constrained natural critic-actor (C-NCA) algorithm with linear function approximation for the long-run average-cost criterion where the critic runs on a slower timescale as compared to the actor.\\
(\textit{b}) We carry out a finite-time analysis of the two-timescale C-NCA algorithm wherein we present finite-time bounds for the critic error, actor error and the average cost estimation error, respectively. Specifically, we obtain a sample complexity bound of {$\mathcal{\tilde{O}}(\epsilon^{-(2+\delta)})$ with $\delta >0$  arbitrarily close to zero}, for the mean squared error of the critic to be upper bounded by $\epsilon$ which is equivalent to the sample complexity of the (unconstrained) two-timescale critic–actor algorithm of \cite{Panda_Bhatnagar_2025}.\\
(\textit{c}) Subsequently, we modify the learning rates to enhance sample complexity, which is seen to improve from $\mathcal{\tilde{O}}(\epsilon^{-(2+\delta)})$ to $\mathcal{\tilde{O}}(\epsilon^{-(2)})$.\\
(\textit{d}) We also compare the empirical performance of our modified C-NCA algorithm with other well-known algorithms on multiple OpenAI Gym environments and observe comparable performance with these.

\noindent\textbf{Notation: }\\
For two sequences $\{c_n\}$ and $\{d_n\}$, we write $c_n = \mathcal{O}(d_n)$ if there exists a constant $P>0$ such that $\frac{|c_n|}{|d_n|} \leq P$. To suppress logarithmic factors, we use the notation $\tilde{\mathcal{O}}(\cdot)$. Unless otherwise stated, $\|\cdot\|$ denotes the $\ell_2$-norm on Euclidean vectors. The total variation distance between two probability measures $M$ and $N$ is defined as $d_{TV}(M,N) = \tfrac{1}{2} \int_{\mathcal{X}} \big| M(dx) - N(dx) \big|$.

\section{RELATED WORK}

We provide a brief overview of related work. In \citep{konda_actor_critic_type}, actor-critic (AC) algorithms were introduced using look-up table representations, along with the first asymptotic analysis of their convergence. Subsequently, in \citep{konda_onactorcritic}, AC algorithms with function approximation based on the Q-value function were proposed, and their asymptotic behavior analyzed. A natural gradient-based AC algorithm was presented in \citep{kakade_2001}. Further studies, including \citep{dicastro2009convergent} and \citep{zhang2020provably}, have also conducted asymptotic convergence analyses of AC algorithms. In \citep{BHATNAGAR20092471}, natural AC algorithms were developed that perform bootstrapping in both the actor and critic updates, with a detailed analysis of their asymptotic stability and convergence. More recently, \citep{pmlr-v247-zeng24a} proposed a novel two-timescale optimization method that achieves improved convergence speed.

In recent years, substantial research has focused on conducting finite-time analyses of reinforcement learning algorithms. Such analyses are valuable as they yield sample complexity estimates and non-asymptotic convergence bounds, offering a more practical understanding of algorithmic performance. More recently, similar analyses have been extended to actor–critic algorithms, though predominantly in the unconstrained, regular MDP setting. For example, \cite{NEURIPS2020_5f7695de} derive finite-time bounds for a natural policy gradient algorithm applied to discounted-cost MDPs with constraints. \cite{wu2022finitetimeanalysistimescale} present a non-asymptotic analysis of a two–time-scale actor–critic algorithm under non-i.i.d. sampling, establishing a sample complexity of $\tilde{\mathcal{O}}(\epsilon^{-2.5})$ for convergence to an $\epsilon$-approximate stationary point of the performance objective. In the multi-agent domain, \cite{multi_agent} investigate a fully decentralized MARL setting and provide finite-time convergence guarantees for the actor–critic algorithm in the average-reward MDP framework. There have also been some attempts to establish finite-time sample complexity bounds for single–time-scale AC algorithms. \cite{chen_zhao} establish finite-time convergence results for the one-timescale actor–critic algorithm, achieving a sample complexity of $\tilde{\mathcal{O}}(\epsilon^{-2})$ for an $\epsilon$-approximate stationary point. \cite{suttleetal} examine the non-asymptotic convergence of the Multi-level Monte Carlo Actor–Critic (MAC) algorithm, while \cite{mondal_aggarwal} propose and analyze the convergence of the Accelerated Natural Policy Gradient (ANPG) algorithm. Additional studies have investigated Natural Actor–Critic (NAC) algorithms from a finite-time perspective, see, for instance, \cite{cayci2022finitetimeanalysisentropyregularizedneural}, \cite{xuetal}, \cite{khodadadian}, \cite{khodadadian2021finitesampleanalysisoffpolicynatural}, and \cite{chen2022finitesampleanalysisoffpolicynatural}.

In some of the early work on reinforcement learning algorithms for Markov Decision Processes under inequality constraints, \cite{Borkar} introduced the first actor--critic algorithm in the long-run average cost setting and established its asymptotic convergence in the tabular case. Subsequently, an actor--critic algorithm with function approximation for the infinite-horizon discounted cost problem under multiple inequality constraints was proposed in \citep{bhatnagar_2010} and the asymptotic convergence of such a scheme shown. This idea was also carried forward in \citep{Bhatnagar2012OnlineActorCritic} that develops an actor-critic method for constrained long-run average cost MDPs with function approximation, employing a policy-gradient actor and temporal-difference critic.\cite{panda_and_bhatnagar} have recently shown a finite-time analysis of the three-timescale constrained actor–critic and constrained natural actor-critic algorithms.

The Critic-Actor (CA) algorithm was first introduced in \citep{bhatnagar2023actorcritic} for the tabular setting, where the actor update operates on a faster timescale than the critic, under the infinite-horizon discounted cost criterion. Asymptotic stability and almost sure convergence of the method was established there. \cite{Panda_Bhatnagar_2025} recently proposed the first CA algorithm with function approximation under the long-run average reward criterion, establishing both asymptotic and finite-time convergence guarantees. A comparative summary of our results with selected related works, in terms of sample complexity, is provided in Table \ref{sample-table}.
\begin{table*}[t]
\caption{Comparison With Related Works: \citep{olshevsky} Uses Discounted Reward Setting While Others Are For Average Reward.
}
\label{sample-table}
\vskip 0.15in
\begin{center}
\begin{small}
%\begin{sc}
\begin{tabular}{|p{2 cm}|c|c|c|p{1.6 cm}|}
\toprule
Reference & Algorithm & Sampling  & Sample Complexity & Critic\\     
\midrule
\citep{wu2022finitetimeanalysistimescale}    & \multicolumn{1}{c|}{Two-timescale AC} & \multicolumn{1}{c|}{Markovian}  & $\tilde{\mathcal{O}}(\epsilon^{-2.5})$ & TD(0) \\ \hline
\citep{olshevsky} &  \multicolumn{1}{c|}{Single-timescale AC} 
& \multicolumn{1}{c|}{i.i.d}   & $\tilde{\mathcal{O}}(\epsilon^{-2})$ & TD(0)\\ \hline
\citep{chen_zhao} & \multicolumn{1}{c|}{Single-timescale AC}  & \multicolumn{1}{c|}{Markovian}   & $\tilde{\mathcal{O}}(\epsilon^{-2})$ & TD(0)\\ \hline
\citep{suttleetal} &    \multicolumn{1}{c|}{Two-timescale MLAC}  & \multicolumn{1}{c|}{Markovian}  & $\widetilde{\mathcal{O}}(\tau^{2}_{mix}\epsilon^{-2})$ & MLMC\\ \hline
 \citep{Panda_Bhatnagar_2025}    & \multicolumn{1}{c|}{Two-timescale CA}  & \multicolumn{1}{c|}{Markovian}  & $\tilde{\mathcal{O}}(\epsilon^{-(2+\delta)})$ & TD(0) \\ \hline
 \citep{panda_and_bhatnagar} & \multicolumn{1}{c|}{Three-timescale C-AC and C-NAC} & \multicolumn{1}{c|}{Markovian}  & $\tilde{\mathcal{O}}(\epsilon^{-(2.5)})$ & TD(0) \\ \hline
 \rowcolor{blue!10} Our work    & \multicolumn{1}{c|}{Three-timescale C-NCA}  & \multicolumn{1}{c|}{Markovian}  & $\tilde{\mathcal{O}}(\epsilon^{-(2 + \bar{\delta})})$ & TD(0) \\ \hline
 \rowcolor{blue!10} Our work    & \multicolumn{1}{c|}{Modified Three-timescale C-NCA}  & \multicolumn{1}{c|}{Markovian}  & $\tilde{\mathcal{O}}(\epsilon^{-2})$ & TD(0) \\
\bottomrule
\end{tabular}
%\end{sc}
\end{small}
\end{center}
\vskip -0.1in
\end{table*}

\section{PRELIMINARIES}
In this section, we introduce the C-MDP framework along with the algorithms that form the focus of our analysis.

\subsection{Constrained Markov Decision Processes}

We consider a discrete-time Markov Decision Process (MDP) with finite state and action spaces. The notation used throughout is as follows:

\begin{itemize}
    \item \textbf{State and action spaces:}  
    Let $S$ denote the set of states, and $A$ the set of actions. For each state $j \in S$, let $A(j) \subset A$ represent the set of feasible actions available in state $j$.
    
    \item \textbf{Transition probabilities:}  
    $p(s, s', a)$ denotes the probability of transitioning from state $s$ to state $s'$ when action $a$ is taken.
    
    \item \textbf{Policies:}  
    We restrict our attention to \emph{randomized policies} $\pi$, parameterized by $\theta \in \mathbb{R}^d$. For a given parameter vector $\theta$, $\pi_{\theta}(a \mid s)$ denotes the probability of selecting action $a \in A(s)$ in state $s$.
    
    \item \textbf{Stationary distribution:}  
    The stationary distribution over states induced by policy $\pi_{\theta}$ is denoted by $\mu_{\pi_\theta}$, or simply $\mu_{\theta}$ (with slight abuse of notation). We assume that this distribution is unique for any $\theta$.
\end{itemize}

Let $q(n), h_1(n), \ldots, h_N(n), \; n \geq 0$, denote the set of costs incurred when transitioning from state $s_n$ to state $s_{n+1}$ under action $a_n \in A(s_n)$. 
At any time step $n$, the single-stage costs $q(n), h_k(n), \; k = 1, \ldots, N$, depend only on the current state--action pair $(s_n, a_n)$ and are conditionally independent of all past states and actions $s_m, a_m, \; m < n$.

For any $i \in S$ and $a \in A(i)$, we define
\begin{align*}
&d(i,a) = \mathbb{E}\!\left[ q(n) \mid s_n = i, a_n = a \right], \\
&h_k(i,a) = \mathbb{E}\!\left[ h_k(n) \mid s_n = i, a_n = a \right], \quad k = 1, \ldots, N.
\end{align*}
(Note the abuse of notation above for the random variables $h_k(n)$ and their expected values $h_k(i,a)$.)

We assume that all single-stage costs are real-valued, non-negative, and mutually independent. Furthermore, each is uniformly bounded in absolute value by a constant $U_c > 0$.

\subsection{Objective Function and Lagrange Relaxation}

Our objective is to minimize the cost functional $J(\pi)$, defined as  
\begin{align}
    J(\pi) &= \lim_{n \rightarrow \infty} \frac{1}{n} 
    \EE\!\left[ \sum_{m=0}^{n-1} q(m) \,\middle|\, \pi \right] \notag\\
    \label{eq1}
    &= \sum_{s \in S} \mu_\pi(s) \sum_{a \in A(s)} \pi(s,a) \, d(s,a),
\end{align}
subject to the constraints
\begin{align}
    G_k(\pi) &= \lim_{n \rightarrow \infty} \frac{1}{n} 
    \EE\!\left[ \sum_{m=0}^{n-1} h_k(m) \,\middle|\, \pi \right] \notag\\
    \label{eq2}
    &= \sum_{s \in S} \mu_\pi(s) \sum_{a \in A(s)} \pi(s,a) \, h_k(s,a) \le \alpha_k,
\end{align}
for $k = 1, \ldots, N$, where $\alpha_1, \ldots, \alpha_N$ are given positive threshold values.  
We assume here that, under any policy $\pi$, the Markov process $\{s_n\}$ is ergodic, ensuring that the limits in (\ref{eq1})–(\ref{eq2}) are well-defined.

Let $\gamma = (\gamma_1, \ldots, \gamma_N)^T$ denote the vector of Lagrange multipliers, with each $\gamma_k \in \mathbb{R}^+ \cup \{0\}$.  
The Lagrangian $L(\pi, \gamma)$ is then given by
\begin{align*}
   &L(\pi, \gamma) = J(\pi) + \sum_{k=1}^{N} \gamma_k \big( G_k(\pi) - \alpha_k \big) \\
   &= \sum_{s \in S} \mu_\pi(s) \sum_{a \in A(s)} \pi(s,a) 
      \left[ d(s,a) + \sum_{k=1}^{N} \gamma_k \big( h_k(s,a) - \alpha_k \big) \right].
\end{align*}

This transformation converts the original constrained MDP into an unconstrained one, 
with the single-stage cost at time $t$ given by
\[
    q(t) + \sum_{k=1}^{N} \gamma_k \big( h_k(t) - \alpha_k \big).
\]

The differential action-value function in the relaxed control formulation is defined as
\begin{align*}
   &M^{\pi,\gamma}(s,a) 
   = \sum_{t=1}^{\infty} \EE\Bigg[ q(t) + \sum_{i=1}^{N} \gamma_i \big( h_i(t) - \alpha_i \big) \\
   &\quad - \Big( J(\theta) + \sum_{i=1}^{N} \gamma_i \big( G_i(\theta) - \alpha_i \big) \Big) 
   \,\Big|\, s_0 = s, a_0 = a, \pi \Bigg] \\
   &= \sum_{t=1}^{\infty} \EE\Bigg[ q(t) + \sum_{i=1}^{N} \gamma_i h_i(t)\\ 
   &- \Big( J(\theta) + \sum_{i=1}^{N} \gamma_i G_i(\theta) \Big) 
   \,\Big|\, s_0 = s, a_0 = a, \pi \Bigg].
\end{align*}

Following \cite{Bhatnagar2012OnlineActorCritic}, in the constrained setting, the policy gradient of the Lagrangian takes the form
\begin{equation}
\label{pgt}
\nabla_{\theta} L(\theta,\gamma) 
= \sum_{s \in S} \mu_\pi(s) \sum_{a \in A(s)} 
\nabla \pi(a|s) \, \textit{A}^{\pi,\gamma}(s,a),
\end{equation}
where the advantage function for the relaxed formulation is given by
\[
\textit{A}^{\pi,\gamma}(s,a) 
= M^{\pi,\gamma}(s,a) - V^{\pi,\gamma}(s),
\]
and $V^{\pi,\gamma}(s)$ denotes the differential value function for policy $\pi$ and Lagrange multipliers $\gamma$.
By an abuse of notation, we many times use $\theta$ in place of the policy $\pi$, for instance, $\nabla_\theta L(\theta,\gamma)$ in place of $\nabla_\theta L(\pi,\gamma)$.  

We employ linear function approximation for $M^{\pi,\gamma}(s,a)$, and let
\[
\hat{M}_{w}^{\pi,\gamma}(s,a) \stackrel{\triangle}{=} w^{\pi,\gamma^\top} \Psi_{sa},
\]
denote the approximator of $M^{\pi,\gamma}(s,a)$. Here
$w^{\pi,\gamma} \in \mathbb{R}^d$ is the parameter vector and 
$\Psi_{sa} \in \mathbb{R}^d$ denotes the compatible feature vector for $(s,a)$, defined by
\[
\Psi_{sa} = \nabla \log \pi(a|s), 
\quad \forall\, s \in S,\, a \in A(s).
\]

Similarly, we approximate the differential value function $V^{\pi,\gamma}(s)$ using
\begin{align*}
    \hat{V}_{v}^{\pi,\gamma}(s)\stackrel{\triangle}{=} v^{\pi,\gamma^\top} f_s,
\end{align*}
where $f_s \in \mathbb{R}^{d_1}$ is a feature vector 
$f_s = (f_s(1), f_s(2), \ldots, f_s(d_1))^\top$ associated with state $s$, and 
$v^{\pi,\gamma} = (v^{\pi,\gamma}(1), v^{\pi,\gamma}(2), \ldots, v^{\pi,\gamma}(d_1))^\top$ is the corresponding weight vector.

\subsection{The Constrained Natural Critic-Actor Algorithm}

We now present the C-NCA algorithm, which is the focus of our non-asymptotic convergence analysis.  
At each time step $t$, the algorithm maintains $v_t$ as the critic parameter, 
$\theta_t$ as the actor parameter,  
$L_t$ as the average cost estimate,  
$U_k(t)$ as the average constraint cost estimate for $k = 1, 2, \ldots, N$,  
$\gamma(t) = (\gamma_1(t), \gamma_2(t), \ldots, \gamma_N(t))^\top$ as the vector of Lagrange multiplier estimates,  
and $G(t)$ as the estimate of the Fisher information matrix.  

Let $\Gamma : \mathbb{R}^{d_1} \to C$ denote the projection operator that maps any point in $\mathbb{R}^{d_1}$ to its nearest point in a prescribed compact and convex set $C$.  
Note that for any $h \in C$, we have $\| h \| \le U_v$ for some constant $U_v > 0$.  
We also define $\hat{\Gamma} : \mathbb{R} \to [0, M]$ by  
\[
\hat{\Gamma}(y) = \max\big(0, \min(y, M)\big),
\]
for any $y \in \mathbb{R}$, where $M < \infty$ is a large positive constant.  
This projection ensures that the Lagrange multiplier estimates remain non-negative and bounded.

We initialize $G(0) = p I$, where $I$ is the $d \times d$ identity matrix and $p > 0$ is a constant.  
From the update rule, it follows that $G(n)$ for $n \ge 1$ remains positive definite and symmetric, since each update takes the form  
$(1 - a(n)) G(n-1) + a(n) \Psi_{s_n a_n} \Psi_{s_n a_n}^\top$.  
Consequently, $G(n)^{-1}$ is also positive definite and symmetric for all $n \ge 1$.  
Let $\lambda_i > 0$ denote the smallest eigenvalue of $G(i)^{-1}$, and define  
\[
\lambda_G = \min_i \lambda_i > 0.
\]

\begin{algorithm}[H]
\caption{The three time-scale  natural critic-actor  algorithm for constrained MDP}\label{algo}
\begin{algorithmic}[1]

\STATE \textbf{Input}   $v_{0}$, $\theta_{0}$, $L_{0}$, $U_k(0)$ for $1 \leq k \leq N$, $\gamma_k(0)$  for $1 \leq k \leq N$, $G(0)$, step-size $a(n)$ for actor , $b(n)$ for critic, $c(n)$ for Lagrange parameter and $d(n)$ for average cost estimate.
\STATE Draw $s_0$ from some initial distribution
\FOR{$n > 0 $ and $k = 1,2,\ldots,N$}
    \STATE Sample $a_n \sim \pi_{\theta_n}(\cdot |s_n)$, $s_{n+1}\sim p(s_n, \cdot, a_n)$ 
    \STATE Observe  the costs $q(n),h_1(n),h_2(n),.....,h_N(n)$
    \STATE  $L_{n+1} = L_n + d(n)(q(n) + \sum_{k=1}^{N}\gamma_k(n)(h_k(n)-\alpha_k) - L_n)$
    \STATE $\delta_{n} = q(n) + \sum_{k=1}^{N}\gamma_k(n)(h_k(n)-\alpha_k) - L_n + v_{n}^T(f_{s_{n+1}} - f_{s_n})$
    \STATE $v_{n+1} = \Gamma(v_{n} + b(n)\delta_{n}f_{s_n})$
    \STATE $\theta_{n+1} = \theta_{n} + a(n)\delta_{n}G(n)^{-1}\Psi_{s_{n}a_{n}}$
    \STATE $U_{k}(n+1) = U_{k}(n) + a(n)(h_{k}(n) - U_{k}(n))$
    \STATE $\gamma_{k}(n+1) = \hat{\Gamma}(\gamma_k(n) + c(n)(U_k(n) - \alpha_k))$
    \STATE $G(n+1) = (1-a(n))G(n) + a(n)\Psi_{s_{n}a_{n}}\Psi_{s_{n}a_{n}}^T$
\ENDFOR
\end{algorithmic}
\end{algorithm}

\section{Finite-Time Convergence Analysis}

In this section, we present the main theoretical results on the non-asymptotic convergence of the two algorithms, including their convergence rates and sample complexity bounds.  
For lack of space, the complete proofs can be found in the appendix.

\subsection{Assumptions and Basic Results}\label{assumandprop}

We study TD(0) with function approximation for the critic recursion, which estimates the state-value function. Let $v^{*}(\theta,\gamma)$ denote the convergence point of the critic under the behavior policy $\pi_{\theta}$, given actor and Lagrange parameters $\theta$ and $\gamma$. Define $\Ab$ and $\bbb$ as
\begin{align*}
    \Ab &:= \EE_{s_n,a_n,s_{n+1}} \big[ f_{s_{n}} \big( f_{s_{n+1}} - f_{s_{n}}\big)^{\top} \big], \\
    \bbb &:= \EE_{s_n,a_n,s_{n+1}} \big[(C(s_n,a_n,\gamma)- L(\theta,\gamma))f_{s_{n}} \big],
\end{align*}
where $s_n \sim \mu_{\theta}(\cdot), \; a_n \sim \pi_{\theta}(\cdot | s_n), \; s_{n+1}\sim p(s_n, \cdot, a_n)$, and
\begin{align*}
    C(s_n,a_n,\gamma) = d(s_n,a_n) + \sum_{k=1}^{N}\gamma_k\big(h_k(s_n,a_n) - \alpha_k\big)
\end{align*}
denotes the single-stage cost for the relaxed problem. Analogous to the unconstrained case (see \cite{Bhatnagar2012OnlineActorCritic}), it follows that
\begin{align*}
    \Ab v^{*}(\theta,\gamma) + \bbb = \mathbf{0}.
\end{align*}

\begin{assumption} \label{assum:bounded_feature_norm}
    Each state feature vector is bounded in norm by $1$, i.e., $\Vert f_{i}\Vert \le 1$.
\end{assumption}

The next assumption ensures the existence and uniqueness of $v^{*}(\theta,\gamma)$.

\begin{assumption} \label{assum:negative-definite}  
    The matrix $\Ab$ (as defined above) is negative definite, with its largest eigenvalue given by $- \lambda_e < 0$, for all $\theta$.  
    % \begin{align*}  
    %     \Ab \preceq - \lambda \Ib.  
    % \end{align*}  
\end{assumption}  

{\small 
\begin{table*}
  \centering
  \caption{Comparison of Constrained Natural Critic-Actor with different algorithms in terms of average reward $\pm$ standard error upon convergence.}
  \begin{tabular}{|p{2.6 cm}|p{1.5 cm}|p{1.5 cm}|p{2 cm}|p{2 cm}|p{2 cm}| p{2 cm}|}
    \hline
    \textbf{Environment} & \textbf{C-AC} & \textbf{C-NAC} & \textbf{C-CA} & \textbf{C-CA Modified} & \textbf{C-NCA} & \textbf{C-NCA Modified}\\
    \hline
    SafetyAntCircle1-v0 & $\textbf{0.0003} \pm \textbf{0.00037}$  & $-0.000024 \pm 0.0003$  & $-0.00016 \pm 0.00034$   & $0.000066 \pm 0.0001$  & $-0.000033 \pm 0.0001$ & $-0.0005 \pm 0.0002$ \\ \hline
    SafetyCarGoal1-v0 & $-0.00209 \pm 0.0006$  & $-0.0132 \pm 0.0018$ & $-0.0038 \pm 0.001$  & $-0.003 \pm 0.0009$  & $-0.009 \pm 0.0015$ & $\textbf{-0.0001} \pm \textbf{0.0004}$\\ \hline
   SafetyPointPush1-v0 & $ -0.0018 \pm 0.0004$  & $-0.0004 \pm 0.0003$ & $-0.001 \pm 0.0003$ & $-0.0006 \pm 0.0003$ & $-0.002 \pm  0.0005$ & $\textbf{-0.0003} \pm \textbf{0.0001}$\\
    \hline
  \end{tabular}
  \label{tab:experiment}
\end{table*}}
The approximation error introduced by the feature mapping depends on its complexity.  
We quantify the error resulting from linear function approximation as  
\begin{align*}  
    \epsilon_{\text{app}}(\theta,\gamma) :=   
    \sqrt{  
    \EE_{s \sim \mu_{\theta}} \Big( f_s^{\top} v^{*}(\theta,\gamma) - V^{\pi_{\theta},\gamma}(s) \Big)^2  
    }.  
\end{align*}  
\begin{assumption}\label{epsilon_bound}
    \begin{align*}
     \forall \theta ,\forall \gamma,  \mbox{ } \epsilon_{\text{app}}(\theta ,\gamma) \le \epsilon_{\text{app}},
\end{align*}
where  $\epsilon_{\text{app}}\geq0$ is some constant.
\end{assumption}
Assumption \ref{epsilon_bound} is useful in finding upper bounds of some of the error terms.
\begin{assumption}[Uniform ergodicity] \label{assum:ergodicity}
    For a given parameter $\theta$, let the policy $\pi_{\theta}(\cdot \mid s)$ and the transition probability measure $p(s,\cdot,a)$ induce the stationary distribution $\mu_{\theta}(\cdot)$.  
    The corresponding Markov chain, with $a_t \sim \pi_{\theta}(\cdot \mid s_t)$ and $s_{t+1} \sim p(s_t,\cdot,a_t)$, is uniformly ergodic.  
    Specifically, there exist constants $b > 0$ and $k \in (0,1)$ such that  
    \begin{align*}
        d_{TV}\big(p^\tau(x,y,\cdot), \mu_{\theta}(y)\big) \le b \, k^{\tau}, 
        \quad \forall \tau \ge 0, \ \forall x,y \in \cS.
    \end{align*}
\end{assumption}
Assumption~\ref{assum:ergodicity} is required to address the challenges arising from Markov sampling in TD learning. 
It has been employed in prior analyses of TD learning, for example in \cite{bhandari2018finitetimeanalysistemporal}. 
For a broader discussion on uniform ergodicity and related notions of ergodicity for Markov chains, see \citet{meyn2009markov}.
\begin{assumption} \label{assum:policy-lipschitz-bounded}
 There exist constants $L,B$, $M_{m}$ such that $\forall \theta_1,\theta_2 ,\theta \in \RR^d$, we have
\begin{enumerate}
\item[(a)] $\big\|\nabla \log \pi_{\theta}(a|i) \big\| \le B$, $\forall i,\forall a$,
\item[(b)] $\big\|\nabla \log \pi_{\theta_1}(a_2|i_2) - \nabla \log \pi_{\theta_2}(a_1|i_1) \big\| \le M_{m} \Vert\theta_1 - \theta_2\Vert$, $\forall i_1,\forall i_2,\forall a_1,\forall a_2$, 
\item[(c)] $\big|\pi_{\theta_1}(a|s) - \pi_{\theta_2}(a|s) \big| \le L \norm{\theta_1 - \theta_2}$, $\forall s \in S$.
\item[(d)] There exist scalars $\check{K}, \hat{K}>0$ such that for any $x\not=0$ and all $s_n,a_n$,
\[
\check{K}\| x\|^2 \leq x^T \Psi_{s_na_n}\Psi_{s_na_n}^T x \leq \hat{K}\|x\|^2.
\]
\end{enumerate}
\end{assumption}

Assumption \ref{assum:policy-lipschitz-bounded} ensures the smoothness of the parameterized policies and is satisfied by many common policy classes. This smoothness plays a key role in establishing upper bounds on certain error terms when proving the convergence of the actor and critic recursions.

\begin{assumption}\label{V_lipschitz_theta}
$\exists L_{v} > 0$ such that for any $s \in S$, and for any $\gamma \in \RR^N$,
    \begin{align*}
        \Vert V^{\theta_1,\gamma}(s) - V^{\theta_2,\gamma}(s) \Vert \leq L_{v}\Vert \theta_1 - \theta_2 \Vert , \forall \theta_1,\theta_2 \in \RR^{d}.
    \end{align*}
\end{assumption}

\begin{assumption}\label{V_lipschitz_gamma}
$\exists L_{w} > 0$ such that for any $s \in S$, for any $\theta \in \RR^{d}$, for all $\gamma(1),\gamma(2) \in R^N$ with $0 \leq \gamma_{i}(j) \leq M$, where $i \in \{1,2,...,N\}$, $j = 1,2$,
    \begin{align*}
        \Vert V^{\theta,\gamma(1)}(s) - V^{\theta,\gamma(2)}(s) \Vert \leq C\vert \gamma_m(1) - \gamma_m(2)\vert 
    \end{align*}
    where $\vert \gamma_m(1) - \gamma_m(2)\vert = \max\limits_{i=1,2,..,N}\vert \gamma_i(1) - \gamma_i(2)\vert$.
\end{assumption}
Assumptions \ref{V_lipschitz_theta} and \ref{V_lipschitz_gamma} are needed for deriving finite time bounds while proving convergence of the actor recursion.

\begin{figure*}
\vspace{.3in}
\centering\includegraphics[scale =0.22]{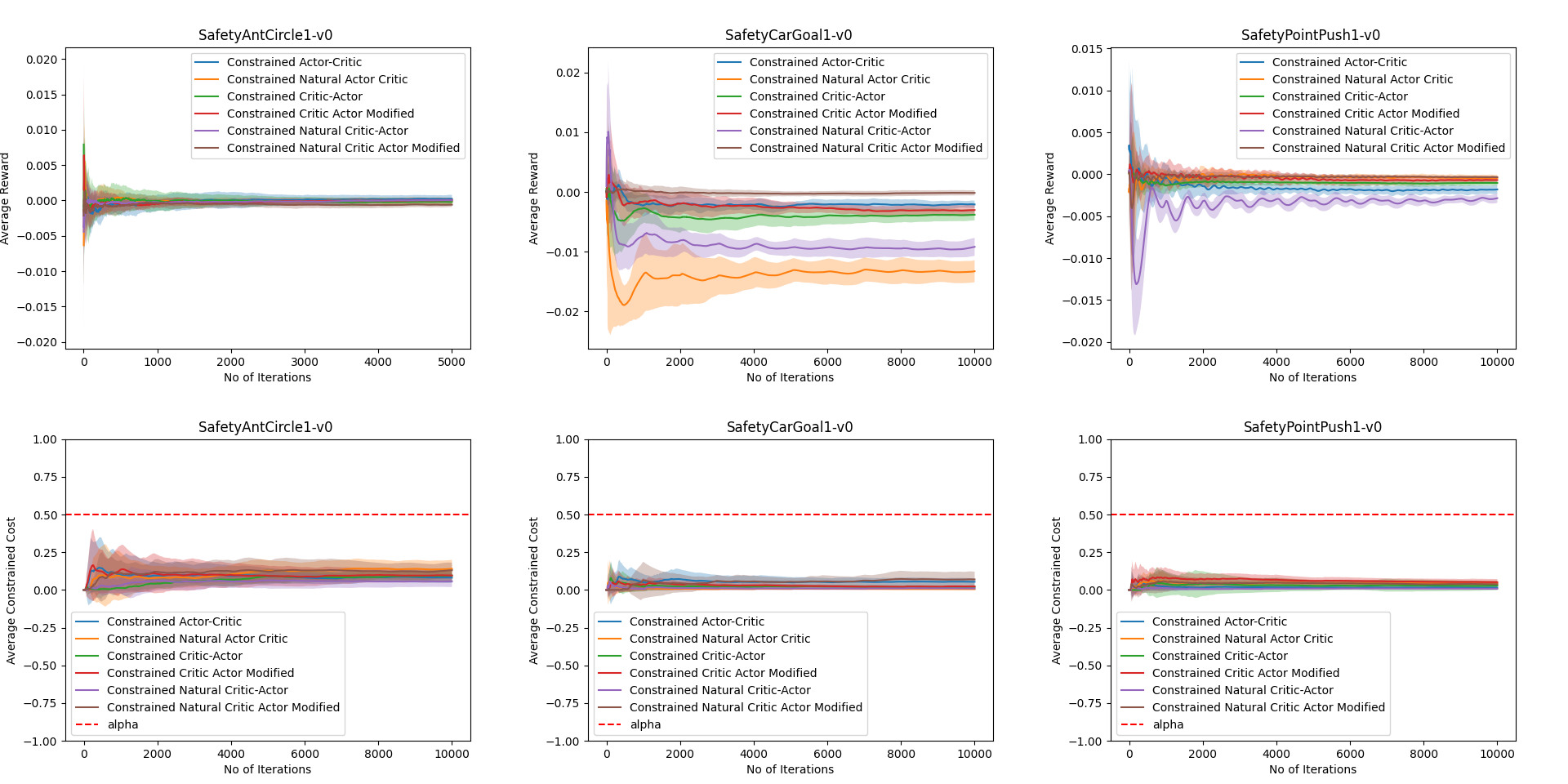}
\vspace{.3in}
\caption{Comparison of C-AC, C-NAC, C-CA, C-NCA, C-CA Modified and C-NCA Modified.}\label{fig:experiments}
\end{figure*}

\subsection{ Finite-Time Convergence Results}
We now establish non-asymptotic convergence guarantees for both the actor and critic recursions. We consider the following step-sizes: $a(t) = \frac{c_a }{(1+t)^\nu}, b(t) = \frac{c_b}{(1+t)^\sigma}, c(t) = \frac{c_c}{(1+t)^\beta}, d(t) = \frac{c_d}{(1+t)^\nu}$, $t\geq 0$, where $0 < \nu < \sigma <\beta \leq 1$ and $2\sigma - \nu < \beta$, $2\sigma < 3 \nu$. Also, we let ${\displaystyle \frac{c_a}{c_d} <  \frac{1}{2B\frac{U_G}{\lambda_G}(G + U_w) + U_{w}B}}$ where $G,U_w$ and $U_G$ are some positive constants as follows:
\begin{align*}
    &U_w :=  2B(U_{v} +  \bar{U}_{v}),\\
        &G := 2B(U_r + U_v),\\
        &\vert V^{\theta , \gamma}(s) \vert \leq \bar{U}_{v} ,\forall \theta \in \RR^{d},\forall s \in S, \forall \gamma \in \RR^{N},\\
        &\vert d(s,a) + \sum_{k=1}^{N}\gamma_k(t)(h_{k}(s,a)-\alpha_{k})\vert \leq U_r,\\ &\qquad \forall s \in S, a \in A, \gamma \in \RR^{N}.
\end{align*}

\begin{theorem}[Convergence of average cost estimate]\label{cost_convergence_1}

Under assumptions \ref{assum:bounded_feature_norm} , \ref{epsilon_bound}, \ref{assum:ergodicity} , \ref{assum:policy-lipschitz-bounded}, \ref{V_lipschitz_theta}, \ref{V_lipschitz_gamma}, the following holds:
\begin{align*}
  & \frac{1}{1+t-\tau_t} \sum\limits_{k=\tau_t}^{t} \mathbb{E}[y_k^2] = \mathcal{O}(\log^2 t \cdot t^{-\nu}) + \mathcal{O}(t^{ \nu - \beta})\\
   &\qquad+ \mathcal{O}\bigg(\frac{1}{1+t-\tau_t}\sum\limits_{k=\tau_t}^{t}\mathbb{E}\Vert M(\theta_k,v_k,\gamma(k))\Vert^2\bigg).
\end{align*}

where, $y_t = (L_t - L(\theta_t,\gamma(t)))$, $ M(\theta_t,v_t,\gamma(t))  = E_{s_t \sim \mu_{\theta_t},a_t \sim \pi_{\theta_t},s_{t+1} \sim p}[( r(s_t,a_t,\gamma(t))- L(\theta_t,\gamma(t)) + \phi(s_{t+1})^{\top} v_{t} - \phi(s_t)^{\top} v_{t})\nabla \log\pi_{\theta_t}(a_t|s_t)]$, and $r(s_t,a_t,\gamma(t)) = d(s_t,a_t) + \sum_{k=1}^{N}\gamma_k(t)(h_{k}(s_t,a_t)-\alpha_{k})$, respectively.
\end{theorem}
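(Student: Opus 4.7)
The plan is to derive a one-step Lyapunov-type inequality for $y_n^2$, sum it over the window $[\tau_t,t]$, and normalize by the window length. Starting from the update $L_{n+1} = L_n + d(n)(r(s_n,a_n,\gamma(n)) - L_n)$ and subtracting $L(\theta_{n+1},\gamma(n+1))$ from both sides yields the decomposition
\[
y_{n+1} = (1-d(n))\, y_n + d(n)\,\xi_n + \eta_n,
\]
where $\xi_n := r(s_n,a_n,\gamma(n)) - L(\theta_n,\gamma(n))$ is the Markov-noise term (zero-mean under the stationary distribution $\mu_{\theta_n}$) and $\eta_n := L(\theta_n,\gamma(n)) - L(\theta_{n+1},\gamma(n+1))$ is the drift of the moving target caused by the actor and Lagrange updates. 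Squaring, using $(1-d(n))^2 \le 1-d(n)$, and applying Young's inequality to $2(1-d(n))y_n\eta_n$ with a weight proportional to $d(n)$ (the standing ratio constraint on $c_a/c_d$ stated before the theorem is what keeps the absorbed coefficient strictly contractive) gives
\[
y_{n+1}^2 \le \bigl(1-\tfrac{d(n)}{2}\bigr) y_n^2 + 2(1-d(n)) d(n)\, y_n \xi_n + \tfrac{C_1 \eta_n^2}{d(n)} + C_2\, d(n)^2 \xi_n^2.
\]
Rearranging to isolate $y_n^2$, dividing by $d(n)$, summing $k=\tau_t,\ldots,t$, and treating $\sum_k (y_k^2-y_{k+1}^2)/d(k)$ by Abel summation (using that $y_k^2$ is uniformly bounded, since $L_n$ and $L(\theta,\gamma)$ both live in bounded sets, and that $1/d(\tau_t)=\cO((\log t)^\nu)$) yields the base inequality after dividing by $1+t-\tau_t$.

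The main technical obstacle will be controlling the Markov-noise term $\sum_k \mathbb{E}[y_k\xi_k]$, since $\xi_k$ has only approximately zero mean along the non-stationary trajectory. I would use the standard mixing-time chaining argument: pick $\tau_k = \Theta(\log t)$ so that $b\, k^{\tau_k} \le t^{-2}$ (possible by Assumption \ref{assum:ergodicity}), condition on $\mathcal{F}_{k-\tau_k}$, and split
\[
\mathbb{E}[y_k \xi_k] = \mathbb{E}[y_{k-\tau_k}\xi_k] + \mathbb{E}[(y_k - y_{k-\tau_k})\xi_k].
\]
Uniform ergodicity makes the first summand $\cO(t^{-2})$ after a Lipschitz replacement of $(\theta_k,\gamma(k))$ by $(\theta_{k-\tau_k},\gamma(k-\tau_k))$, with Assumptions \ref{assum:policy-lipschitz-bounded}, \ref{V_lipschitz_theta}, and \ref{V_lipschitz_gamma} propagating the smoothness. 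The second summand is bounded via the one-step recursion $|y_k - y_{k-\tau_k}| \le \sum_{j=k-\tau_k}^{k-1} (d(j)|\xi_j| + |\eta_j|) = \cO(\tau_k d(k))$, since $\xi_j$ and $\eta_j$ are uniformly bounded. The chaining errors accumulate to $\cO(\tau_k^2 d(k)) = \cO(\log^2 t \cdot t^{-\nu})$ after normalization, producing the first term of the bound.

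Finally, I would handle the drift $\eta_n$ using the Lipschitz bounds of $L$ in $\theta$ and $\gamma$ (Assumptions \ref{V_lipschitz_theta} and \ref{V_lipschitz_gamma}):
\[
|\eta_n| \le L_v\, \|\theta_{n+1}-\theta_n\| + C\, \|\gamma(n+1)-\gamma(n)\|.
\]
The Lagrange part is deterministically $\cO(c(n))$ because $|U_k(n)-\alpha_k|$ is uniformly bounded; after combining with the $1/d(n)$ scaling introduced by the Young decoupling and averaging over $k$, it contributes $\cO(c(n)/d(n)) = \cO(t^{\nu-\beta})$, producing the second term. For the actor part, $\theta_{n+1}-\theta_n = a(n)\,\delta_n G(n)^{-1}\Psi_{s_n a_n}$ with $\|G(n)^{-1}\| \le 1/\lambda_G$; taking conditional expectation under the stationary law $(s_n,a_n)\sim \mu_{\theta_n}\otimes\pi_{\theta_n}$ and using the compatible-feature identity $\Psi_{sa}=\nabla\log\pi_\theta(a|s)$, the conditional mean of $\delta_n\Psi_{s_n a_n}$ is precisely $M(\theta_n,v_n,\gamma(n))$, while the residual mean-field error at finite $n$ is again of mixing-time order and is absorbed into the first term. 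Averaging over $k=\tau_t,\ldots,t$ then yields the third contribution $\cO\bigl(\tfrac{1}{1+t-\tau_t}\sum_k \mathbb{E}\|M(\theta_k,v_k,\gamma(k))\|^2\bigr)$, and combining the three pieces completes the proof.
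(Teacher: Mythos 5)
Your overall architecture (a one-step recursion for $y_n^2$, Abel summation for the telescoping part, mixing-time chaining for the Markov noise, and separate treatment of the Lagrange and actor drift) matches the paper's, and your handling of the noise term, the Abel term, and the Lagrange drift corresponds to the paper's terms $I_1$, $I_2$, $I_5$ and the $\mathcal{O}(t^{1+\nu-\beta})$ piece of $I_3$. The fatal step is the Young decoupling of the cross term $2(1-d(n))y_n\eta_n$. After you isolate $y_n^2$ and divide by $d(n)$, your drift contribution becomes $C_1\eta_n^2/d(n)^2$. Since the actor and the average-cost tracker run on the \emph{same} timescale in this algorithm ($a(n)/d(n)\equiv c_a/c_d$ is a constant, not vanishing), the deterministic bound $|\eta_n|\le L_v\|\theta_{n+1}-\theta_n\|+C\|\gamma(n+1)-\gamma(n)\|=\mathcal{O}(a(n))+\mathcal{O}(c(n))$ gives $\eta_n^2/d(n)^2=\Theta(1)$, so after summing and normalizing this term is $\mathcal{O}(1)$ and the bound is vacuous. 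Your attempted rescue --- taking conditional expectations so that the mean of $\delta_n G(n)^{-1}\Psi_{s_na_n}$ appears --- cannot work inside a \emph{squared} quantity: $\mathbb{E}\Vert\delta_n G(n)^{-1}\Psi_{s_na_n}\Vert^2$ is a second moment, bounded below by the variance of the TD update, which is $\Theta(1)$ no matter how small $\Vert M(\theta_n,v_n,\gamma(n))\Vert$ is. The drift can be converted into $\Vert M\Vert^2$ only if it enters the estimate \emph{linearly}.

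That is exactly what the paper does: it keeps $\frac{1}{d(k)}\mathbb{E}[y_k(L(\theta_k,\gamma(k))-L(\theta_{k+1},\gamma(k+1)))]$ linear, expands the $\theta$-part by smoothness of $L$ with a case split on the sign of $y_k$ (so the $\frac{L_{J'}}{2}\Vert\theta_{k+1}-\theta_k\Vert^2$ correction enters with a controllable sign), substitutes the actor update, and obtains (i) a term $(G+U_w)\frac{c_a}{c_d}\bigl(\sum_k\mathbb{E}[y_k^2]\bigr)^{1/2}\bigl(\sum_k\mathbb{E}\Vert M(\theta_k,v_k,\gamma(k))\Vert^2\bigr)^{1/2}$ via Cauchy--Schwarz applied across the whole window, and (ii), because $\delta_k$ contains $L_k$ rather than $L(\theta_k,\gamma(k))$, a self-referential term $\frac{c_a}{c_d}U_wB\sum_k\mathbb{E}[y_k^2]$ proportional to the left-hand side (this is the paper's term $I_b$). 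The latter is absorbed into the left side only under the standing condition $1-\frac{c_a}{c_d}U_wB>0$, subsumed by the displayed constraint on $c_a/c_d$ preceding the theorem --- so that constraint is doing real work in the drift analysis, not merely ``keeping a Young coefficient contractive'' as you suggest. The squaring technique then turns the Cauchy--Schwarz product into the stated $\mathcal{O}\bigl(\frac{1}{1+t-\tau_t}\sum_k\mathbb{E}\Vert M(\theta_k,v_k,\gamma(k))\Vert^2\bigr)$. Your proposal contains neither the linear cross-term extraction (including the $y_k$-dependence hidden inside $\delta_k$) nor this absorption step, and as written it does not yield the theorem.
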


\begin{proof}
See the supplementary material for the proof.
\end{proof}
\begin{theorem}[Convergence of actor]\label{actor_convergence_1}
Under assumptions \ref{assum:bounded_feature_norm} , \ref{epsilon_bound}, \ref{assum:ergodicity} , \ref{assum:policy-lipschitz-bounded}, \ref{V_lipschitz_theta}, \ref{V_lipschitz_gamma},the following holds:
\begin{align*}
    &\frac{1}{1+t-\tau_t}\sum\limits_{k=\tau_t}^{t}E\Vert \bar{M}(\theta_k,v_k,\gamma(k)) \Vert^2\\
    &= \mathcal{O}(t^{\nu-1}) + \mathcal{O}(\log^2 t \cdot t^{-\nu}) +  \mathcal{O}(t^{\nu - \beta}).
\end{align*}
\end{theorem}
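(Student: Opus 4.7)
The plan is to apply a smoothness-based descent inequality to the Lagrangian $L(\theta,\gamma)$ viewed as a function of the actor parameter $\theta$, use the natural-gradient update to extract a negative multiple of $\|\bar M(\theta_t,v_t,\gamma(t))\|^2$, and then bound the residuals arising from Markovian sampling, the inexact critic $v_t$, the biased average-cost estimate $L_t$, the slow drift of $\gamma(t)$, and the Fisher preconditioner $G(t)^{-1}$.

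First, combining Assumption \ref{assum:policy-lipschitz-bounded}(a)-(b) with Assumptions \ref{V_lipschitz_theta}-\ref{V_lipschitz_gamma} yields Lipschitz smoothness of $L(\theta,\gamma)$ in $\theta$, uniform in $\gamma\in[0,M]^N$. Substituting the actor update $\theta_{t+1}-\theta_t=a(t)\delta_t G(t)^{-1}\Psi_{s_ta_t}$ into the descent inequality, and separately absorbing the change $L(\theta_{t+1},\gamma(t))\to L(\theta_{t+1},\gamma(t+1))$ via Assumption \ref{V_lipschitz_gamma}, produces an inequality of the form
\begin{align*}
L(\theta_{t+1},\gamma(t+1)) &\le L(\theta_t,\gamma(t)) + a(t)\langle\nabla_\theta L(\theta_t,\gamma(t)),\,\delta_t G(t)^{-1}\Psi_{s_ta_t}\rangle \\
&\quad + C_1\,a(t)^2\|\delta_t G(t)^{-1}\Psi_{s_ta_t}\|^2 + C_2\,c(t),
\end{align*}
for suitable constants $C_1,C_2>0$.

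The next step is to add and subtract $\bar M(\theta_t,v_t,\gamma(t))$ (and its stationary counterpart) inside the cross term and use the lower bound $G(t)^{-1}\succeq \lambda_G I$ from the setup to extract a favourable $-\lambda_G a(t)\|\bar M(\theta_t,v_t,\gamma(t))\|^2$ contribution. The remaining residuals split into five families that I would bound in turn: (i) Markovian sampling bias, neutralised by the standard shift by the mixing time $\tau_t=\mathcal{O}(\log t)$ using Assumption \ref{assum:ergodicity}; (ii) critic error $\|v_t-v^*(\theta_t,\gamma(t))\|$, handled via the finite-time critic bound; (iii) average-cost tracking error $|L_t-L(\theta_t,\gamma(t))|$, handled via Theorem \ref{cost_convergence_1}; (iv) preconditioner error $\|G(t)^{-1}-\bar G(\theta_t)^{-1}\|$, controlled using the exponential-average nature of the $G$-update combined with the step-size ratio condition on $c_a/c_d$; and (v) Lagrange drift $\|\gamma(t+1)-\gamma(t)\|=\mathcal{O}(c(t))$, already captured in the $C_2 c(t)$ term above. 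Summing this inequality from $k=\tau_t$ to $t$, dividing by $(1+t-\tau_t)$, and using boundedness of $L$ delivers the three claimed rates: the telescoping term contributes $\mathcal{O}(1/(a(t)(1+t-\tau_t)))=\mathcal{O}(t^{\nu-1})$; the squared-step and mixing-time residual contributes $\mathcal{O}(a(t)\log^2 t)=\mathcal{O}(\log^2 t\cdot t^{-\nu})$; and the Lagrange-drift term, divided by $a(t)$, contributes $\mathcal{O}(c(t)/a(t))=\mathcal{O}(t^{\nu-\beta})$. The step-size conditions $0<\nu<\sigma<\beta\le1$, $2\sigma-\nu<\beta$, and $2\sigma<3\nu$ ensure that the critic- and $G$-induced residuals do not exceed these three leading orders.

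The main obstacle I expect is the coupling between $\bar M$ on the left-hand side of the claim and the $\|M\|^2$ term on the right-hand side of Theorem \ref{cost_convergence_1}: substituting the cost-estimation bound back into the actor bound naively risks a circular dependency. The step-size ratio condition ${c_a/c_d}<1/(2B(U_G/\lambda_G)(G+U_w)+U_wB)$ stated in the setup is precisely what shrinks the $\|M\|^2$-coefficient enough to be absorbable into the left-hand side, and an analogous concentration argument for $G(t)^{-1}$ around its target is needed to certify that the preconditioner error stays strictly below the three stated leading orders. These two coupling effects are what distinguish the three-timescale constrained natural critic-actor analysis from the two-timescale unconstrained critic-actor analysis of \cite{Panda_Bhatnagar_2025}.
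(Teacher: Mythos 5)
Your overall skeleton matches the paper's proof of Theorem~\ref{actor_convergence_1}: a smoothness inequality for $L(\cdot,\gamma)$ (Lemma 1 of \cite{panda_and_bhatnagar}), extraction of $a(t)\lambda_G\Vert \bar{M}(\theta_t,v_t,\gamma(t))\Vert^2$ from $\langle \bar{M},G(t)^{-1}\bar{M}\rangle$, mixing-time shifts for the Markov-noise terms, and the coupling with Theorem~\ref{cost_convergence_1} resolved by the squaring technique and the ratio condition on $c_a/c_d$ --- that last coupling you diagnose exactly as the paper does. But your item (ii) is a genuine gap: you propose to control the critic residual $\Vert v_t - v^{*}(\theta_t,\gamma(t))\Vert$ ``via the finite-time critic bound'', i.e., Theorem~\ref{critic_convergence_1}. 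In this paper the dependency runs the other way: the critic bound is proved \emph{after} and \emph{from} the actor bound (its term $I_4$ contains $\sqrt{\sum_k E\Vert\bar{M}(\theta_k,v_k,\gamma(k))\Vert^2}$), so invoking it inside the actor proof is circular. The paper never needs critic accuracy in the actor analysis: the residual $\nabla_\theta L(\theta_t,\gamma(t)) - \bar{M}(\theta_t,v_t,\gamma(t)) = E_{\theta_t}[(V^{\theta_t,\gamma(t)}(s') - \phi(s')^{\top}v_t - V^{\theta_t,\gamma(t)}(s) + \phi(s)^{\top}v_t)\nabla\log\pi_{\theta_t}(a|s)]$ is only \emph{bounded} (by $U_w$), and its inner products with $M$ are rendered summable by an Abel/telescoping device --- the auxiliary sequence $Q_t = a(t)\langle(V^{\theta_t,\gamma(t)}(s_t)-\phi(s_t)^{\top}v_t)\nabla\log\pi_{\theta_t}(a_t|s_t),\, M(\theta_t,v_t,\gamma(t),G(t))\rangle$ folded into the telescoping term $I_1$, with the leftover drift terms $I_4$--$I_7$ of size $\mathcal{O}(\sum_k a(k))$, $\mathcal{O}(\sum_k b(k))$, etc., all $\mathcal{O}(t^{1-\nu})$. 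This works precisely because the theorem is a statement about $\bar{M}$, the gradient proxy built from the \emph{current} critic, not about $\nabla_\theta L$ itself; the value-approximation error need not shrink, it only must not accumulate. If you insist on your route you would have to pose the actor and critic recursions as a simultaneous system of inequalities and solve them jointly, which the paper's ordering deliberately avoids.

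A second, smaller misdiagnosis: no concentration of $G(t)^{-1}$ around the inverse Fisher matrix of $\pi_{\theta_t}$ is needed, and attempting one (a moving-target tracking argument at step-size $a(t)$) would only inject new error terms into the three leading orders. The paper uses only the deterministic spectral facts that follow from $G(0)=pI$, the convex-combination form of the $G$-update, and Assumption~\ref{assum:policy-lipschitz-bounded}(d): $G(t)^{-1}$ is symmetric positive definite with eigenvalues uniformly bounded between $\lambda_G$ and a constant $U_G$, which is all that the extraction step $\langle\bar{M},G(t)^{-1}\bar{M}\rangle \geq \lambda_G\Vert\bar{M}\Vert^2$ and the residual bounds require. (Your provenance for the $\mathcal{O}(t^{\nu-\beta})$ term as $\mathcal{O}(c(t)/a(t))$ is consistent with the paper, where it arrives instead as the $\mathcal{O}(t^{1+\nu-\beta})$ term imported from Theorem~\ref{cost_convergence_1}, divided by $1+t-\tau_t$.)
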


\begin{theorem}[Convergence of critic]\label{critic_convergence_1}
Under assumptions \ref{assum:bounded_feature_norm} , \ref{assum:negative-definite},\ref{epsilon_bound}, \ref{assum:ergodicity} , \ref{assum:policy-lipschitz-bounded}, \ref{V_lipschitz_theta}, \ref{V_lipschitz_gamma},the following holds:
\begin{align*}
    &\frac{1}{1+t-\tau_t}\sum\limits_{k=\tau_t}^{t}E\Vert v_k - v^{*}(\theta_k,\gamma(k)) \Vert^2\\
    &=  \mathcal{O}(\log^2 t \cdot t^{\sigma - 2\nu})  + \mathcal{O}(t^{2\sigma - \nu - 1}) + \mathcal{O}(\log^2 t \cdot t^{2\sigma - 3\nu})\\
    &+ \mathcal{O}(t^{2\sigma - \nu - \beta}).
\end{align*}
 \end{theorem}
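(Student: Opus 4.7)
The plan is to analyze the one-step evolution of the critic error $z_t := v_t - v^{*}(\theta_t, \gamma(t))$, unroll the resulting recursion, and then average it over the window $[\tau_t, t]$. I would begin with the decomposition $z_{t+1} = (v_{t+1} - v^{*}(\theta_t,\gamma(t))) + (v^{*}(\theta_t,\gamma(t)) - v^{*}(\theta_{t+1},\gamma(t+1)))$, use non-expansiveness of the projection $\Gamma$ (so that $v^{*}(\theta_t,\gamma(t)) \in C$ is preserved in distance), and apply Young's inequality to separate the TD-step contribution from the drift of the moving target. The drift term is controlled via Assumptions \ref{V_lipschitz_theta} and \ref{V_lipschitz_gamma}, which give $\|v^{*}(\theta_{t+1},\gamma(t+1)) - v^{*}(\theta_t,\gamma(t))\| \lesssim L_{v}\|\theta_{t+1}-\theta_t\| + C\|\gamma(t+1)-\gamma(t)\|_\infty$, and these are of order $a(t)$ and $c(t)$ respectively by the update rules of Algorithm \ref{algo} together with the lower bound $\lambda_G$ on the smallest eigenvalue of $G(n)^{-1}$ and boundedness of $\Psi_{sa}$ from Assumption \ref{assum:policy-lipschitz-bounded}.

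Next I would rewrite the TD step around the frozen target $v^{*}(\theta_t,\gamma(t))$ as contraction plus noise: $\EE_{\pi_{\theta_t}}[\delta_t f_{s_t}] = \Ab(\theta_t,\gamma(t)) z_t + \xi_t^{\text{app}} + \xi_t^{y}$, where $\xi_t^{\text{app}}$ is the linear-approximation bias controlled via Assumption \ref{epsilon_bound}, and $\xi_t^{y}$ arises from replacing $L_t$ by $L(\theta_t,\gamma(t))$, to be absorbed using Theorem \ref{cost_convergence_1}. Assumption \ref{assum:negative-definite} then supplies the key contraction $-2 b(t)\lambda_e \EE\|z_t\|^2$, while the Markovian gap $\delta_t f_{s_t} - \EE_{\pi_{\theta_t}}[\delta_t f_{s_t}]$ is handled by the usual $\tau_t$-step lookback: conditioning on $(s_{t-\tau_t},a_{t-\tau_t})$, invoking uniform ergodicity (Assumption \ref{assum:ergodicity}) with $\tau_t = \mathcal{O}(\log t)$, and bounding the three parameter drifts $\|v_t - v_{t-\tau_t}\|$, $\|\theta_t - \theta_{t-\tau_t}\|$, $\|\gamma(t)-\gamma(t-\tau_t)\|$ through telescopic step-size sums.

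Combining all pieces produces a one-step inequality of the form $\EE\|z_{t+1}\|^2 \le (1-2 b(t)\lambda_e)\EE\|z_t\|^2 + \mathcal{O}(b(t)^2 \tau_t^2) + \mathcal{O}(a(t)^2/b(t)) + \mathcal{O}(c(t)^2/b(t)) + \mathcal{O}(b(t)\EE[y_t^2])$. Telescoping, dividing by $1+t-\tau_t$, and substituting $b(t) \propto t^{-\sigma}$, $a(t) \propto t^{-\nu}$, $c(t) \propto t^{-\beta}$, and finally invoking Theorem \ref{cost_convergence_1} (whose bound contributes $t^{-\nu}\log^2 t$, $t^{\nu-\beta}$ and a term involving $\EE\|M(\theta_k,v_k,\gamma(k))\|^2$ from Theorem \ref{actor_convergence_1}), will yield the four stated rates: $\mathcal{O}(\log^2 t \cdot t^{\sigma-2\nu})$ from actor-induced target drift, $\mathcal{O}(t^{2\sigma-\nu-1})$ from the variance/martingale contribution, $\mathcal{O}(\log^2 t \cdot t^{2\sigma-3\nu})$ from coupling to the average-cost estimate, and $\mathcal{O}(t^{2\sigma-\nu-\beta})$ from Lagrange multiplier drift.

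The principal obstacle will be the reversed timescale ordering: because the actor runs \emph{faster} than the critic ($\nu < \sigma$), the target $v^{*}(\theta_t,\gamma(t))$ moves rapidly and the standard two-timescale AC critic analysis does not apply directly. One must exploit the step-size restrictions $\nu < \sigma < \beta$, $2\sigma - \nu < \beta$, and $2\sigma < 3\nu$ precisely so that the drift term $a(t)^2/b(t)$, the Lagrange term $c(t)^2/b(t)$, and the average-cost coupling $b(t)\EE[y_t^2]$ all remain subdominant to the contraction $b(t)\lambda_e\EE\|z_t\|^2$, producing a net error that vanishes. Equally delicate is the self-consistent use of Theorem \ref{cost_convergence_1}, whose bound itself involves $\EE\|M(\theta_k,v_k,\gamma(k))\|^2$ from Theorem \ref{actor_convergence_1}; this requires feeding the latter theorem's rates into the critic bound rather than attempting a purely decoupled argument.
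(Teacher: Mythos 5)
There is a genuine gap, and it sits exactly at the point you yourself identify as the principal obstacle. Your one-step inequality
\begin{align*}
\EE\|z_{t+1}\|^2 \le (1-2b(t)\lambda_e)\EE\|z_t\|^2 + \mathcal{O}(b(t)^2\tau_t^2) + \mathcal{O}\!\left(\tfrac{a(t)^2}{b(t)}\right) + \mathcal{O}\!\left(\tfrac{c(t)^2}{b(t)}\right) + \mathcal{O}(b(t)\EE[y_t^2])
\end{align*}
handles the target drift by Young's inequality with the worst-case magnitude $\|v^{*}(\theta_{t+1},\gamma(t+1))-v^{*}(\theta_t,\gamma(t))\| = \mathcal{O}(a(t))$. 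After rearranging against the contraction $2\lambda_e b(k)\EE\|z_k\|^2$ and averaging, that drift term contributes $\frac{1}{t}\sum_k a(k)^2/b(k)^2 = \mathcal{O}(t^{2(\sigma-\nu)})$, which strictly dominates all four rates in the theorem (compare $2\sigma-2\nu$ with $\sigma-2\nu$, $2\sigma-3\nu$, $2\sigma-\nu-1$, $2\sigma-\nu-\beta$, using $0<\nu<\sigma<\beta\le 1$), and at the optimized choice $\nu=0.5$, $\sigma=0.5+\delta$ it equals $t^{2\delta}$, which does not even vanish. This is the quantitative signature of the reversed timescales: the target moves faster than the tracking iterate, so the standard slow-target recursion you wrote down cannot close, and no choice of Young parameter repairs it. Your claim that invoking Theorems \ref{cost_convergence_1} and \ref{actor_convergence_1} at the end ``will yield the four stated rates'' is therefore not backed by the recursion you set up — note also that none of your error terms carries the $\frac{a(k)^2}{b(k)^2}$-weighting needed to produce the exponents $2\sigma-\nu-1$, $2\sigma-3\nu$ and $2\sigma-\nu-\beta$.

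The paper's proof avoids this by never bounding the first-order drift in norm. It decomposes the drift as a second-order Taylor remainder $v^{*}(\theta_k,\gamma(k))-v^{*}(\theta_{k+1},\gamma(k))+(\nabla v_k^{*})^{T}(\theta_{k+1}-\theta_k)$ (giving $\mathcal{O}(\sum a(k)^2/b(k)) = \mathcal{O}(t^{\sigma-2\nu+1})$ plus a $\gamma$-drift term $\mathcal{O}(\sum c(k)/b(k))$) and keeps the first-order piece $\sum_k \frac{1}{b(k)}\EE\langle z_k,(\nabla v_k^{*})^{T}(\theta_k-\theta_{k+1})\rangle$ intact, substituting the actual actor update $\theta_{k+1}-\theta_k = a(k)\delta_k G(k)^{-1}\nabla\log\pi_{\theta_k}(a_k|s_k)$. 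Centering at the stationary expectation yields a Markovian-gap sum (handled by Assumption \ref{assum:ergodicity}, contributing $\mathcal{O}(\log^2 t\cdot t^{\sigma-2\nu+1})$) plus mean terms bounded by Cauchy--Schwarz as $\sqrt{\sum_k \EE\|z_k\|^2}\cdot\sqrt{\sum_k \frac{a(k)^2}{b(k)^2}\EE\|\bar{M}(\theta_k,v_k,\gamma(k))\|^2}$ and an analogous cross term with $y_k$; the squaring technique then absorbs $\sqrt{\sum\EE\|z_k\|^2}$ into the left side. The decisive point is that the conditional mean of the actor increment is proportional to $\bar{M}(\theta_k,v_k,\gamma(k))$, whose mean square decays by Theorem \ref{actor_convergence_1}, so the dangerous factor $t^{2\sigma-2\nu}$ gets multiplied by the actor rates $t^{\nu-1}+\log^2 t\cdot t^{-\nu}+t^{\nu-\beta}$, producing precisely $t^{2\sigma-\nu-1}$, $\log^2 t\cdot t^{2\sigma-3\nu}$ and $t^{2\sigma-\nu-\beta}$. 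You correctly anticipated that Theorems \ref{cost_convergence_1} and \ref{actor_convergence_1} must be fed in self-consistently, but that coupling has to happen \emph{inside} the drift term as an inner product against the actor's update direction; once the drift is collapsed to $\mathcal{O}(a(t))$ in norm, the information needed to recover the stated rates is already lost.
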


By optimizing over the parameters $\nu$, $\sigma$ and $\beta$ we obtain, 
$\nu = 0.5$ , $\sigma = 0.5 + \delta$ and $\beta = 1$, where $\delta > 0$ can be chosen 
arbitrarily small. Consequently, we arrive at
\[
\frac{1}{1+t-\tau_t}\sum_{k=\tau_t}^{t}\mathbb{E}\,\|z_k\|^2 
= \mathcal{O}\!\left(\log^2 t \cdot t^{\,2\delta- 0.5}\right).
\]

where $z_k =  v_k - v^{*}(\theta_k,\gamma(k)) .$
Thus, in order for the mean squared error of the critic to be upper bounded by $\epsilon$, namely,
\[
\frac{1}{1+t-\tau_t}\sum_{k=\tau_t}^{t}\mathbb{E}\,\|z_k\|^2 
= \mathcal{O}\!\left(\log^2 T \cdot T^{\,2\delta - 0.5}\right) \;\leq\; \epsilon,
\]
it suffices to taken ${\displaystyle 
T \;=\; \tilde{\mathcal{O}}\!\left(\epsilon^{-(2+\bar{\delta})}\right)}$,
with $\bar{\delta} > 0$ arbitrarily small.

This sample complexity matches that of the two-timescale critic–actor algorithm (see \cite{Panda_Bhatnagar_2025}).
The sample complexity obtained above can be further improved in the case 
$\bar{\delta} = 0$, which corresponds to choosing $\sigma = \nu$. Now, if $\nu = \sigma$, then the actor and critic evolve on the same timescale.  However, our setting involves a two-timescale critic–actor algorithm, with the actor operating on the faster timescale. As noted in \cite{Panda_Bhatnagar_2025}, a difference in timescales of the actor and the critic helps in showing the asymptotic stability of the stochastic iterates that is not possible to show in the case of single-timescale actor-critic algorithms.
Accordingly, we may choose the learning rates as : $a(t) = \frac{c_a (\ln (t + 1)^{1/2}}{(1+t)^\nu}$, $b(t) = \frac{c_b}{(1+t)^\nu}$, $c(t) = \frac{c_c}{(1+t)^\beta}$, $d(t) = \frac{c_d (\ln (t+1))^{1/2}}{(1+t)^\nu}$, $t\geq 0$, where $0.5 \leq \nu < \beta \leq 1$. Effectively, $a(t)$ and $d(t)$ differ only in a constant term and constitute the same timescale. Recall that the average reward recursion $L_t,t\geq 0$ incorporates the step-size parameter $d(t), t\geq 0$ while the policy parameter $\theta_t$ (that is updated here on the faster timescale)  incorporates $a(t), t\geq 0$ as the step-size parameter.
Moreover, the value function parameter $v_t$ updates involve the step-size $b(t)$ and the Lagrange parameter updates $\gamma_k(t)$ involve the step-size $c(t)$. 

For $\nu>0.5$, one can see that all these (modified) step-sizes satisfy the Robbins-Monro conditions for asymptotic convergence of stochastic approximation
% , namely
% \[
% \sum_t \alpha(t)=\infty; \sum_t (\alpha(t))^2 <\infty,
% \]
% where $\alpha(t)$ refers to one of $a(t),b(t),c(t)$ or $d(t)$, respectively. 
Moreover, it is easy to see that
${\displaystyle
\lim_{t\rightarrow\infty}\frac{b(t)}{a(t)} =
\lim_{t\rightarrow\infty} \frac{c(t)}{b(t)} = 0.}$ 
This indicates in effect that the average reward and actor updates together proceed on the faster timescale, the critic update proceeds on a slower timescale, while the Lagrange parameter update proceeds on the slowest timescale. Such a structure of a constrained critic-actor algorithm had previously not been explored in the literature. 
%The reason for incorporating the aforementioned modified step-size schedules in the algorithm updates will become evident below as it will be seen that despite having the inherent timescale difference, one obtains $T \;=\; \tilde{\mathcal{O}}\!\left(\epsilon^{-2}\right)$ as the sample complexity that had so far only been obtained in the case of single-timescale actor-critic algorithms.
We provide below the results of the finite-time analysis after incorporating the modified learning rates.

\subsection{ Finite-Time Convergence Results with Modified Learning Rates}
We now establish non-asymptotic convergence guarantees for both the actor and critic recursions with modified learning rates. 

\begin{theorem}[Convergence of average cost estimate]\label{cost_convergence_2}
 Under assumptions \ref{assum:bounded_feature_norm}, \ref{epsilon_bound}, \ref{assum:ergodicity}, \ref{assum:policy-lipschitz-bounded}, \ref{V_lipschitz_theta}, \ref{V_lipschitz_gamma}, the following holds:
\begin{align*}
    &\frac{1}{(1 + t - \tau_t)}\sum\limits_{k=\tau_t}^{t} \mathbb{E}[y_k^2] \\
    &\leq \mathcal{O}(\log^{-0.5} t \cdot  t^{\nu -1}) + \mathcal{O}(\log^{2.5} t \cdot t^{-\nu})  + \mathcal{O}(t^{ \nu - \beta}) \\
    &\qquad + \mathcal{O}\bigg(\frac{1}{(1 + t - \tau_t)}\sum\limits_{k=\tau_t}^{t}\mathbb{E}\Vert \bar{M}(\theta_k,v_k,\gamma(k))\Vert^2\bigg).
\end{align*}
\end{theorem}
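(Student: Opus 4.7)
The plan is to mimic the Lyapunov-style drift analysis used for Theorem \ref{cost_convergence_1}, but to track carefully how the extra $(\ln(t+1))^{1/2}$ factor inside $d(t)$ and $a(t)$ propagates through every estimate. Starting from the update $L_{n+1} = L_n + d(n)(r(n) - L_n)$ with $r(n) = q(n) + \sum_{k=1}^{N}\gamma_k(n)(h_k(n)-\alpha_k)$, I would decompose $y_{n+1} = L_{n+1} - L(\theta_{n+1},\gamma(n+1))$ as $(1-d(n))y_n + d(n)(r(n) - L(\theta_n,\gamma(n))) + [L(\theta_n,\gamma(n)) - L(\theta_{n+1},\gamma(n+1))]$. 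Squaring and applying Young's inequality yields a drift recursion of the form $\mathbb{E}[y_{n+1}^2] \leq (1 - d(n))\mathbb{E}[y_n^2] + 2d(n)\,\mathbb{E}[y_n(r(n) - L(\theta_n,\gamma(n)))] + \mathcal{O}(d(n)^2) + \mathcal{O}(a(n)^2 + c(n)^2) + (\text{cross drift terms})$.

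The first main step is to control the Markovian cross term $\mathbb{E}[y_n(r(n) - L(\theta_n,\gamma(n)))]$. Following the standard shift-by-$\tau_n$ coupling argument and using Assumption \ref{assum:ergodicity} for uniform ergodicity, I would replace $\theta_n$ and $\gamma(n)$ by their values $\tau_n$ steps earlier, pay a geometric mixing error of order $b k^{\tau_n}$, and then use the Lipschitz continuity of $L(\theta,\gamma)$ (which follows from Assumptions \ref{V_lipschitz_theta} and \ref{V_lipschitz_gamma} together with Assumption \ref{assum:policy-lipschitz-bounded}) together with $\|\theta_{n+1} - \theta_n\| = \mathcal{O}(a(n))$ and $|\gamma_k(n+1) - \gamma_k(n)| = \mathcal{O}(c(n))$ to estimate the cumulative drift across the $\tau_n$-window as $\mathcal{O}(\tau_n(a(n) + c(n)))$. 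This bounds the cross term by $\mathcal{O}(\tau_n(a(n) + c(n)))$ plus a negligible $\mathcal{O}(k^{\tau_n})$ contribution.

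Next, I would rearrange the drift inequality as $d(n)\mathbb{E}[y_n^2] \leq \mathbb{E}[y_n^2] - \mathbb{E}[y_{n+1}^2] + \mathcal{O}(d(n)^2) + d(n)\cdot(\text{noise and drift})$ and divide by $d(n)$ where appropriate. Summing from $\tau_t$ to $t$ and applying Abel summation to the telescoping piece $\sum(\mathbb{E}[y_n^2] - \mathbb{E}[y_{n+1}^2])/d(n)$ produces a boundary term of order $\mathbb{E}[y_{\tau_t}^2]/d(t)$; since $1/d(t) = \mathcal{O}((1+t)^\nu\,\log^{-1/2}(1+t))$ and we normalize by $(1+t-\tau_t) = \Theta(t)$, this is exactly where the first term $\mathcal{O}(\log^{-0.5} t \cdot t^{\nu - 1})$ arises. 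The contribution $\sum d(n)^2 = \mathcal{O}(\log t \cdot t^{1-2\nu})$ combined with the mixing-time factor $\tau_t = \mathcal{O}(\log t)$ and the extra $\log^{1/2}$ in $d(n)$ and $a(n)$ contribute the $\mathcal{O}(\log^{2.5} t \cdot t^{-\nu})$ piece, while $c(n)/d(n) = \mathcal{O}(\log^{-1/2}(n+1)(1+n)^{\nu-\beta})$ yields $\mathcal{O}(t^{\nu-\beta})$. Finally, the cumulative drift of $\theta$ across the $\tau_n$-window, when expressed via the actor update $\theta_{n+1} - \theta_n = a(n) G(n)^{-1}\delta_n \Psi_{s_n a_n}$ and combined with Assumption \ref{assum:policy-lipschitz-bounded}(d) and $\|G(n)^{-1}\| \leq 1/\lambda_G$, contributes the $\mathcal{O}\!\big(\tfrac{1}{1+t-\tau_t}\sum \mathbb{E}\|\bar{M}(\theta_k,v_k,\gamma(k))\|^2\big)$ term.

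The main obstacle will be the logarithmic bookkeeping inside the Markovian-noise step: the shift-by-$\tau_n$ coupling generates several cross terms in which the $\log^{1/2}$ factor from $d(n)$ multiplies the $\log^{1/2}$ factor from $a(n)$, together with the mixing-time bound $\tau_n = \mathcal{O}(\log n)$ itself. Verifying carefully that these factors combine to at most $\log^{2.5} t$ (rather than a higher power) in the second term, and that the contraction coefficient $d(n)$ still dominates each cumulative error so that dividing by $d(n)$ produces exactly the claimed exponents, is the delicate accounting part of the argument. Once that is done, averaging over $n = \tau_t,\dots,t$ and dividing by $(1+t-\tau_t)$ produces the four terms in the theorem statement.
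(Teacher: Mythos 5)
Your overall architecture (drift recursion for $y_t$, division by $d(n)$, Abel summation for the telescoping boundary term, shift-by-$\tau_t$ coupling under uniform ergodicity) matches the paper's, and you correctly trace the origins of the first three terms: the boundary term gives $\mathcal{O}(\log^{-0.5}t\cdot t^{\nu-1})$, the mixing argument with $\tau_t=\mathcal{O}(\log t)$ and the extra $\log^{1/2}$ factors gives $\mathcal{O}(\log^{2.5}t\cdot t^{-\nu})$, and the ratio $c(n)/d(n)$ gives $\mathcal{O}(t^{\nu-\beta})$. However, there is a genuine gap in how you handle the decisive fourth term. You attribute the $\mathcal{O}\big(\frac{1}{1+t-\tau_t}\sum_k\mathbb{E}\Vert\bar{M}(\theta_k,v_k,\gamma(k))\Vert^2\big)$ contribution to the cumulative drift of $\theta$ across the $\tau_n$-mixing-window, but in the paper it arises from the per-step drift of the moving target, i.e.\ the term $I_3=\sum_k\frac{1}{d(k)}\mathbb{E}[y_k(L(\theta_k,\gamma(k))-L(\theta_{k+1},\gamma(k+1)))]$ in (\ref{avg_cost}). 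This term is \emph{not} lower order here: because this is a critic-actor scheme, the actor and the average-cost estimate share the fastest timescale, so $a(n)/d(n)\equiv c_a/c_d$ is a \emph{constant} (the $(\ln(t+1))^{1/2}$ factors cancel — this is precisely why both modified rates carry the same logarithmic factor). Consequently your proposed treatment — Young's inequality with parameter $d(n)$ on the cross drift, treating it as "$\mathcal{O}(a(n)^2+c(n)^2)$ noise" — fails: it produces a term of size $\frac{a(n)^2}{d(n)}=\frac{c_a^2}{c_d^2}d(n)$, and solving the resulting recursion yields only a non-decaying $\mathcal{O}(1)$ floor for $\mathbb{E}[y_t^2]$, not the claimed bound.

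What the paper does instead, and what your proposal omits, is to expand the drift via smoothness of $L$ and decompose the expected update direction into $\bar{M}(\theta_k,v_k,\gamma(k))$ plus the value-approximation-error vector $W(v_k,\theta_k,\gamma(k))$ (terms $I_a$, $I_b$ inside $I_3$). This produces two order-one pieces: a coupled Cauchy--Schwarz term $(G+U_w)\frac{c_a}{c_d}\big(\sum_k\mathbb{E}[y_k^2]\big)^{1/2}\big(\sum_k\mathbb{E}\Vert\bar{M}\Vert^2\big)^{1/2}$, handled by the squaring technique, and — critically — a \emph{self-referential} term $\frac{c_a}{c_d}U_wB\sum_k\mathbb{E}[y_k^2]$ on the right-hand side, which must be absorbed into the left-hand side. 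The absorption is only possible under the step-size ratio condition $\frac{c_a}{c_d}<\frac{1}{2B\frac{U_G}{\lambda_G}(G+U_w)+U_wB}$ stated before Theorem \ref{cost_convergence_1}, and it is only available because the modified $a(t)$ and $d(t)$ were deliberately given identical $(\ln(t+1))^{1/2}$ factors so that the ratio stays constant. Your plan never mentions the $\bar{M}$/$W$ decomposition, the self-referential term, or the ratio condition, so as written the argument does not close; fixing it requires importing exactly this absorption step rather than refining the logarithmic bookkeeping, which you otherwise handle correctly.
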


\begin{theorem}[Convergence of actor]\label{actor_convergence_2}
Under assumptions \ref{assum:bounded_feature_norm}, \ref{epsilon_bound}, \ref{assum:ergodicity}, \ref{assum:policy-lipschitz-bounded}, \ref{V_lipschitz_theta}, \ref{V_lipschitz_gamma}, the following holds:
\begin{align*}
     &\frac{1}{(1 + t - \tau_t)}\sum\limits_{k=\tau_t}^{t}E\Vert \bar{M}(\theta_k,v_k,\gamma(k))\Vert^2\\
     &= \mathcal{O}((\log t)^{-0.5} \cdot t^{\nu - 1}) + \mathcal{O}(\log^{2.5} t \cdot t^{-\nu}) + \mathcal{O}(t^{ \nu - \beta}).
 \end{align*}
\end{theorem}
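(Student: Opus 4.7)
The plan is to adapt the proof of Theorem~\ref{actor_convergence_1} to the modified step-sizes, paying careful attention to how the extra $(\ln(t+1))^{1/2}$ factor inside $a(t)$ and $d(t)$ propagates through each error bound. The natural starting point is a smoothness-style descent inequality for the Lagrangian along the actor recursion. Using Assumptions~\ref{assum:policy-lipschitz-bounded}, \ref{V_lipschitz_theta} and~\ref{V_lipschitz_gamma}, together with boundedness of $G(n)^{-1}$ via $\lambda_G$, one obtains Lipschitz smoothness in $\theta$ and Lipschitz continuity in $\gamma$, leading to
\[
L(\theta_{n+1},\gamma(n+1)) \leq L(\theta_n,\gamma(n)) + \langle \nabla_\theta L(\theta_n,\gamma(n)),\, \theta_{n+1}-\theta_n\rangle + \tfrac{L_s}{2}\|\theta_{n+1}-\theta_n\|^2 + C_\gamma\,\|\gamma(n+1)-\gamma(n)\|.
\]

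Next I would substitute the natural actor update $\theta_{n+1}-\theta_n = a(n)\delta_n G(n)^{-1}\Psi_{s_n a_n}$, take conditional expectations, and decompose the inner product into four pieces: (i) the target natural-gradient direction $\bar{M}(\theta_n,v_n,\gamma(n))$ (giving the desired $-a(n)\|\bar{M}\|^2$ descent contribution), (ii) a critic-error term controlled by $\|v_n-v^*(\theta_n,\gamma(n))\|$, (iii) an average-cost error term controlled by $y_n=L_n-L(\theta_n,\gamma(n))$, and (iv) a Markovian-bias term handled by a $\tau_t$-step shift argument using Assumption~\ref{assum:ergodicity}. The quadratic term is bounded by $a(n)^2$ times a deterministic constant from the boundedness of $\delta_n$, $\|\Psi_{s_n a_n}\|$, and $\|G(n)^{-1}\|$. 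Telescoping from $k=\tau_t$ to $k=t$ and using that $L$ is bounded yields
\[
\sum_{k=\tau_t}^{t} a(k)\,\mathbb{E}\|\bar{M}(\theta_k,v_k,\gamma(k))\|^2 \;\lesssim\; \mathcal{O}(1) + \sum_{k=\tau_t}^{t}\bigl[a(k)^2 \tau_t + a(k)c(k) + a(k)\cdot\text{error}_k\bigr],
\]
after which Theorem~\ref{critic_convergence_1} and Theorem~\ref{cost_convergence_2} are invoked to absorb the critic and average-cost contributions.

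To extract the rate, I would divide by $\sum_{k=\tau_t}^{t} a(k) \asymp (\ln t)^{1/2}\, t^{1-\nu}$ and compute each ratio under the modified rates. The initial-condition/telescoping piece gives $\mathcal{O}((\log t)^{-1/2}\, t^{\nu-1})$; the noise piece with $\sum a(k)^2 \asymp (\log t)\, t^{1-2\nu}$ inflated by $\tau_t^2 \lesssim \log^2 t$ from the mixing-time shift produces $\mathcal{O}(\log^{2.5} t \cdot t^{-\nu})$; the Lagrangian drift piece with $\sum c(k) \asymp t^{1-\beta}$ gives $\mathcal{O}(t^{\nu-\beta})$. These are exactly the three terms in the statement.

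The main obstacle is the $\tau_t$-step handling of Markovian bias under the new step-sizes: the standard argument bounds $\|\theta_{n}-\theta_{n-\tau_t}\|$ by $\sum_{i=n-\tau_t}^{n-1} a(i)$, and with $a(i) \propto (\ln i)^{1/2}/i^\nu$ one must verify that this sum, and the resulting cross-products with $a(n)$, still produce the claimed $\log^{2.5} t$ factor rather than something worse. A subtler issue is that the critic error bound fed in from Theorem~\ref{critic_convergence_1} must itself be re-derived under the modified rates (with $\sigma=\nu$), since the original proof relied on $\sigma > \nu$ for timescale separation; here separation is preserved only up to the $(\ln t)^{1/2}$ factor in $a(t)/b(t)$, and this ``logarithmic timescale separation'' must be shown to suffice for all the contraction-type arguments used in the critic analysis to go through.
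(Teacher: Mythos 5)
Your overall skeleton (smoothness inequality along the actor update, a four-way decomposition, a $\tau_t$-shift for the Markovian bias, then step-size arithmetic) matches the paper's strategy, and your rate arithmetic for the three displayed terms is essentially right. However, there are three genuine gaps in the route you propose. First, feeding the critic error $\Vert v_n - v^{*}(\theta_n,\gamma(n))\Vert$ into the actor bound is circular in this paper's architecture: the critic theorem under the modified rates (Theorem~\ref{critic_convergence_2}) is itself proved by \emph{invoking} the actor bound and the average-cost bound, so the actor proof must be closed without any critic-error input --- and indeed the paper's proof never uses $\Vert v_k - v^{*}\Vert$. Worse, the value-function mismatch terms $\big(V^{\theta,\gamma}(s) - \phi(s)^{\top}v\big)$ are not controlled by critic error even in principle: at $v = v^{*}$ they still carry the approximation error $\epsilon_{\text{app}}$, so your decomposition (ii) would leave an $\mathcal{O}(\epsilon_{\text{app}}^2)$ floor in the bound that the theorem statement does not have. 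The paper eliminates these terms exactly, via the auxiliary sequence $Q_t = a(t)\langle (V^{\theta_t,\gamma(t)}(s_t)-\phi(s_t)^{\top}v_t)\nabla\log\pi_{\theta_t}(a_t|s_t), M(\theta_t,v_t,\gamma(t),G(t))\rangle$ and a re-indexing across consecutive iterates (terms $I_4$--$I_7$ of inequality (\ref{actor_convergence_ineq})), which only requires increment bounds $\Vert\theta_{k+1}-\theta_k\Vert$, $\Vert v_{k+1}-v_k\Vert$, $\vert\gamma(k+1)-\gamma(k)\vert$ and the step-ratio drift $\sum_k \frac{a(k)-a(k+1)}{a(k)}$; this telescoping device is the missing idea in your sketch.

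Second, "invoking Theorem~\ref{cost_convergence_2} to absorb the average-cost contributions" does not close the argument, because that theorem is not an absolute bound: its right-hand side contains $\frac{1}{1+t-\tau_t}\sum_k \mathbb{E}\Vert \bar{M}(\theta_k,v_k,\gamma(k))\Vert^2$ itself. Substituting it into the cross term $BU_G\sqrt{\sum_k \mathbb{E}\Vert\bar M\Vert^2}\sqrt{\sum_k \mathbb{E}y_k^2}$ produces a self-referential inequality in $\sum_k \mathbb{E}\Vert\bar M\Vert^2$, which the paper resolves by the squaring technique together with the explicit small-ratio condition $4\frac{B^2U_G^2}{\lambda_G^2}\frac{(G+U_w)^2}{(1-\frac{c_a}{c_d}U_wB)^2}\frac{c_a^2}{c_d^2} < 1$, i.e.\ the constraint on $c_a/c_d$ stated in Section 4.2; your proposal omits this absorption step and the resulting condition entirely, and without it the bound cannot be closed. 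Third, a smaller normalization issue: dividing $\sum_k a(k)\,\mathbb{E}\Vert\bar M_k\Vert^2$ by $\sum_k a(k)$ yields a \emph{weighted} average, not the unweighted Ces\`aro average in the statement; you would need the extra monotonicity step $a(k)\ge a(t)$ (valid for $t$ large once $(\ln(k+1))^{1/2}(1+k)^{-\nu}$ is decreasing, which is why the paper assumes $\tau_t\ge 4$), whereas the paper sidesteps this by dividing by $a(k)$ per term before summing and handling the telescoped piece $\sum_k (A_{k+1}-A_k)/a(k) = \mathcal{O}(1/a(t)) = \mathcal{O}((\log t)^{-0.5}t^{\nu})$ by Abel summation. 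Your closing observation about logarithmic timescale separation is apt, but in the paper it is resolved in the critic proof (via $a(k)/b(k) \asymp \log^{0.5}t$ factors there), not re-derived inside the actor proof.
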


\begin{theorem}[Convergence of critic]\label{critic_convergence_2}
Under assumptions \ref{assum:bounded_feature_norm}, \ref{assum:negative-definite}, \ref{epsilon_bound}, \ref{assum:ergodicity}, \ref{assum:policy-lipschitz-bounded}, \ref{V_lipschitz_theta}, \ref{V_lipschitz_gamma}, the following holds:
\begin{align*}
    &\frac{1}{1+t-\tau_t}\sum\limits_{k=\tau_t}^{t}E\Vert z_k \Vert^2\\
    &= \mathcal{O}( t^{\nu - 1}) +  \mathcal{O}(\log^{3} t \cdot t^{-\nu})  + \mathcal{O}(\log^{0.5} t \cdot t^{\nu - \beta}) 
\end{align*}
where $z_k =  v_k - v^{*}(\theta_k,\gamma(k)).$

\end{theorem}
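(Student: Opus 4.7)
The plan is to adapt the error-decomposition argument for Theorem \ref{critic_convergence_1} to the modified step-sizes, carefully tracking the extra $(\ln(t+1))^{1/2}$ factors that $a(t)$ and $d(t)$ now inject into every cross-term. First I would let $z_k = v_k - v^*(\theta_k,\gamma(k))$ and, using non-expansiveness of $\Gamma$ together with the splitting $z_{k+1} = [v_k + b(k)\delta_k f_{s_k} - v^*(\theta_k,\gamma(k))] - [v^*(\theta_{k+1},\gamma(k+1)) - v^*(\theta_k,\gamma(k))]$, and $(a+b)^2 \leq (1+\eta)a^2 + (1+1/\eta)b^2$ with $\eta = \lambda_e b(k)/2$, separate the critic increment from the drift of the target. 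Assumptions \ref{V_lipschitz_theta} and \ref{V_lipschitz_gamma}, passed to $v^*$ via the feature-space projection, bound the drift by $L_v\|\theta_{k+1}-\theta_k\| + C\|\gamma(k+1)-\gamma(k)\| = \cO(a(k)) + \cO(c(k))$.

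Next I would expand the first bracket as $\|z_k\|^2 + 2b(k)\langle z_k,\delta_k f_{s_k}\rangle + b(k)^2\|\delta_k f_{s_k}\|^2$. Boundedness of $v_k$ via $\Gamma$, the single-stage costs, the Lagrange iterates via $\hat\Gamma$, and the feature norm (Assumption \ref{assum:bounded_feature_norm}) make the quadratic term $\cO(b(k)^2)$. I would then decompose $\delta_k f_{s_k}$ into its stationary mean, which by definition of $v^*$ equals $-\Ab z_k$ up to an additive $\cO(\epsilon_{\text{app}})$, plus a Markov-noise residual. Assumption \ref{assum:negative-definite} yields $\langle z_k,-\Ab z_k\rangle \geq \lambda_e\|z_k\|^2$, producing a contraction factor $(1-2\lambda_e b(k))$. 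The noise residual is handled by the standard $\tau_t$-step unrolling combined with the uniform-ergodicity bound of Assumption \ref{assum:ergodicity}, paying an extra factor $\tau_t = \cO(\log t)$ and $\cO(a(k-\tau_t)+b(k-\tau_t)+c(k-\tau_t))$ corrections accounting for parameter drift over the mixing window.

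Absorbing the remaining drift cross-terms via Young's inequality with weight $\lambda_e b(k)$, I would arrive at a recursion of the form
\[
\EE\|z_{k+1}\|^2 \leq (1-\lambda_e b(k))\,\EE\|z_k\|^2 + \cO(b(k)^2\tau_t) + \cO\!\left(\tfrac{a(k)^2}{b(k)}\right) + \cO\!\left(\tfrac{c(k)^2}{b(k)}\right) + \cO(b(k)\epsilon_{\text{app}}^2).
\]
Summing from $\tau_t$ to $t$, dividing by $1+t-\tau_t$, and substituting $b(k)=c_b/(1+k)^\nu$, $a(k)=c_a(\ln(1+k))^{1/2}/(1+k)^\nu$, $c(k)=c_c/(1+k)^\beta$, the initial-error contribution collapses to $\cO(t^{\nu-1})$; the $b(k)^2\tau_t$ noise together with $a(k)^2/b(k) = \cO(\log k\cdot(1+k)^{-\nu})$, aggregated with the extra $\sqrt{\log}$ picked up from $a$ both inside the drift and inside the $\tau_t$-unrolling, gives $\cO(\log^3 t\cdot t^{-\nu})$; finally $c(k)^2/b(k) = \cO((1+k)^{\nu-2\beta})$, combined with a residual $\sqrt{\log}$ from the cross-timescale interaction, produces $\cO(\log^{0.5} t\cdot t^{\nu-\beta})$.

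The main obstacle will be the precise bookkeeping of the $(\ln(k+1))^{1/2}$ factors injected by the modified $a$ and $d$: because $a(k)/b(k)$ now diverges as $\sqrt{\log k}$ rather than being bounded by a constant as in Theorem \ref{critic_convergence_1}, every Young or Cauchy--Schwarz application on a cross-term involving an $a$-scaled drift, or on an $a$-scaled term that reappears through the unrolled Markov-noise bound, accumulates additional logarithmic powers. These must be tracked one source at a time rather than swept into a crude $\mathrm{polylog}$ bound, if one is to land on exactly the $\log^3 t$ and $\log^{0.5} t$ exponents asserted in the theorem statement.
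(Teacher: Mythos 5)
Your overall architecture — non-expansiveness of $\Gamma$, a contraction from Assumption \ref{assum:negative-definite}, $\tau_t$-mixing for the TD noise, Young's inequality for the drift of $v^*$ — is the right shape for a setting where the critic is the \emph{faster} iterate, but it fails at exactly the step that is the crux of this theorem: the actor-induced drift. You bound $\|v^*(\theta_{k+1},\gamma(k+1))-v^*(\theta_k,\gamma(k))\|=\cO(a(k))+\cO(c(k))$ and absorb the cross-term via Young with weight $\lambda_e b(k)/2$, producing the $\cO(a(k)^2/b(k))$ term in your recursion. With the modified rates, $a(k)/b(k)=(c_a/c_b)\sqrt{\ln(k+1)}$ \emph{diverges} (this is the critic--actor ordering: the actor is on the faster timescale), so the quasi-stationary floor of your recursion is $\cO(a(k)^2/b(k)^2)=\cO(\log k)$; equivalently, whether you telescope directly or divide by $b(k)$ first and Abel-sum, the normalized contribution of the $a^2/b$ term is $\Theta(\log t)$ — it does not vanish, and no bookkeeping of $\sqrt{\log}$ factors can turn it into the claimed $\cO(\log^3 t\cdot t^{-\nu})$. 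A crude $\cO(a(k))$ bound on $\theta_{k+1}-\theta_k$ is structurally insufficient here. The paper instead retains the inner product $\frac{1}{b(k)}\langle z_k,(\nabla v_k^*)^T(\theta_k-\theta_{k+1})\rangle$, expands $\theta_{k+1}-\theta_k=a(k)\delta_kG(k)^{-1}\nabla\log\pi_{\theta_k}(a_k|s_k)$, splits it into a zero-mean Markov part $\Psi(O_k,v_k,\theta_k,\gamma(k),G(k))$ (handled by the mixing argument, giving $\cO(\log^{2.5}t\cdot t^{1-\nu})$) plus mean parts proportional to $\bar M(\theta_k,v_k,\gamma(k))$ and to $L(\theta_k,\gamma(k))-L_k$, then applies Cauchy--Schwarz across the sums, the squaring technique, and finally \emph{substitutes the bounds of Theorems \ref{cost_convergence_2} and \ref{actor_convergence_2}}. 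The $\log^3 t$ is precisely $\log^{0.5}t$ (from $a/b$) times the $\log^{2.5}t\cdot t^{1-\nu}$ term of the actor bound; it is a fingerprint of this three-way coupling, which your proposal never invokes — you cite neither the actor theorem nor the average-cost theorem.

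Two further mismatches confirm the gap. First, the algorithm uses $L_k$, not $L(\theta_k,\gamma(k))$, inside $\delta_k$, so the conditional mean of the TD direction is $\Ab z_k$ only up to a bias $(L(\theta_k,\gamma(k))-L_k)\phi(s_k)$; this term is of leading order, carries an $a(k)/b(k)=\cO(\log^{0.5}t)$ weight in the $I_4$-type coupling, and must be controlled through $\sum_k\EE[y_k^2]$ via Theorem \ref{cost_convergence_2} — your recursion contains no $y_k$ term at all. Second, your attribution of the $\cO(\log^{0.5}t\cdot t^{\nu-\beta})$ term to $c(k)^2/b(k)$ is inconsistent: $\sum_k c(k)^2/b(k)$ normalizes to $\cO(t^{\nu-2\beta})$, which is strictly smaller since $\nu<\beta$. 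In the paper that term arises from the \emph{first-power} Lagrange drift $\sum_k c(k)/b(k)=\cO(t^{1+\nu-\beta})$ inside the $I_3$ term, and from $\log^{0.5}t$ multiplying the $t^{1+\nu-\beta}$ components of the actor and average-cost bounds after the squaring step. The repair is therefore not finer log-tracking within your stand-alone recursion, but adopting the coupled structure (average-cost bound fed into the actor bound under the small-ratio condition on $c_a/c_d$, then both fed into the critic inequality), which is what the paper's proof does.
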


Optimizing over the values of $\nu$ and $\beta$ we have $\nu = 0.5$ and $\beta = 1$. Hence we have  the following:
\begin{align*}
    \frac{1}{1+t-\tau_t}\sum_{k=\tau_t}^{t}E\Vert z_k \Vert^2 &= \mathcal{O}(\log^3 t \cdot t^{ - 0.5}).
\end{align*}

Therefore, in order for the mean squared error of the critic to be upper bounded by $\epsilon$, namely,
\begin{align*}
     \frac{1}{1+t-\tau_t}\sum_{k=\tau_t}^{t}E\Vert z_k \Vert^2  =  \mathcal{O}(\log^3 T \cdot T^{- 0.5}) \leq \epsilon,
\end{align*}
we need to set $T = \tilde{\mathcal{O}}(\epsilon^{-2})$.
This rate had previously only been obtained in the case of single-timescale actor-critic algorithms that however do not show stability of iterates. As shown in \cite{Panda_Bhatnagar_2025}, for algorithmic stability, one requires multi-timescale schedules. Our algorithm with these step-sizes thus obtains optimal rates of convergence while ensuring algorithmic stability.

\section{Experiments}
This section presents the experimental results obtained on three OpenAI Safety-Gym environments: (a) SafetyAntCircle1-v0, (b) SafetyCarGoal1-v0, and (c) SafetyPointPush1-v0. The corresponding performance comparisons are provided in Figure \ref{fig:experiments} and Table \ref{tab:experiment}. For detailed information about the settings involved for the three Safety-Gym environments, see \href{https://safety-gymnasium.readthedocs.io/en/latest/environments/safe_navigation/goal.html}{Safety Gymnasium}. We compare the performance of the Constrained Natural Critic-Actor Modified algorithm (C-NCA-M) with Constrained Natural Critic-Actor (C-NCA) algorithm, Constrained Actor-Critic (C-AC), Constrained Natural Actor-Critic (C-NAC), as well as Constrained Critic-Actor (C-CA) and Constrained Critic-Actor Modified (C-CA-M), respectively.

All the experimental plots are generated by averaging results over 10 different initial seeds. For the policy neural network, we used a single hidden layer and performed hyperparameter tuning by varying the number of hidden nodes between 16, 32, and 64. The same approach was applied to the value function network. The performance of the various algorithms is compared by showing the average reward together with the corresponding standard errors. Plots in the top row in Figure \ref{fig:experiments} are for the average reward performance while those in the bottom row are for the constraint costs for the three environments. These are plotted as functions of the number of iterations. In the lower row of the figures, the horizontal dotted red line represents the constraint cost threshold. All algorithms are seen to asymptotically satisfy this threshold while simultaneously optimizing for the average reward performance. It can be seen that the C-NCA modified algorithm outperforms the other algorithms on two of the three settings while being competitively close on the SafetyAntCircle1-v0 environment.

\bibliography{arxiv_C-NCA}
\bibliographystyle{apalike}

%%%%%%%%%%%%%%%%%%%%%%%%%%%%%%%%%%%%%%%%%%%%%%%%%%%%%%%%%%%%

\clearpage
\appendix
\thispagestyle{empty}

% Supplementary material: To improve readability, you must use a single-column format for the supplementary material.
\onecolumn
\arxivtitle{ 
Supplementary Materials}

\section{Finite Time Analysis}

Please note that, from this point onward, we denote by $\phi(s) \in \mathbb{R}^{d_1}$ the feature vector associated with state $s$.

\subsection{Convergence of Average Cost Estimate }\label{average_reward_convergence}

Notations:-
\begin{align}
    \begin{split}
        O_t :&= (s_t , a_t , s_{t+1})\\
        y_t :&= (L_t - L(\theta_t,\gamma(t)))\\
        M(\theta_t,v_t,\gamma(t)) :& = E_{s_t \sim \mu_{\theta_t},a_t \sim \pi_{\theta_t},s_{t+1} \sim p}[( r(s_t,a_t,\gamma(t))- L(\theta_t,\gamma(t)) + \phi(s_{t+1})^{\top} v_{t} \\
        &\qquad- \phi(s_t)^{\top} v_{t})\nabla \log\pi_{\theta_t}(a_t|s_t)]\\
        W(v,\theta,\gamma) :&= E_{s \sim \mu_{\theta},a \sim \pi_{\theta},s^{'} \sim P}[(V^{\theta,\gamma}(s^{'}) - v^T\phi(s^{'}) - V^{\theta,\gamma}(s) + v^T\phi(s))\nabla \log\pi_{\theta}(a|s)]\\
        N(O_t ,\theta_t,v_t,L_t,\gamma(t)) :& = ( r(s_t,a_t,\gamma(t))- L_t  + \phi(s_{t+1})^{\top} v_{t} - \phi(s_t)^{\top} v_{t})\nabla \log\pi_{\theta_t}(a_t|s_t)\\
        \Omega(O_t,\theta_t,v_t,L_t,\gamma(t)) :&= y_t\langle W(v_t,\theta_t,\gamma(t)) , -N(O_t,\theta_t,v_t,L_t,\gamma(t)) + E_{\theta_t}[N(O_t,\theta_t,v_t,L_t,\gamma(t))] \rangle\\
        U_w :&=  2B(U_{v} +  \bar{U}_{v})\\
        G :&= 2B(U_r + U_v)
    \end{split}
\end{align}

We have , $\vert V^{\theta,\gamma}(s) \vert \leq \bar{U}_{v} ,\forall \theta \in \RR^{d},\forall s \in S$ and $\forall$ $\gamma \in \RR^N$, with $0 \leq \gamma_{i} \leq M$, where $i \in \{1,2,...,N\}$.

\subsubsection*{Proof of 
Theorem \ref{cost_convergence_1}:}

From the update rule of the reward estimation recursion in  Algorithm \ref{algo}, we have
\begin{align*}
    L_{t+1}-L(\theta_{t+1},\gamma(t+1))=L_t-L(\theta_t,\gamma(t))+L(\theta_t,\gamma(t))-L(\theta_{t+1},\gamma(t+1))+d(t)(r_t-L_t).
\end{align*}
We then have
\begin{align*}
    y_{t+1}^2=&\ (y_t+L(\theta_t,\gamma(t))-L(\theta_{t+1},\gamma(t+1))+d(t)(r_t-L_t))^2\\
    \leq &\ y_t^2+2y_t(L(\theta_t,\gamma(t))-L(\theta_{t+1},\gamma(t+1)))+2d(t) y_t(r_t-L_t)+2(L(\theta_t,\gamma(t))-L(\theta_{t+1},\gamma(t+1))^2\\
    &\qquad +2d(t)^2(r_t-L_t)^2\\
    = &\  (1-2d(t))y_t^2+2d(t)y_t(r_t-L(\theta_t))+2d(t)(L(\theta_t,\gamma(t))-L(\theta_{t+1},\gamma(t+1))\\
    &\qquad+2(L(\theta_t,\gamma(t))-L(\theta_{t+1},\gamma(t+1))^2
    +2d(t)^2(r_t-L_t)^2.
\end{align*}

Taking expectations, rearranging  and summing from $\tau_t$ to $t$ we obtain,

\begin{align}
    \sum\limits_{k=\tau_t}^{t} \mathbb{E}[y_k^2]\leq &  \underbrace{\sum\limits_{t=\tau_t}^t\frac{1}{2d(k)}\mathbb{E}(y_k^2-y^2_{k+1})}_{I_1}+\underbrace{\sum\limits_{k=\tau_t}^{t}\mathbb{E}[y_k(r_k-L(\theta_k,\gamma(k)))]}_{I_2} \notag \\
    &+\underbrace{\sum\limits_{k=\tau_t}^{t}\frac{1}{d(k)}\mathbb{E}[y_k(L(\theta_k,\gamma(k))-L(\theta_{k+1},\gamma(k+1))]}_{I_3} \notag \\
    & +\underbrace{\sum\limits_{k=\tau_t}^{t}\frac{1}{d(k)}\mathbb{E}[(L(\theta_k,\gamma(k))-L(\theta_{k+1},\gamma(k+1)))^2]}_{I_4}+\underbrace{\sum\limits_{k=\tau_t}^{t}d(k)\mathbb{E}[(r_k-L_k)^2]}_{I_5}.\label{avg_cost}
\end{align}

For term $I_1$, from Abel summation by parts, we have
\begin{align*}
    I_1=& \sum\limits_{k=\tau_t}^{t}\frac{1}{2d(k)}(y_k^2-y_{k+1}^2)\\
    =&\  \sum\limits_{k=\tau_t+1}^{t}y_k^2(\frac{1}{2d(k)}-\frac{1}{2d(k-1)})+\frac{1}{2d(\tau_t)}y_{\tau_t}^2-\frac{1}{d(t)}y_{t+1}^2\\
    \leq &\ \frac{2U_r^2}{d(t)}\\
    = &\  \frac{2}{c_{d}}U_r^2 (1+t)^{\nu}.
\end{align*}
For detailed analysis of term $I_1$ kindly refer \cite{wu2022finitetimeanalysistimescale}.
For term $I_2$, we have
\begin{align*}
    \sum\limits_{k=\tau_t}^{t}\mathbb{E}[y_k(r_k-L(\theta_k,\gamma(k)))] = \mathcal{O}(\log^2 t \cdot t^{1-\nu}).
\end{align*}

The analysis of term $I_2$ is similar to Lemma C.5 in \cite{wu2022finitetimeanalysistimescale}.
For $I_3$, if $y_t>0$,
\begin{align*}
    &y_t(L(\theta_t,\gamma(t))-L(\theta_{t+1},\gamma(t+1)))\\
    &= y_t(L(\theta_t,\gamma(t))-L(\theta_{t+1},\gamma(t))) + y_t(L(\theta_{t+1},\gamma(t)) - L(\theta_{t+1},\gamma(t+1)))\\
    &\leq  y_t(\frac{L_{J'}}{2}\Vert \theta_t-\theta_{t+1}\Vert^2+\langle \nabla L(\theta_t,\gamma(t)), \theta_t-\theta_{t+1}\rangle) + y_t(L(\theta_{t+1},\gamma(t)) - L(\theta_{t+1},\gamma(t+1)))\\
     &\leq L_{J'}U_r\Vert \theta_t-\theta_{t+1}\Vert^2 + y_t\langle M(\theta_t,v_t,\gamma(t)),\theta_t - \theta_{t+1}\rangle\\
    &\qquad+ y_t\langle E_{\theta_t}[(V^{\theta_{t},\gamma(t)}(s_{t+1}) - v(t)^T\phi(s_{t+1}) - V^{\theta_{t},\gamma(t)}(s_{t}) + v(t)^T\phi(s_{t}))\nabla \log\pi_{\theta_t}(a_t|s_t)] \\
    &\qquad, \theta_t - \theta_{t+1} \rangle + y_t(L(\theta_{t+1},\gamma(t)) - L(\theta_{t+1},\gamma(t+1)))
\end{align*}

The first inequality above follows from lemma 1 in \cite{panda_and_bhatnagar}.

If $y_t\leq 0$,  we have
\begin{align*}
   & y_t(L(\theta_t,\gamma(t))-L(\theta_{t+1},\gamma(t+1)))\\
   =& y_t(L(\theta_t,\gamma(t))-L(\theta_{t+1},\gamma(t))) + y_t(L(\theta_{t+1},\gamma(t)) - L(\theta_{t+1},\gamma(t+1)))\\
   \leq&\  y_t(-\frac{L_{J'}}{2}\Vert \theta_t-\theta_{t+1}\Vert^2+\langle \nabla L(\theta_t,\gamma(t)), \theta_t-\theta_{t+1}\rangle) + y_t(L(\theta_{t+1},\gamma(t)) - L(\theta_{t+1},\gamma(t+1)))\\
    \leq &\ L_{J'}U_r\Vert \theta_t-\theta_{t+1}\Vert^2 + y_t\langle M(\theta_t,v_t,\gamma(t)),\theta_t - \theta_{t+1}\rangle\\
    &+ y_t\langle E_{\theta_t}[(V^{\theta_{t},\gamma(t)}(s_{t+1}) - v(t)^T\phi(s_{t+1}) - V^{\theta_{t},\gamma(t)}(s_{t}) + v(t)^T\phi(s_{t}))\nabla \log\pi_{\theta_t}(a_t|s_t)], 
    \theta_t - \theta_{t+1} \rangle\\
    & + y_t(L(\theta_{t+1},\gamma(t)) - L(\theta_{t+1},\gamma(t+1)))
\end{align*}

Overall, we get
\begin{align*}
    I_3=& \sum\limits_{k=\tau_t}^{t}\frac{1}{d(k)}\mathbb{E}[y_k(L(\theta_k,\gamma(k))-L(\theta_{k+1},\gamma(k+1)))]\\
    \leq &\ \sum\limits_{k=\tau_t}^{t}\frac{1}{d(k)}\mathbb{E}[L_{J'}U_r\Vert \theta_k-\theta_{k+1}\Vert^2+|y_k|\Vert \theta_k-\theta_{k+1}\Vert \Vert M(\theta_k,v_k,\gamma(k))\Vert]\\
     &\qquad + \sum\limits_{k=\tau_t}^{t}\frac{1}{d(k)}\mathbb{E}[y_k\langle \mathbb{E}_{\theta_k}[(V^{\theta_{k},\gamma(k)}(s_{k+1}) - v(k)^T\phi(s_{k+1}) - V^{\theta_{k},\gamma(k)}(s_{k}) \\
     &\qquad+ v(k)^T\phi(s_{k}))\nabla \log\pi_{\theta_k}(a_k|s_k)] , \theta_k - \theta_{k+1} \rangle]\\
     &\qquad + \sum\limits_{k=\tau_t}^{t}\frac{1}{d(k)} \mathbb{E}[y_k(L(\theta_{k+1},\gamma(k)) - L(\theta_{k+1},\gamma(k+1)))]\\
    \leq &\ \sum\limits_{k=\tau_t}^{t}\mathbb{E}[L_{J'}U_rG^2\frac{a(k)^2}{d(k)} + G\frac{c_a}{c_d}|y_k|\Vert M(\theta_k,v_k,\gamma(k))\Vert]\\
     &\qquad + \sum\limits_{k=\tau_t}^{t}\frac{1}{d(k)}\mathbb{E}[y_k\langle \mathbb{E}_{\theta_k}[(V^{\theta_{k},\gamma(k)}(s_{k+1}) - v(k)^T\phi(s_{k+1}) - V^{\theta_{k},\gamma(k)}(s_{k})\\
     &\qquad+ v(k)^T\phi(s_{k}))\nabla \log\pi_{\theta_k}(a_k|s_k)] , \theta_k - \theta_{k+1} \rangle]\\
     &\qquad + \sum\limits_{k=\tau_t}^{t}\frac{1}{d(k)} \mathbb{E}[y_k(L(\theta_{k+1},\gamma(k)) - L(\theta_{k+1},\gamma(k+1)))]\\
    \leq &\ \frac{2L_{J'}U_rG^2c_a^2}{c_d}(1+t-\tau_t)^{1-\nu}+ G\frac{c_a}{c_d}(\sum\limits_{k=\tau_t}^{t}\mathbb{E}y_t^2)^{\frac{1}{2}}(\sum\limits_{k=\tau_t}^{t}\mathbb{E}\Vert M(\theta_k,v_k,\gamma(k))\Vert^2)^\frac{1}{2}\\
    &\qquad + \sum\limits_{k=\tau_t}^{t}\frac{1}{d(k)}\mathbb{E}[y_k\langle \mathbb{E}_{\theta_k}[(V^{\theta_{k},\gamma(k)}(s_{k+1}) - v(k)^T\phi(s_{k+1}) - V^{\theta_{k},\gamma(k)}(s_{k})\\
     &\qquad+ v(k)^T\phi(s_{k}))\nabla \log\pi_{\theta_k}(a_k|s_k)] , \theta_k - \theta_{k+1} \rangle]\\
     &\qquad + \sum\limits_{k=\tau_t}^{t}\frac{1}{d(k)} \mathbb{E}[y_k(L(\theta_{k+1},\gamma(k)) - L(\theta_{k+1},\gamma(k+1)))]\\
    = &\ \frac{2L_{J'}U_rG^2c_a^2}{c_d}(1+t-\tau_t)^{1-\nu}+ G\frac{c_a}{c_d}(\sum\limits_{k=\tau_t}^{t}\mathbb{E}y_t^2)^{\frac{1}{2}}(\sum\limits_{k=\tau_t}^{t}\mathbb{E}\Vert M(\theta_k,v_k,\gamma(k))\Vert^2)^\frac{1}{2}\\
    &\qquad +  \underbrace{\sum\limits_{k=\tau_t}^{t}\frac{c_a}{c_d}E[y_k\langle W(v_k,\theta_k,\gamma(k)) , -\delta_k \nabla_{\theta} \log \pi_{\theta_{k}}(s_k|a_k) + E_{\theta_k}[\delta_k \nabla_{\theta} \log \pi_{\theta_{k}}(s_k|a_k)] \rangle]}_{I_a}\\
    &\qquad + \underbrace{\sum\limits_{k=\tau_t}^{t}\frac{c_a}{c_d}E[y_k\langle W(v_k,\theta_k,\gamma(k)) , -E_{\theta_k}[\delta_k \nabla_{\theta} \log \pi_{\theta_{k}}(s_k|a_k)] \rangle]}_{I_b} + \mathcal{O}(t^{1 + \nu - \beta})
\end{align*}

For term $I_a$, we have,

\begin{align*}
    I_a = \mathcal{O}(\tau_t^2 \cdot t^{1-\nu}).
\end{align*}

The analysis of $I_a$ is similar to the analysis of  term $I_a$ in \cite{Panda_Bhatnagar_2025}. ( See proof of convergence of average reward estimate.)

For term $I_b$, we have,

\begin{align*}
    &\sum\limits_{k=\tau_t}^{t}\frac{c_a}{c_d}E[y_k\langle W(v_k,\theta_k,\gamma(k)) , -E_{\theta_k}[\delta_k \nabla_{\theta} \log \pi_{\theta_{k}}(s_k|a_k)] \rangle]\\
    &= \frac{c_a}{c_d}\sum\limits_{k=\tau_t}^{t}E[y_k\langle W(v_k,\theta_k) , -M(\theta_k,v_k,\gamma(k)) \rangle]\\
    &\qquad + \frac{c_a}{c_d}\sum\limits_{k=\tau_t}^{t}E[y_k\langle W(v_k,\theta_k,\gamma(k)) , y_kE_{\theta_k}[\nabla_{\theta} \log \pi_{\theta_{k}}(s_k|a_k)] \rangle]\\
    &\leq U_w\frac{c_a}{c_d}(\sum\limits_{k=\tau_t}^{t}\mathbb{E}y_t^2)^{\frac{1}{2}}(\sum\limits_{k=\tau_t}^{t}\mathbb{E}\Vert M(\theta_k,v_k,\gamma(k))\Vert^2)^\frac{1}{2}\\
    &\qquad+ \frac{c_a}{c_d}\sum\limits_{k=\tau_t}^{t}E[y_k^2\langle W(v_k,\theta_k,\gamma(k)) , E_{\theta_k}[\nabla_{\theta} \log \pi_{\theta_{k}}(s_k|a_k)] \rangle]\\
    &\leq U_w\frac{c_a}{c_d}(\sum\limits_{k=\tau_t}^{t}\mathbb{E}y_t^2)^{\frac{1}{2}}(\sum\limits_{k=\tau_t}^{t}\mathbb{E}\Vert M(\theta_k,v_k,\gamma(k))\Vert^2)^\frac{1}{2} + \frac{c_a}{c_d}U_w B\sum\limits_{k=\tau_t}^{t}E[y_k^2].
\end{align*}

Hence collecting all the terms, we have,
\begin{align*}
    I_3= &\ \frac{2L_{J'}U_rG^2c_a^2}{c_d}(1+t-\tau_t)^{1-\nu}+ G\frac{c_a}{c_d}(\sum\limits_{k=\tau_t}^{t}\mathbb{E}y_t^2)^{\frac{1}{2}}(\sum\limits_{k=\tau_t}^{t}\mathbb{E}\Vert M(\theta_k,v_k,\gamma(k))\Vert^2)^\frac{1}{2} \\
    &\qquad +  \mathcal{O}(\log^2 t \cdot t^{1-\nu}) + \mathcal{O}(t^{1 + \nu - \beta})\\
    &\qquad +U_w\frac{c_a}{c_d}(\sum\limits_{k=\tau_t}^{t}\mathbb{E}y_t^2)^{\frac{1}{2}}(\sum\limits_{k=\tau_t}^{t}\mathbb{E}\Vert M(\theta_k,v_k,\gamma(k))\Vert^2)^\frac{1}{2} + \frac{c_a}{c_d}U_w B\sum\limits_{k=\tau_t}^{t}E[y_k^2]
\end{align*}
where $G = 2B(U_r + U_v)$.\\

For term $I_4$, we have
\begin{align*}
    I_4=& \sum\limits_{k=\tau_t}^{t}\frac{1}{d(k)}\mathbb{E}[(L(\theta_k,\gamma(k))-L(\theta_{k+1},\gamma(k+1)))^2]\\
     =& \mathcal{O}(\sum\limits_{k=\tau_t}^{t}\frac{a(k)^2}{d(k)}) + \mathcal{O}(\sum\limits_{k=\tau_t}^{t}\frac{c(k)^2}{d(k)})\\
     =& \mathcal{O}(t^{1-\nu}).
\end{align*}

For term $I_5$, we have
\begin{align*}
    I_5=& \sum\limits_{k=\tau_t}^{t}d(k)\mathbb{E}[(r_k-L_k)^2]\\
    =& \mathcal{O}(\sum\limits_{k=\tau_t}^{t}d(k))\\
    = & \mathcal{O}(t^{1-\nu}).
\end{align*}

After combining all of the terms, we have,

\begin{align*}
    \sum\limits_{k=\tau_t}^{t} \mathbb{E}[y_k^2]\leq &\   \mathcal{O}(\log^2 t \cdot t^{1-\nu}) + \mathcal{O}(t^{1 + \nu - \beta})\\
    &\qquad + (G + U_{w})\frac{c_a}{c_d}(\sum\limits_{k=\tau_t}^{t}\mathbb{E}[y_k^2])^{\frac{1}{2}}(\sum\limits_{k=\tau_t}^{t}\mathbb{E}\Vert M(\theta_k,v_k,\gamma(k))\Vert^2)^\frac{1}{2}\\
    &\qquad+ \frac{c_a}{c_d}U_w B\sum\limits_{k=\tau_t}^{t}E[y_k^2].
\end{align*}

After rearranging terms above, we obtain,

\begin{align*}
    \bigg( 1 - \frac{c_a}{c_d}U_w B\bigg)\sum\limits_{k=\tau_t}^{t} \mathbb{E}[y_k^2]\leq &\   \mathcal{O}(\log^2 t \cdot t^{1-\nu}) + \mathcal{O}(t^{1 + \nu - \beta})\\
    &\qquad+ (G + U_{w})\frac{c_a}{c_d}(\sum\limits_{k=\tau_t}^{t}\mathbb{E}[y_k^2])^{\frac{1}{2}}(\sum\limits_{k=\tau_t}^{t}\mathbb{E}\Vert M(\theta_k,v_k,\gamma(k))\Vert^2)^\frac{1}{2}.\\
\end{align*}
\begin{align*}
    &\Rightarrow \sum\limits_{k=\tau_t}^{t} \mathbb{E}[y_k^2] \leq \mathcal{O}(\log^2 t \cdot t^{1-\nu}) + \mathcal{O}(t^{1 + \nu - \beta}) + \frac{(G + U_{w})\frac{c_a}{c_d}}{\bigg( 1 - \frac{c_a}{c_d}U_w B\bigg)}(\sum\limits_{k=\tau_t}^{t}\mathbb{E}[y_k^2])^{\frac{1}{2}}(\sum\limits_{k=\tau_t}^{t}\mathbb{E}\Vert M(\theta_k,v_k,\gamma(k))\Vert^2)^\frac{1}{2}.\\
    &\Rightarrow  \sum\limits_{k=\tau_t}^{t} \mathbb{E}[y_k^2] \leq \mathcal{O}(\log^2 t \cdot t^{1-\nu}) + \mathcal{O}(t^{1 + \nu - \beta}) + \frac{2(G + U_{w})\frac{c_a}{c_d}}{\bigg( 1 - \frac{c_a}{c_d}U_w B\bigg)}\sum\limits_{k=\tau_t}^{t}\mathbb{E}\Vert M(\theta_k,v_k,\gamma(k))\Vert^2.\\
\end{align*}

For the above inequality to hold we need $1 - \frac{c_a}{c_d}U_w B > 0.$

Now ,dividing by $(1 + t - \tau_t)$  and assuming $t \geq 2\tau_t + 1$, we have,
\begin{align}
    \frac{1}{1+t-\tau_t}\sum\limits_{k=\tau_t}^{t} \mathbb{E}[y_k^2] \leq \mathcal{O}(\log^2 t \cdot t^{-\nu}) + \mathcal{O}(t^{ \nu - \beta}) + \frac{2(G + U_{w})\frac{c_a}{c_d}}{\bigg( 1 - \frac{c_a}{c_d}U_w B\bigg)} \frac{1}{1+t-\tau_t}\sum\limits_{k=\tau_t}^{t}\mathbb{E}\Vert M(\theta_k,v_k,\gamma(k))\Vert^2. \label{final_ineq_average_cost}
\end{align}
\subsection{Convergence of the actor}\label{actor_convergence_proof}

Notations used here:

\begin{align}
    \begin{split}
        O_t :&= (s_t , a_t , s_{t+1})\\
        h(O_t,\theta_t,L_t,v_t,\gamma(t),G(t)) :&= ( r(s_t,a_t,\gamma(t))- L_t  + \phi(s_{t+1})^{\top} v_{t} - \phi(s_t)^{\top} v_{t})G(t)^{-1}\nabla \log\pi_{\theta_t}(a_t|s_t)\\
        I(O_t,L_t,\theta_t,v_t,\gamma(t),G(t)) :&= \langle \nabla L(\theta_t,\gamma(t)) ,h(O_t,\theta_t,L_t,v_t,\gamma(t),G(t))\\
        &\qquad - E_{s_t \sim \mu_{\theta_t},a_t \sim \pi_{\theta_t},s_{t+1} \sim p}[h(O_t,\theta_t,L_t,v_t,\gamma(t),G(t))] \rangle  \\
        \bar{h}(O_t,\theta_t,v_t,\gamma(t),G(t)) :&= ( r(s_t,a_t,\gamma(t))- L(\theta_t,\gamma(t))  + \phi(s_{t+1})^{\top} v_{t}\\
        &\qquad - \phi(s_t)^{\top} v_{t})G(t)^{-1}\nabla \log\pi_{\theta_t}(a_t|s_t)\\
        \hat{h}(O_t,\theta_t,v_t,\gamma(t)) :&= ( r(s_t,a_t,\gamma(t))- L(\theta_t,\gamma(t))  + \phi(s_{t+1})^{\top} v_{t}\\
        &\qquad - \phi(s_t)^{\top} v_{t})\nabla \log\pi_{\theta_t}(a_t|s_t)\\
        M(\theta_t,v_t,\gamma(t),G(t)) :& = E_{s_t \sim \mu_{\theta_t},a_t \sim \pi_{\theta_t},s_{t+1} \sim p}[\bar{h}(O_t,\theta_t,v_t,\gamma(t),G(t))]\\
        \bar{M}(\theta_t,v_t,\gamma(t)) :& = E_{s_t \sim \mu_{\theta_t},a_t \sim \pi_{\theta_t},s_{t+1} \sim p}[\hat{h}(O_t,\theta_t,v_t,\gamma(t))]\\
        \bar{W}(O_t,\theta_t,v_t,\gamma(t)) :&= (V^{\theta_t,\gamma(t)}(s_{t+1}) -  \phi(s_{t+1})^Tv_t\\
        &\qquad- V^{\theta_t,\gamma(t)}(s_t) + \phi(s_{t})^Tv_t)\nabla \log\pi_{\theta_t}(a_t|s_t)\\
        \Xi(O_t,\theta_t,v_t,\gamma(t),G(t)) :&= \langle  E_{\theta_t}[\bar{W}(O_t,\theta_t,v_t,\gamma(t))] , M(\theta_t,v_t,\gamma(t),G(t)) \rangle\\
        &\qquad - \langle \bar{W}(O_t,\theta_t,v_t,\gamma(t)) , M(\theta_t,v_t,\gamma(t),G(t)) \rangle .
    \end{split}
\end{align}

\subsubsection*{Proof of 
Theorem \ref{actor_convergence_1}:}

After applying Lemma 1 of \cite{panda_and_bhatnagar} to the update rule of the actor, we have,

\begin{align*}
    L(\theta_{t+1},\gamma(t)) &\geq L(\theta_t,\gamma(t)) + a(t) \langle \nabla L(\theta_t,\gamma(t)) ,\delta_{t}G(t)^{-1}\nabla \log\pi_{\theta_t}(a_t|s_t) \rangle\\ 
&\qquad - M_{L}a(t)^2\Vert\delta_{t}G(t)^{-1}\nabla \log\pi_{\theta_t}(a_t|s_t)\Vert^2.
\end{align*}

For the term $\langle \nabla L(\theta_t,\gamma(t)) ,\delta_{t}G(t)^{-1}\nabla \log\pi_{\theta_t}(a_t|s_t) \rangle $, we have,
\begin{align*}
    &\langle \nabla L(\theta_t,\gamma(t)) ,\delta_{t}G(t)^{-1}\nabla \log\pi_{\theta_t}(a_t|s_t) \rangle \\
    &= \langle \nabla L(\theta_t,\gamma(t)) ,( r(s_t,a_t) - L_t + \phi(s_{t+1})^{\top} v_{t} - \phi(s_t)^{\top} v_{t})G(t)^{-1}\nabla \log\pi_{\theta_t}(a_t|s_t) \rangle\\
    & = I(O_t,\theta_t,L_t,v_t,\gamma(t),G(t))\\
    &\qquad + \langle \nabla L(\theta_t,\gamma(t)) , E_{s_t \sim \mu_{\theta_t},a_t \sim \pi_{\theta_t},s_{t+1} \sim p}[h(O_t,\theta_t,L_t,v_t,\gamma(t),G(t))] \rangle.
\end{align*}

Hence,
\begin{align}\label{inequality_L(theta)}
    & L(\theta_{t+1},\gamma(t))\\
    &\geq L(\theta_t,\gamma(t)) + a(t)I(O_t,\theta_t,L_t,v_t,\gamma(t),G(t)) + a(t)\langle \nabla L(\theta_t,\gamma(t)) , M(\theta_t,v_t,\gamma(t),G(t)) \rangle\notag\\
&\qquad +a(t)\langle \nabla L(\theta_t,\gamma(t)) , E_{\theta_t}[(L(\theta_t) - L_t)G(t)^{-1}\nabla \log\pi_{\theta_t}(a_t|s_t)] \rangle \notag\\
&\qquad -M_{L}a(t)^2\Vert\delta_{t}G(t)^{-1}\nabla \log\pi_{\theta_t}(a_t|s_t)\Vert^2\notag\\
& = L(\theta_t,\gamma(t)) + a(t) I(O_t,\theta_t,L_t,v_t,\gamma(t),G(t)) + a(t)\langle \bar{M}(\theta_t,v_t,\gamma(t)) , M(\theta_t,v_t,\gamma(t),G(t))\rangle \notag \\
&\qquad+a(t)\langle  E_{\theta_t}[(V^{\theta_t,\gamma(t)}(s_{t+1}) -  \phi(s_{t+1})^Tv_t - V^{\theta_t,\gamma(t)}(s_t) + \phi(s_{t})^Tv_t)\nabla \log\pi_{\theta_t}(a_t|s_t)] , \notag\\
&\qquad \qquad \qquad \qquad E_{\theta_t}[\bar{h}(O_t,\theta_t,v_t,\gamma(t),G(t))] \rangle\notag\\
&\qquad - a(t)\langle (V^{\theta_t,\gamma(t)}(s_{t+1}) -  \phi(s_{t+1})^Tv_t - V^{\theta_t,\gamma(t)}(s_t) + \phi(s_{t})^Tv_t)\nabla \log\pi_{\theta_t}(a_t|s_t), \notag\\ &\qquad \qquad \qquad  E_{\theta_t}[\bar{h}(O_t,\theta_t,v_t,\gamma(t),G(t))] \rangle \notag\\
&\qquad + \underbrace{a(t)\langle (V^{\theta_t,\gamma(t)}(s_{t+1}) -  \phi(s_{t+1})^Tv_t - V^{\theta_t,\gamma(t)}(s_t) + \phi(s_{t})^Tv_t)\nabla \log\pi_{\theta_t}(a_t|s_t),   E_{\theta_t}[\bar{h}(O_t,\theta_t,v_t,\gamma(t),G(t))] \rangle}_{I_1}\notag\\
&\qquad + a(t)\langle \nabla L(\theta_t,\gamma(t)), E_{\theta_t}[(L(\theta_t,\gamma(t)) - L_t)G(t)^{-1}\nabla \log\pi_{\theta_t}(a_t|s_t)] \rangle \notag \\
&\qquad- M_{L}a(t)^2\Vert\delta_{t}G(t)^{-1}\nabla \log\pi_{\theta_t}(a_t|s_t)\Vert^2.\notag
\end{align}

Now,
\begin{align*}
    &a(t)\langle (V^{\theta_t,\gamma(t)}(s_{t+1}) -  \phi(s_{t+1})^Tv_t )\nabla \log\pi_{\theta_t}(a_t|s_t)  , M(\theta_t,v_t,\gamma(t),G(t)) \rangle\\
    & = a(t) \langle(V^{\theta_t,\gamma(t)}(s_{t+1})-V^{\theta_{t+1},\gamma(t+1)}(s_{t+1}) + V^{\theta_{t+1},\gamma(t+1)}(s_{t+1}) - \phi(s_{t+1})^T v_t)\nabla \log\pi_{\theta_t}(a_t|s_t) \\
    &\qquad \qquad \qquad \qquad,  M(\theta_t,v_t,\gamma(t),G(t))\rangle\\
    & = a(t) \langle(V^{\theta_t,\gamma(t)}(s_{t+1})-V^{\theta_{t+1},\gamma(t+1)}(s_{t+1}))\nabla\log\pi_{\theta_t}(a_t|s_t) ,  M(\theta_t,v_t,\gamma(t),G(t))\rangle\\
    &\qquad +  a(t)\langle (V^{\theta_{t+1},\gamma(t+1)}(s_{t+1}) - \phi(s_{t+1})^T v_t) \nabla \log\pi_{\theta_t}(a_t|s_t) ,  M(\theta_t,v_t,\gamma(t),G(t))\rangle\\
    & = a(t) \langle(V^{\theta_t,\gamma(t)}(s_{t+1})-V^{\theta_{t+1},\gamma(t+1)}(s_{t+1}))\nabla\log\pi_{\theta_t}(a_t|s_t) ,  M(\theta_t,v_t,\gamma(t),G(t))\rangle\\
    &\qquad +  a(t)\langle (V^{\theta_{t+1},\gamma(t+1)}(s_{t+1}) - \phi(s_{t+1})^T v_{t+1} + \phi(s_{t+1})^T v_{t+1} - \phi(s_{t+1})^T v_{t})\nabla \log\pi_{\theta_t}(a_t|s_t)\\
    &\qquad,  M(\theta_t,v_t,\gamma(t),G(t))\rangle\\
    & = a(t) \langle(V^{\theta_t,\gamma(t)}(s_{t+1})-V^{\theta_{t+1},\gamma(t+1)}(s_{t+1}))\nabla\log\pi_{\theta_t}(a_t|s_t) ,  M(\theta_t,v_t,\gamma(t),G(t))\rangle\\
    &\qquad + a(t)\langle ( \phi(s_{t+1})^T v_{t+1} - \phi(s_{t+1})^T v_{t})\nabla \log\pi_{\theta_t}(a_t|s_t) ,  M(\theta_t,v_t,\gamma(t),G(t))\rangle\\
    &\qquad + a(t) \langle(V^{\theta_{t+1},\gamma(t+1)}(s_{t+1}) - \phi(s_{t+1})^T v_{t+1})\nabla \log\pi_{\theta_t}(a_t|s_t) ,  M(\theta_t,v_t,\gamma(t),G(t)) \rangle \\
    & = a(t) \langle(V^{\theta_t,\gamma(t)}(s_{t+1})-V^{\theta_{t+1},\gamma(t+1)}(s_{t+1}))\nabla\log\pi_{\theta_t}(a_t|s_t) ,  M(\theta_t,v_t,\gamma(t),G(t))\rangle\\
    &\qquad + a(t)\langle ( \phi(s_{t+1})^T v_{t+1} - \phi(s_{t+1})^T v_{t})\nabla \log\pi_{\theta_t}(a_t|s_t) ,  M(\theta_t,v_t,\gamma(t),G(t))\rangle\\
    &\qquad + a(t+1) \langle(V^{\theta_{t+1},\gamma(t+1)}(s_{t+1}) - \phi(s_{t+1})^T v_{t+1})\nabla \log\pi_{\theta_{t+1}}(a_{t+1}|s_{t+1}) ,  M(\theta_{t + 1},v_{t+1},\gamma(t+1),G(t+1)) \rangle \\
    &\qquad + a(t) \langle(V^{\theta_{t+1},\gamma(t+1)}(s_{t+1}) - \phi(s_{t+1})^T v_{t+1})\nabla \log\pi_{\theta_t}(a_t|s_t) ,  M(\theta_t,v_t,\gamma(t),G(t)) \rangle\\
    & \qquad - a(t+1) \langle(V^{\theta_{t+1},\gamma(t+1)}(s_{t+1}) - \phi(s_{t+1})^T v_{t+1})\nabla \log\pi_{\theta_{t+1}}(a_{t+1}|s_{t+1}) ,  M(\theta_{t + 1},v_{t+1},\gamma(t+1),G(t+1)) \rangle . 
\end{align*}
Hence for the term $I_1$, we have,

\begin{align*}
    I_1 = &a(t) \langle(V^{\theta_t,\gamma(t)}(s_{t+1})-V^{\theta_{t+1},\gamma(t+1)}(s_{t+1}))\nabla\log\pi_{\theta_t}(a_t|s_t) ,  M(\theta_t,v_t,\gamma(t),G(t))\rangle\\
    &\qquad + a(t)\langle ( \phi(s_{t+1})^T v_{t+1} - \phi(s_{t+1})^T v_{t})\nabla \log\pi_{\theta_t}(a_t|s_t) ,  M(\theta_t,v_t,\gamma(t),G(t))\rangle\\
    &\qquad + a(t+1) \langle(V^{\theta_{t+1},\gamma(t+1)}(s_{t+1}) - \phi(s_{t+1})^T v_{t+1})\nabla \log\pi_{\theta_{t+1}}(a_{t+1}|s_{t+1}) ,  M(\theta_{t + 1},v_{t+1},\gamma(t+1),G(t+1)) \rangle \\
    &\qquad + a(t) \langle(V^{\theta_{t+1},\gamma(t+1)}(s_{t+1}) - \phi(s_{t+1})^T v_{t+1})\nabla \log\pi_{\theta_t}(a_t|s_t) ,  M(\theta_t,v_t,\gamma(t),G(t)) \rangle\\
    & \qquad - a(t+1) \langle(V^{\theta_{t+1},\gamma(t+1)}(s_{t+1}) - \phi(s_{t+1})^T v_{t+1})\nabla \log\pi_{\theta_{t+1}}(a_{t+1}|s_{t+1}) ,  M(\theta_{t + 1},v_{t+1},\gamma(t+1),G(t+1)) \rangle \\
    &\qquad + a(t)\langle (- V^{\theta_t,\gamma(t)}(s_t) + \phi(s_{t})^Tv_t)\nabla \log\pi_{\theta_t}(a_t|s_t) \notag , M(\theta_t,v_t,\gamma(t),G(t)) \rangle
\end{align*}
Putting this back in \ref{inequality_L(theta)}, we obtain,
\begin{align*}
    & L(\theta_{t+1},\gamma(t))\\
& \geq L(\theta_t,\gamma(t)) + a(t) I(O_t,\theta_t,L_t,v_t,\gamma(t),G(t)) + a(t)\langle \bar{M}(\theta_t,v_t,\gamma(t)) , M(\theta_t,v_t,\gamma(t),G(t))\rangle  \\
&\qquad+a(t)\langle  E_{\theta_t}[(V^{\theta_t,\gamma(t)}(s_{t+1}) -  \phi(s_{t+1})^Tv_t - V^{\theta_t,\gamma(t)}(s_t) + \phi(s_{t})^Tv_t)\nabla \log\pi_{\theta_t}(a_t|s_t)] , \\
&\qquad \qquad \qquad \qquad E_{\theta_t}[\bar{h}(O_t,\theta_t,v_t,\gamma(t),G(t))] \rangle\\
&\qquad - a(t)\langle (V^{\theta_t,\gamma(t)}(s_{t+1}) -  \phi(s_{t+1})^Tv_t - V^{\theta_t,\gamma(t)}(s_t) + \phi(s_{t})^Tv_t)\nabla \log\pi_{\theta_t}(a_t|s_t) \\ &\qquad \qquad \qquad , E_{\theta_t}[\bar{h}(O_t,\theta_t,v_t,\gamma(t),G(t))] \rangle \\
&\qquad + a(t) \langle(V^{\theta_t,\gamma(t)}(s_{t+1})-V^{\theta_{t+1},\gamma(t+1)}(s_{t+1}))\nabla\log\pi_{\theta_t}(a_t|s_t) ,  M(\theta_t,v_t,\gamma(t),G(t))\rangle\\
    &\qquad + a(t)\langle ( \phi(s_{t+1})^T v_{t+1} - \phi(s_{t+1})^T v_{t})\nabla \log\pi_{\theta_t}(a_t|s_t) ,  M(\theta_t,v_t,\gamma(t),G(t))\rangle\\
    &\qquad + a(t+1) \langle(V^{\theta_{t+1},\gamma(t+1)}(s_{t+1}) - \phi(s_{t+1})^T v_{t+1})\nabla \log\pi_{\theta_{t+1}}(a_{t+1}|s_{t+1}) ,  M(\theta_{t + 1},v_{t+1},\gamma(t+1),G(t+1)) \rangle \\
    &\qquad + a(t) \langle(V^{\theta_{t+1},\gamma(t+1)}(s_{t+1}) - \phi(s_{t+1})^T v_{t+1})\nabla \log\pi_{\theta_t}(a_t|s_t) ,  M(\theta_t,v_t,\gamma(t),G(t)) \rangle\\
    & \qquad - a(t+1) \langle(V^{\theta_{t+1},\gamma(t+1)}(s_{t+1}) - \phi(s_{t+1})^T v_{t+1})\nabla \log\pi_{\theta_{t+1}}(a_{t+1}|s_{t+1}) ,  M(\theta_{t + 1},v_{t+1},\gamma(t+1),G(t+1)) \rangle \\
    &\qquad + a(t)\langle (- V^{\theta_t,\gamma(t)}(s_t) + \phi(s_{t})^Tv_t)\nabla \log\pi_{\theta_t}(a_t|s_t) \notag , M(\theta_t,v_t,\gamma(t),G(t)) \rangle\\
&\qquad + a(t)\langle \nabla L(\theta_t,\gamma(t)) , E_{\theta_t}[(L(\theta_t,\gamma(t)) - L_t)G(t)^{-1}\nabla \log\pi_{\theta_t}(a_t|s_t)] \rangle \\
&\qquad- M_{L}a(t)^2\Vert\delta_{t}G(t)^{-1}\nabla \log\pi_{\theta_t}(a_t|s_t)\Vert^2.
\end{align*}
Now,
\begin{align*}
   a(t)\langle \bar{M}(\theta_t,v_t,\gamma(t)) , M(\theta_t,v_t,\gamma(t),G(t))\rangle &= a(t)\langle \bar{M}(\theta_t,v_t,\gamma(t)) , G(t)^{-1}\bar{M}(\theta_t,v_t,\gamma(t))\rangle\\
   &\geq a(t)\lambda_G\Vert \bar{M}(\theta_t,v_t,\gamma(t)) \Vert^2
\end{align*}

The above inequality holds as $G(t)^{-1}$ is a positive definite and symmetric matrix with minimum eigenvalue $\geq \lambda_G$.

Hence we have,
\begin{align*}
    & L(\theta_{t+1},\gamma(t))\\
& \geq L(\theta_t,\gamma(t)) + a(t) I(O_t,\theta_t,L_t,v_t,\gamma(t),G(t)) + a(t)\lambda_G\Vert \bar{M}(\theta_t,v_t,\gamma(t)) \Vert^2  \\
&\qquad+a(t)\langle  E_{\theta_t}[(V^{\theta_t,\gamma(t)}(s_{t+1}) -  \phi(s_{t+1})^Tv_t - V^{\theta_t,\gamma(t)}(s_t) + \phi(s_{t})^Tv_t)\nabla \log\pi_{\theta_t}(a_t|s_t)] , \\
&\qquad \qquad \qquad \qquad E_{\theta_t}[\bar{h}(O_t,\theta_t,v_t,\gamma(t),G(t))] \rangle\\
&\qquad - a(t)\langle (V^{\theta_t,\gamma(t)}(s_{t+1}) -  \phi(s_{t+1})^Tv_t - V^{\theta_t,\gamma(t)}(s_t) + \phi(s_{t})^Tv_t)\nabla \log\pi_{\theta_t}(a_t|s_t) \\ &\qquad \qquad \qquad , E_{\theta_t}[\bar{h}(O_t,\theta_t,v_t,\gamma(t),G(t))] \rangle \\
&\qquad + a(t) \langle(V^{\theta_t,\gamma(t)}(s_{t+1})-V^{\theta_{t+1},\gamma(t+1)}(s_{t+1}))\nabla\log\pi_{\theta_t}(a_t|s_t) ,  M(\theta_t,v_t,\gamma(t),G(t))\rangle\\
    &\qquad + a(t)\langle ( \phi(s_{t+1})^T v_{t+1} - \phi(s_{t+1})^T v_{t})\nabla \log\pi_{\theta_t}(a_t|s_t) ,  M(\theta_t,v_t,\gamma(t),G(t))\rangle\\
    &\qquad + a(t+1) \langle(V^{\theta_{t+1},\gamma(t+1)}(s_{t+1}) - \phi(s_{t+1})^T v_{t+1})\nabla \log\pi_{\theta_{t+1}}(a_{t+1}|s_{t+1}) ,  M(\theta_{t + 1},v_{t+1},\gamma(t+1),G(t+1)) \rangle \\
    &\qquad + a(t) \langle(V^{\theta_{t+1},\gamma(t+1)}(s_{t+1}) - \phi(s_{t+1})^T v_{t+1})\nabla \log\pi_{\theta_t}(a_t|s_t) ,  M(\theta_t,v_t,\gamma(t),G(t)) \rangle\\
    & \qquad - a(t+1) \langle(V^{\theta_{t+1},\gamma(t+1)}(s_{t+1}) - \phi(s_{t+1})^T v_{t+1})\nabla \log\pi_{\theta_{t+1}}(a_{t+1}|s_{t+1}) ,  M(\theta_{t + 1},v_{t+1},\gamma(t+1),G(t+1)) \rangle \\
    &\qquad + a(t)\langle (- V^{\theta_t,\gamma(t)}(s_t) + \phi(s_{t})^Tv_t)\nabla \log\pi_{\theta_t}(a_t|s_t) \notag , M(\theta_t,v_t,\gamma(t),G(t)) \rangle\\
&\qquad + a(t)\langle \nabla L(\theta_t,\gamma(t)) , E_{\theta_t}[(L(\theta_t,\gamma(t)) - L_t)G(t)^{-1}\nabla \log\pi_{\theta_t}(a_t|s_t)] \rangle \\
&\qquad- M_{L}a(t)^2\Vert\delta_{t}G(t)^{-1}\nabla \log\pi_{\theta_t}(a_t|s_t)\Vert^2.\\ \\
\Rightarrow  &\lambda_G\Vert \bar{M}(\theta_t,v_t,\gamma(t)) \Vert^2\\
&\leq \frac{1}{a(t)}(L(\theta_{t+1},\gamma(t)) -L(\theta_{t},\gamma(t)) + Q_t - Q_{t+1} ) - I(O_t,\theta_t,L_t,v_t,\gamma(t),G(t))\\
&\qquad -\langle  E_{\theta_t}[(V^{\theta_t,\gamma(t)}(s_{t+1}) -  \phi(s_{t+1})^Tv_t - V^{\theta_t,\gamma(t)}(s_t) + \phi(s_{t})^Tv_t)\nabla \log\pi_{\theta_t}(a_t|s_t)] , \\
&\qquad \qquad \qquad \qquad E_{\theta_t}[\bar{h}(O_t,\theta_t,v_t,\gamma(t),G(t))] \rangle\\
&\qquad +\langle (V^{\theta_t,\gamma(t)}(s_{t+1}) -  \phi(s_{t+1})^Tv_t - V^{\theta_t,\gamma(t)}(s_t) + \phi(s_{t})^Tv_t)\nabla \log\pi_{\theta_t}(a_t|s_t) \\ &\qquad \qquad \qquad , E_{\theta_t}[\bar{h}(O_t,\theta_t,v_t,\gamma(t),G(t))] \rangle \\
&\qquad - \langle(V^{\theta_t,\gamma(t)}(s_{t+1})-V^{\theta_{t+1},\gamma(t+1)}(s_{t+1}))\nabla\log\pi_{\theta_t}(a_t|s_t) ,  M(\theta_t,v_t,\gamma(t),G(t))\rangle\\
    &\qquad -\langle ( \phi(s_{t+1})^T v_{t+1} - \phi(s_{t+1})^T v_{t})\nabla \log\pi_{\theta_t}(a_t|s_t) ,  M(\theta_t,v_t,\gamma(t),G(t))\rangle\\
    &\qquad - \langle(V^{\theta_{t+1},\gamma(t+1)}(s_{t+1}) - \phi(s_{t+1})^T v_{t+1})\nabla \log\pi_{\theta_t}(a_t|s_t) ,  M(\theta_t,v_t,\gamma(t),G(t)) \rangle\\
    & \qquad + \frac{a(t+1)}{a(t)} \langle(V^{\theta_{t+1},\gamma(t+1)}(s_{t+1}) - \phi(s_{t+1})^T v_{t+1})\nabla \log\pi_{\theta_{t+1}}(a_{t+1}|s_{t+1}) ,  M(\theta_{t + 1},v_{t+1},\gamma(t+1),G(t+1)) \rangle \\
&\qquad -\langle \nabla L(\theta_t,\gamma(t)) , E_{\theta_t}[(L(\theta_t,\gamma(t)) - L_t)G(t)^{-1}\nabla \log\pi_{\theta_t}(a_t|s_t)] \rangle \\
&\qquad +M_{L}a(t)\Vert\delta_{t}G(t)^{-1}\nabla \log\pi_{\theta_t}(a_t|s_t)\Vert^2.
\end{align*}
where, in the above, $Q_t = a(t) \langle (V^{\theta_t,\gamma(t)}(s_t) - \phi(s_{t})^Tv_t)\nabla \log\pi_{\theta_t}(a_t|s_t) , M(\theta_t,v_t,\gamma(t),G(t))\rangle$.
Taking expectations on both sides and summing from $\tau_t$ to $t$, we obtain,
\begin{align}
    &\lambda_G\sum\limits_{k=\tau_t}^{t}E\Vert \bar{M}(\theta_k,v_k,\gamma(k)) \Vert^2 \notag\\
&\leq \underbrace{\sum\limits_{k=\tau_t}^{t}\frac{1}{a(k)}E[(L(\theta_{k+1},\gamma(k)) -L(\theta_{k},\gamma(k)) + Q_k - Q_{k+1} )]}_{I_1} - \underbrace{\sum\limits_{k=\tau_t}^{t}E[I(O_k,\theta_k,L_k,v_k,\gamma(k),G(k))]}_{I_2} \notag\\
&\qquad \underbrace{-\sum\limits_{k=\tau_t}^{t}E[\Xi(O_k,\theta_k,v_k,\gamma(k),G(k))]}_{I_3} \notag\\
&\qquad  \underbrace{-\sum\limits_{k=\tau_t}^{t}E[\langle(V^{\theta_k,\gamma(k)}(s_{k+1})-V^{\theta_{k+1},\gamma(k+1)}(s_{k+1}))\nabla\log\pi_{\theta_k}(a_k|s_k) ,  M(\theta_k,v_k,\gamma(k),G(k))\rangle]}_{I_4}\notag\\
    &\qquad \underbrace{-\sum\limits_{k=\tau_t}^{t}E[\langle ( \phi(s_{k+1})^T v_{k+1} - \phi(s_{k+1})^T v_{k})\nabla \log\pi_{\theta_k}(a_k|s_k) ,  M(\theta_k,v_k,\gamma(k),G(k))\rangle]}_{I_5}\notag\\
    &\qquad \underbrace{- \sum\limits_{k=\tau_t}^{t}E[\langle(V^{\theta_{k+1},\gamma(k+1)}(s_{k+1}) - \phi(s_{k+1})^T v_{k+1})\nabla \log\pi_{\theta_t}(a_k|s_k) ,  M(\theta_k,v_k,\gamma(k),G(k)) \rangle]}_{I_6} \notag\\
    & \qquad + \underbrace{\sum\limits_{k=\tau_t}^{t}\frac{a(k+1)}{a(k)} E[\langle(V^{\theta_{k+1},\gamma(k+1)}(s_{k+1}) - \phi(s_{k+1})^T v_{k+1})\nabla \log\pi_{\theta_{k+1}}(a_{k+1}|s_{k+1}),  M(\theta_{k + 1},v_{k+1},\gamma(k+1),G(k+1)) \rangle]}_{I_7}\notag \\
&\qquad \underbrace{-\sum\limits_{k=\tau_t}^{t}E[\langle \nabla L(\theta_k,\gamma(k)) , E_{\theta_k}[(L(\theta_k,\gamma(k)) - L_k)G(k)^{-1}\nabla \log\pi_{\theta_k}(a_k|s_k)] \rangle]}_{I_8}\notag \\
&\qquad +\underbrace{M_{L}\sum\limits_{k=\tau_t}^{t}a(k)E[\Vert\delta_{k}G(k)^{-1}\nabla \log\pi_{\theta_k}(a_k|s_k)\Vert^2]}_{I_9}\label{actor_convergence_ineq}.
\end{align}

Now, for term $I_1$ we have,
\begin{align*}
    &\sum\limits_{k=\tau_t}^{t}\frac{1}{a(k)}E[(L(\theta_{k+1},\gamma(k)) -L(\theta_{k},\gamma(k)) + Q_k - Q_{k+1} )]\\
    &= \sum\limits_{k=\tau_t}^{t}E[(A_{k+1}- A_k)/a(k)]\\
    &= \mathcal{O}(t^{\nu}),
\end{align*}
where $A_k = L(\theta_k,\gamma(k))- Q_k$.

For detail analysis of term $I_1$ please see \cite{wu2022finitetimeanalysistimescale}.

For term $I_2$, we have,
\begin{align*}
    I_2 = \mathcal{O}(\log^2 t \cdot t^{1-\nu}).
\end{align*}
For term $I_3$, we have,
\begin{align*}
    I_3 = \mathcal{O}(\log^2 t \cdot t^{1-\nu}),
\end{align*}

For analysis of terms $I_2$ and $I_3$ please see the convergence analysis of actor in \cite{Panda_Bhatnagar_2025}.

For term $I_4$ we have,

\begin{align*}
    &-\sum\limits_{k=\tau_t}^{t}E[\langle(V^{\theta_k,\gamma(k)}(s_{k+1})-V^{\theta_{k+1},\gamma(k+1)}(s_{k+1}))\nabla\log\pi_{\theta_k}(a_k|s_k) ,  M(\theta_k,v_k,\gamma(k),G(k))\rangle]\\
    &\mathcal{O}(\sum\limits_{k=\tau_t}^{t}a(k))\\
    & = \mathcal{O}(t^{1-\nu})
\end{align*}

For term $I_5$ we have,

\begin{align*}
    &-\sum\limits_{k=\tau_t}^{t}E[\langle ( \phi(s_{k+1})^T v_{k+1} - \phi(s_{k+1})^T v_{k})\nabla \log\pi_{\theta_k}(a_k|s_k) ,  M(\theta_k,v_k,\gamma(k),G(k))\rangle]\\
    &= \mathcal{O}(\sum\limits_{k=\tau_t}^{t}\Vert v_{k+1} - v_{k} \Vert) =  \mathcal{O}(\sum\limits_{k=\tau_t}^{t} b(k) )\\
    & = \mathcal{O}(t^{1-\sigma})\\
    & = \mathcal{O}(t^{1-\nu})
\end{align*}

For term $I_6$ and $I_7$ summed together we have,
\begin{align*}
    &- \sum\limits_{k=\tau_t}^{t}E[\langle(V^{\theta_{k+1},\gamma(k+1)}(s_{k+1}) - \phi(s_{k+1})^T v_{k+1})\nabla \log\pi_{\theta_k}(a_k|s_k) ,  M(\theta_k,v_k,\gamma(k),G(k)) \rangle]\\
    &\qquad +\sum\limits_{k=\tau_t}^{t}\frac{a(k+1)}{a(k)} E[\langle(V^{\theta_{k+1},\gamma(k+1)}(s_{k+1}) - \phi(s_{k+1})^T v_{k+1})\nabla \log\pi_{\theta_{k+1}}(a_{k+1}|s_{k+1})\\
    &\qquad \qquad \qquad,  M(\theta_{k + 1},v_{k+1},\gamma(k+1),G(k+1)) \rangle]\\
    & = \mathcal{O}(t^{1-\nu})
\end{align*}

For term $I_8$ we have,

\begin{align*}
    -&\sum\limits_{k=\tau_t}^{t}E[\langle \nabla L(\theta_k,\gamma(k)) , E_{\theta_k}[(L(\theta_k,\gamma(k)) - L_k)G(k)^{-1}\nabla \log\pi_{\theta_k}(a_k|s_k)] \rangle]\\
    =&\ \sum\limits_{k=\tau_t}^{t}E[\langle E_{\theta_k}[(r(s,a,\gamma(k)) - L(\theta_k,\gamma(k)) + V^{\theta_k,\gamma(k)}(s^{'}) - V^{\theta_k,\gamma(k)}(s))\nabla \log \pi_{\theta_k}(a|s)] \\
    &\qquad \qquad \qquad, ( L_k - L(\theta_k,\gamma(k)))G(k)^{-1}E_{\theta_k}[\nabla \log\pi_{\theta_k}(a_k|s_k)]] \rangle]\notag\\
    = &\ \sum\limits_{k=\tau_t}^{t}E[\langle E_{\theta_k}[(r(s,a,\gamma(k)) - L(\theta_k,\gamma(k)) + (\phi(s^{'}) - \phi(s))^Tv(k))\nabla \log \pi_{\theta_k}(a|s)]\\
    &\qquad, ( L_k - L(\theta_k,\gamma(k)))G(k)^{-1}E_{\theta_k}[\nabla \log\pi_{\theta_k}(a_k|s_k)]] \rangle]\notag\\
    &+ \sum\limits_{k=\tau_t}^{t}E[\langle E_{\theta_k}[( V^{\theta_k,\gamma(k)}(s^{'})- \phi(s^{'})^Tv_{k} + \phi(s)^Tv_{k} - V^{\theta_k,\gamma(k)}(s))\nabla \log \pi_{\theta_k}(a|s)] \\
    &\qquad \qquad, ( L_k - L(\theta_k,\gamma(k)))G(k)^{-1}E_{\theta_k}[\nabla \log\pi_{\theta_k}(a_k|s_k)]] \rangle]\notag\\
    \leq  &\ DU_{G}\sqrt{\sum\limits_{k=\tau_t}^{t}E\Vert \bar{M}(\theta_k,v_k,\gamma(k))\Vert^2}\sqrt{\sum\limits_{k=\tau_t}^{t}E\vert L_k - L(\theta_k,\gamma(k)) \vert^2} + I_{8a}.
\end{align*}

where 
\begin{align*}
    I_{8a} = &\sum\limits_{k=\tau_t}^{t}E[\langle E_{\theta_k}[( V^{\theta_k,\gamma(k)}(s^{'})- \phi(s^{'})^Tv_{k} + \phi(s)^Tv_{k} - V^{\theta_k,\gamma(k)}(s))\nabla \log \pi_{\theta_k}(a|s)] \\
    &\qquad \qquad, ( L_k - L(\theta_k,\gamma(k)))G(k)^{-1}E_{\theta_k}[\nabla \log\pi_{\theta_k}(a_k|s_k)]] \rangle]
\end{align*}

Now, for the term $I_{8a}$, we have,

\begin{align*}
   I_{8a}= I_{8a1} + I_{8a2}.
\end{align*}

where,

\begin{align*}
     I_{8a1} = \sum\limits_{k=\tau_t}^{t}E[\langle E_{\theta_k}[ \bar{W}(O_k,\theta_k,v_k,\gamma(k))] - \bar{W}(O_k,\theta_k,v_k,\gamma(k))]  , ( L_k - L(\theta_k,\gamma(k)))G(k)^{-1}E_{\theta_k}[\nabla \log\pi_{\theta_k}(a_k|s_k)]] \rangle]
\end{align*}

and,

\begin{align*}
    I_{8a2} &= \sum\limits_{k=\tau_t}^{t}E[\langle ( V^{\theta_k,\gamma(k)}(s_{k+1})- \phi(s_{k+1})^Tv_{k} + \phi(s_k)^Tv_{k} - V^{\theta_k,\gamma(k)}(s_k))\nabla \log \pi_{\theta_k}(a_k|s_k)\\
    &\qquad \qquad , ( L_k - L(\theta_k,\gamma(k)))G(k)^{-1}E_{\theta_k}[\nabla \log\pi_{\theta_k}(a_k|s_k)]] \rangle]
\end{align*}

After analysing the term $I_{8a1}$, similar to , we get,

\begin{align*}
    I_{8a1} = \mathcal{O}(\log ^{2} t \cdot t^{1-\nu}).
\end{align*}

For the term $I_{8a2}$, we have,
\begin{align*}
    I_{8a2} = \mathcal{O}( t^{1-\nu}) + \mathcal{O}(t^{\nu}) 
\end{align*}

Hence, putting all these results back in , we obtain,

\begin{align*}
    I_8 \leq  &\ DU_{G}\sqrt{\sum\limits_{k=\tau_t}^{t}E\Vert \bar{M}(\theta_k,v_k,\gamma(k))\Vert^2}\sqrt{\sum\limits_{k=\tau_t}^{t}E\vert L_k - L(\theta_k,\gamma(k)) \vert^2} + \mathcal{O}(\log ^2 t \cdot t^{1-\nu})+ \mathcal{O}(t^{\nu}).
\end{align*}

For term $I_9$, we have,

\begin{align*}
  &M_{L}\sum\limits_{k=\tau_t}^{t}a(k)E[\Vert\delta_{k}G(k)^{-1}\nabla \log\pi_{\theta_k}(a_k|s_k)\Vert^2] \\
    & = \mathcal{O}(t^{1-\nu}).
\end{align*}

After gathering all the terms we have,
\begin{align*}
    &\lambda_G\sum\limits_{k=\tau_t}^{t}E\Vert \bar{M}(\theta_k,v_k,\gamma(k)) \Vert^2\\
&= \mathcal{O}(t^{\nu}) + \mathcal{O}(\log^2 t \cdot t^{1-\nu}) +  BU_{G}\sqrt{\sum\limits_{k=\tau_t}^{t}E\Vert \bar{M}(\theta_k,v_k,\gamma(k))\Vert^2}\sqrt{\sum\limits_{k=\tau_t}^{t}E\vert L_k - L(\theta_k,\gamma(k)) \vert^2}\\
&\Rightarrow\sum\limits_{k=\tau_t}^{t}E\Vert \bar{M}(\theta_k,v_k,\gamma(k)) \Vert^2 = \mathcal{O}(t^{\nu}) + \mathcal{O}(\log^2 t \cdot t^{1-\nu}) \\
&\qquad + \frac{BU_{G}}{\lambda_G}\sqrt{\sum\limits_{k=\tau_t}^{t}E\Vert \bar{M}(\theta_k,v_k,\gamma(k))\Vert^2}\sqrt{\sum\limits_{k=\tau_t}^{t}E\vert L_k - L(\theta_k,\gamma(k)) \vert^2}
\end{align*}
After applying the squaring technique, we obtain,

\begin{align*}
   \sum\limits_{k=\tau_t}^{t}E\Vert \bar{M}(\theta_k,v_k,\gamma(k)) \Vert^2 &= \mathcal{O}(t^{\nu}) + \mathcal{O}(\log^2 t \cdot t^{1-\nu}) + 2\frac{B^2U_{G}^2}{\lambda_G^2}\sum\limits_{k=\tau_t}^{t}E\vert L_k - L(\theta_k,\gamma(k)) \vert^2\\
   &= \mathcal{O}(t^{\nu}) + \mathcal{O}(\log^2 t \cdot t^{1-\nu}) +  \mathcal{O}(t^{1 + \nu - \beta}) + \frac{4B^2U_{G}^2}{\lambda_G^2}\frac{(G + U_{w})\frac{c_a}{c_d}}{\bigg( 1 - \frac{c_a}{c_d}U_w B\bigg)}\sum\limits_{k=\tau_t}^{t}\mathbb{E}\Vert M(\theta_k,v_k,\gamma(k))\Vert^2.\\
\end{align*}

\begin{align*}
    \Rightarrow \bigg( 1- \frac{4B^2U_{G}^2}{\lambda_G^2}\frac{(G + U_{w})\frac{c_a}{c_d}}{\bigg( 1 - \frac{c_a}{c_d}U_w B\bigg)} \bigg)\sum\limits_{k=\tau_t}^{t}E\Vert \bar{M}(\theta_k,v_k,\gamma(k)) \Vert^2 = \mathcal{O}(t^{\nu}) + \mathcal{O}(\log^2 t \cdot t^{1-\nu}) +  \mathcal{O}(t^{1 + \nu - \beta})
\end{align*}
 Now if we select the values for $c_a$ and $c_d$ such that $\frac{4B^2U_{G}^2}{\lambda_G^2}\frac{(G + U_{w})\frac{c_a}{c_d}}{\bigg( 1 - \frac{c_a}{c_d}U_w B\bigg)} < 1$, we shall obtain,

 \begin{align*}
     \sum\limits_{k=\tau_t}^{t}E\Vert \bar{M}(\theta_k,v_k,\gamma(k)) \Vert^2 = \mathcal{O}(t^{\nu}) + \mathcal{O}(\log^2 t \cdot t^{1-\nu}) +  \mathcal{O}(t^{1 + \nu - \beta})
 \end{align*}

 Dividing by $(1 + t - \tau_t)$  and assuming $t \geq 2\tau_t + 1$, we have,

 \begin{align}
      \frac{1}{1 + t -\tau_t}\sum\limits_{k=\tau_t}^{t}E\Vert \bar{M}(\theta_k,v_k,\gamma(k)) \Vert^2 = \mathcal{O}(t^{\nu - 1}) + \mathcal{O}(\log^2 t \cdot t^{-\nu}) +  \mathcal{O}(t^{\nu - \beta}).\label{final_ineq_actor}
 \end{align}

  As seen earlier, the inequalities that need to be satisfied for the inequalities (\ref{final_ineq_average_cost}) and (\ref{final_ineq_actor}) to hold are the following:
\begin{align}
    &\frac{c_a}{c_d} < \frac{1}{U_{w}B}\label{ineq3},\\
    &\frac{2BU_G(G + U_w)}{\lambda_G(1 - \frac{c_a}{c_d}U_w B)}\frac{c_a}{c_d} <1.
    \label{ineq4}
\end{align} 

Rearranging inequality (\ref{ineq4}), we get

\begin{align}
    &2B\frac{U_G}{\lambda_G}(G + U_w)\frac{c_a}{c_d} < 1 - \frac{c_a}{c_d}U_w B\notag\\
    \Rightarrow &(2B\frac{U_G}{\lambda_G}(G + U_w) + U_{w}B)\frac{c_a}{c_d} < 1\notag\\
    \Rightarrow & \frac{c_a}{c_d} < \frac{1}{2B\frac{U_G}{\lambda_G}(G + U_w) + U_{w}B}.
    \label{ineq5}
\end{align}

Now, from (\ref{ineq3}) and (\ref{ineq5}), we have,
\begin{align*}
    \frac{c_a}{c_d} < \min \bigg( \frac{1}{2B\frac{U_G}{\lambda_G}(G + U_w) + U_{w}B} , \frac{1}{U_{w}B} \bigg).
\end{align*}

Since ${\displaystyle \frac{1}{2B\frac{U_G}{\lambda_G}(G + U_w) + U_{w}B} < \frac{1}{U_{w}B}}$, we need to choose $c_a$ and $c_d$ such that ${\displaystyle \frac{c_a}{c_d} <  \frac{1}{2B\frac{U_G}{\lambda_G}(G + U_w) + U_{w}B}}$.

\subsection{Convergence of the Critic}
\label{critic_convergence_proof}
Recall that we have the following update rule for the critic:
\begin{align*}
    v_{n+1} = \Gamma(v_{n} + b(n)\delta_{n}f_{s_n}).
\end{align*}

Notations:
\begin{align}
    \begin{split}
         O_t :&= (s_t , a_t , s_{t+1})\\
        z_t &:= v_t - v^{*}(\theta_t,\gamma(t))\\
        g(O_t,v_t,\theta_t,\gamma(t)) &:= (r_t - L(\theta_t,\gamma(t)) + \phi(s_{t+1})^{\top} v_{t} - \phi(s_t)^{\top} v_{t})\phi(s_t)\\
        \bar{g}(v_t,\theta_t,\gamma(t)) &:= E_{s \sim \mu_{\theta_t},a \sim \pi_{\theta_t},s^{'} \sim p(.|s,a)}[(r(s,a,\gamma(t)) - L(\theta_t,\gamma(t)) + \phi(s^{'})^{\top} v_{t} - \phi(s)^{\top} v_{t})\phi(s)]\\
        \bar{Q}(O_t,v_t,\theta_t,\gamma(t)) &:= \langle z_t , g(O_t,v_t,\theta_t,\gamma(t)) - \bar{g}(v_t,\theta_t,\gamma(t))\rangle \\
        \bar{U}(O_t,v_t,\theta_t,\gamma(t),G(k)) &:= (\nabla v_t^*)^T (r(s_t,a_t,\gamma(t)) - L(\theta_t,\gamma(t)) + \phi(s_{t+1})^{\top} v_{t} - \phi(s_t)^{\top} v_{t}) G(k)^{-1}\nabla_{\theta} \log \pi_{\theta_{t}}(a_t|s_t)\\
        \Psi(O_t,v_t,\theta_t,\gamma(t),G(k)) &:= \langle z_t ,  E_{\theta_t}[\bar{U}(O_t,v_t,\theta_t,\gamma(t),G(k))] - \bar{U}(O_t,v_t,\theta_t,\gamma(t),G(k))\rangle.
    \end{split}
\end{align}

\subsubsection*{Proof of 
Theorem \ref{critic_convergence_1}:}
From the critic update rule, we have,
\begin{align*}
    \Vert z_{t+1} \Vert^2 &= \Vert  v_{t+1} - v^{*}(\theta_{t+1},\gamma(t+1))\Vert^2\\
    &=\Vert \Gamma(v_t +b(t)\delta_t\phi(s_t)) - v^{*}(\theta_{t+1},\gamma(t+1))  \Vert^2\\
    &\leq \Vert v_t + b(t)\delta_t\phi(s_t) - v^{*}(\theta_{t+1},\gamma(t+1)) \Vert^2\\
    &= \Vert z_t + b(t)\delta_t\phi(s_t)  + v^{*}(\theta_t,\gamma(t))- v^{*}(\theta_{t+1},\gamma(t+1)) \Vert^2\\
    &\leq  \Vert z_t \Vert^2 + 2b(t)\langle z_t ,  \delta_t\phi(s_t) \rangle + 2\langle z_t , v^{*}(\theta_t,\gamma(t)) - v^{*}(\theta_{t+1},\gamma(t+1)) \rangle + 2b(t)^2\delta_t^2\Vert \phi(s_t)\Vert^2 \\
    &\qquad + 2\Vert v^{*}(\theta_t,\gamma(t)) - v^{*}(\theta_{t+1},\gamma(t+1))  \Vert^2\\
    &=\Vert z_t \Vert^2 + 2b(t)\langle z_t ,  \delta_t\phi(s_t) - E_{\theta_t}[\delta_t\phi(s_t)]\rangle +
    2b(t)\langle z_t ,   E_{\theta_t}[\delta_t\phi(s_t)]\rangle\\
    &\qquad +
    2\langle z_t , v^{*}(\theta_t,\gamma(t)) - v^{*}(\theta_{t+1},\gamma(t+1)) \rangle
    + 2b(t)^2\delta_t^2\Vert \phi(s_t)\Vert^2 + 2\Vert v^{*}(\theta_t,\gamma(t)) - v^{*}(\theta_{t+1},\gamma(t+1))  \Vert^2\\
    &\leq \Vert z_t \Vert^2 + 2b(t)\langle z_t ,  \delta_t\phi(s_t) - E_{\theta_t}[\delta_t\phi(s_t)]\rangle -
    2b(t)\lambda\Vert z_t\Vert^2 +
    2\langle z_t , v^{*}(\theta_t,\gamma(t)) - v^{*}(\theta_{t+1},\gamma(t+1)) \rangle\\
    &\qquad+ 2b(t)^2\delta_t^2\Vert \phi(s_t)\Vert^2 + 2\Vert v^{*}(\theta_t,\gamma(t)) - v^{*}(\theta_{t+1},\gamma(t+1))  \Vert^2.
\end{align*}

After rearranging the terms we obtain,
\begin{align*}
    \lambda\Vert z_t \Vert^2 &\leq \frac{1}{2b(t)}(\Vert z_t \Vert^2 - \Vert z_{t+1} \Vert^2) + \langle z_t ,  \delta_t\phi(s_t) - E_{\theta_t}[\delta_t\phi(s_t)]\rangle + \frac{1}{b(t)}\langle z_t , v^{*}(\theta_t,\gamma(t)) - v^{*}(\theta_{t+1},\gamma(t+1)) \\
    &\qquad + (\nabla v_t^*)^T(\theta_{t+1} - \theta_t)\rangle
    + \frac{1}{b(t)}\langle z_t , (\nabla v_t^*)^T(\theta_{t} - \theta_{t+1}) \rangle +b(t)\delta_t^2\Vert \phi(s_t)\Vert^2 \\
    &\qquad + \frac{1}{b(t)}\Vert v^{*}(\theta_t,\gamma(t)) - v^{*}(\theta_{t+1},\gamma(t+1))  \Vert^2.
\end{align*}
Taking summation of terms from indices $\tau_t$ to $t$ we have,
\begin{align}
    \lambda\sum\limits_{k=\tau_t}^{t}E\Vert z_k \Vert^2 &\leq \underbrace{\sum\limits_{k=\tau_t}^{t}\frac{1}{2b(k)}(E\Vert z_k \Vert^2 - E\Vert z_{k+1} \Vert^2)}_{I_1} + \underbrace{\sum\limits_{k=\tau_t}^{t}E[\langle z_k ,  \delta_k\phi(s_k) - E_{\theta_k}[\delta_k\phi(s_k)]\rangle]}_{I_2}\notag\\
    &\qquad + \underbrace{\sum\limits_{k=\tau_t}^{t}\frac{1}{b(k)}E[\langle z_k , v^{*}(\theta_k,\gamma(k)) - v^{*}(\theta_{k+1},\gamma(k+1)) 
    + (\nabla v_k^*)^T(\theta_{k+1} - \theta_k)\rangle]}_{I_3}\notag\\
    &\qquad+ \underbrace{\sum\limits_{k=\tau_t}^{t}\frac{1}{b(k)}E[\langle z_k , (\nabla v_k^*)^T(\theta_{k} - \theta_{k+1}) \rangle]}_{I_4} +\underbrace{\sum\limits_{k=\tau_t}^{t}b(k)E[\delta_k^2\Vert \phi(s_k)\Vert^2]}_{I_5}\notag \\
    &\qquad + \underbrace{\sum\limits_{k=\tau_t}^{t}\frac{1}{b(k)}E\Vert v^{*}(\theta_k,\gamma(k)) - v^{*}(\theta_{k+1},\gamma(k+1))  \Vert^2}_{I_6}.\label{critic_convg_ineq}
\end{align}

For term $I_1$ we have,
\begin{align*}
    &\sum\limits_{k=\tau_t}^{t}\frac{1}{2b(k)}(E\Vert z_k \Vert^2 - E\Vert z_{k+1} \Vert^2) = \mathcal{O}( t^{\sigma})
    \end{align*}

For term $I_2$ we have,
\begin{align*}
    I_2 = \mathcal{O}(\log^2 t \cdot t^{1-\nu})
\end{align*}

For term $I_3$ above, we have,
\begin{align*}
   &\sum\limits_{k=\tau_t}^{t}\frac{1}{b(k)}E[\langle z_k , v^{*}(\theta_k,\gamma(k)) - v^{*}(\theta_{k+1},\gamma(k+1)) 
    + (\nabla v_k^*)^T(\theta_{k+1} - \theta_k)\rangle]\\
    &= \sum\limits_{k=\tau_t}^{t}\frac{1}{b(k)}E[\langle z_k , v^{*}(\theta_k,\gamma(k)) - v^{*}(\theta_{k+1},\gamma(k))  + (\nabla v_k^*)^T(\theta_{k+1} - \theta_k)\rangle]\\
    &\qquad + \sum\limits_{k=\tau_t}^{t}\frac{1}{b(k)}E[\langle z_k , v^{*}(\theta_{k+1},\gamma(k)) - v^{*}(\theta_{k+1},\gamma(k+1))\rangle]\\
    &\leq \frac{L_{m}}{2}\sum_{k=\tau_t}^{t}\frac{1}{b(k)}E\Vert z_k\Vert \Vert \theta_{k+1} - \theta_k \Vert^2 + \sum\limits_{k=\tau_t}^{t}\frac{1}{b(k)}E[\langle z_k , v^{*}(\theta_{k+1},\gamma(k)) - v^{*}(\theta_{k+1},\gamma(k+1))\rangle]\\
    &=\mathcal{O}(\sum_{k=\tau_t}^{t}\frac{a(k)^2}{b(k)}) + \mathcal{O}(\sum_{k=\tau_t}^{t}\frac{c(k)}{b(k)})\\
    &= \mathcal{O}(t^{\sigma - 2\nu + 1}) + \mathcal{O}(t^{\sigma - \beta + 1})
\end{align*}

For term $I_4$ we have,

\begin{align*}
    &\sum\limits_{k=\tau_t}^{t}\frac{1}{b(k)}E[\langle z_k , (\nabla v_k^*)^T(\theta_{k} - \theta_{k+1}) \rangle]\\
    &= -\sum_{k=\tau_t}^{t} \frac{1}{b(k)}E\langle z_k , (\nabla v_k^*)^T a(k) \delta_k G(k)^{-1}\nabla_{\theta} \log \pi_{\theta_{k}}(a_k|s_k) \rangle\\
    &= -\sum_{k=\tau_t}^{t} \frac{1}{b(k)}E\langle z_k , (\nabla v_k^*)^T a(k) (r(s_k,a_k,\gamma(k)) - L_k + \phi(s_{k+1})^{\top} v_{k} - \phi(s_k)^{\top} v_{k})G(k)^{-1} \nabla_{\theta} \log \pi_{\theta_{k}}(a_k|s_k) \rangle\\
    &= -\sum_{k=\tau_t}^{t} \frac{1}{b(k)}E\langle z_k , (\nabla v_k^*)^T a(k) (r(s_k,a_k,\gamma(k)) - L(\theta_k,\gamma(k)) + \phi(s_{k+1})^{\top} v_{k} - \phi(s_k)^{\top} v_{k}) G(k)^{-1}\nabla_{\theta} \log \pi_{\theta_{k}}(a_k|s_k) \rangle\\
    &\qquad - \sum_{k=\tau_t}^{t} \frac{1}{b(k)}E\langle z_k , (\nabla v_k^*)^T a(k) (L(\theta_k,\gamma(k)) - L_k) G(k)^{-1}\nabla_{\theta} \log \pi_{\theta_{k}}(a_k|s_k) \rangle\\
    & = -\sum_{k=\tau_t}^{t} \frac{a(k)}{b(k)}E\langle z_k , (\nabla v_k^*)^T (r(s_k,a_k,\gamma(k)) - L(\theta_k,\gamma(k)) + \phi(s_{k+1})^{\top} v_{k} - \phi(s_k)^{\top} v_{k}) G(k)^{-1}\nabla_{\theta} \log \pi_{\theta_{k}}(a_k|s_k) \rangle\\
    &\qquad + \sum_{k=\tau_t}^{t} \frac{a(k)}{b(k)}E\langle z_k , (\nabla v_k^*)^T E_{\theta_k}[(r(s_k,a_k,\gamma(k)) - L(\theta_k,\gamma(k)) + \phi(s_{k+1})^{\top} v_{k} - \phi(s_k)^{\top} v_{k}) G(k)^{-1}\nabla_{\theta} \log \pi_{\theta_{k}}(a_k|s_k) ]\rangle\\
    &\qquad -  \sum_{k=\tau_t}^{t} \frac{a(k)}{b(k)}E\langle z_k , (\nabla v_k^*)^T E_{\theta_k}[(r(s_k,a_k,\gamma(k)) - L(\theta_k,\gamma(k)) + \phi(s_{k+1})^{\top} v_{k} - \phi(s_k)^{\top} v_{k}) G(k)^{-1}\nabla_{\theta} \log \pi_{\theta_{k}}(a_k|s_k) ]\rangle\\
    &\qquad - \sum_{k=\tau_t}^{t} \frac{a(k)}{b(k)}E\langle z_k , (\nabla v_k^*)^T  (L(\theta_k,\gamma(k)) - L_k) G(k)^{-1}\nabla_{\theta} \log \pi_{\theta_{k}}(a_k|s_k) \rangle\\
    &= \sum_{k=\tau_t}^{t}E[\frac{a(k)}{b(k)}\Psi(O_k,v_k,\theta_k,\gamma(k),G(k))]\\
    &\qquad-  \sum_{k=\tau_t}^{t} \frac{a(k)}{b(k)}E\langle z_k , (\nabla v_k^*)^T E_{\theta_k}[(r(s_k,a_k,\gamma(k)) - L(\theta_k,\gamma(k)) + \phi(s_{k+1})^{\top} v_{k} - \phi(s_k)^{\top} v_{k}) G(k)^{-1}\nabla_{\theta} \log \pi_{\theta_{k}}(a_k|s_k) ]\rangle\\
    &\qquad - \sum_{k=\tau_t}^{t} \frac{a(k)}{b(k)}E\langle z_k , (\nabla v_k^*)^T  (L(\theta_k,\gamma(k)) - L_k) G(k)^{-1}\nabla_{\theta} \log \pi_{\theta_{k}}(a_k|s_k) \rangle\\
    &\leq \frac{c_a}{c_b}(1+t)^{\sigma - \nu}\sum_{k=\tau_t}^{t}\vert E[\Psi(O_k,v_k,\theta_k,\gamma(k),G(k))]\vert + L_{*}U_{G}\sqrt{\sum\limits_{k=\tau_t}^{t}E\Vert z_k\Vert^2}\sqrt{\sum\limits_{k=\tau_t}^{t}\frac{a(k)^2}{b(k)^2}E[\Vert \bar{M}(\theta_k,v_k,\gamma(k))\Vert^2]}\\
    &\qquad + L_{*}BU_{G}\sqrt{\sum\limits_{k=\tau_t}^{t}E\Vert z_k\Vert^2}\sqrt{\sum\limits_{k=\tau_t}^{t}\frac{a(k)^2}{b(k)^2}E[(L(\theta_k,\gamma(k))- L_k)^2]} \\
    &= \mathcal{O}(\log^2 t \cdot t^{ \sigma - 2\nu + 1}) + L_{*}U_{G}\sqrt{\sum\limits_{k=\tau_t}^{t}E\Vert z_k\Vert^2}\sqrt{\sum\limits_{k=\tau_t}^{t}\frac{a(k)^2}{b(k)^2}E[\Vert \bar{M}(\theta_k,v_k,\gamma(k))\Vert^2]}\\
    &\qquad + L_{*}BU_{G}\sqrt{\sum\limits_{k=\tau_t}^{t}E\Vert z_k\Vert^2}\sqrt{\sum\limits_{k=\tau_t}^{t}\frac{a(k)^2}{b(k)^2}E[(L(\theta_k,\gamma(k))- L_k)^2]}.
    \end{align*}

    For the term $I_5$, we have,
\begin{align*}
    \sum\limits_{k=\tau_t}^{t}b(k)E[\delta_k^2\Vert \phi(s_k)\Vert^2] = \mathcal{O}(t^{1-\sigma}).
\end{align*}

Next, for the term $I_6$, we have,

\begin{align*}
   \sum\limits_{k=\tau_t}^{t}\frac{1}{b(k)}E\Vert v^{*}(\theta_k,\gamma(k)) - v^{*}(\theta_{k+1},\gamma(k+1))  \Vert^2 = \mathcal{O}(t^{1 - 2\nu +\sigma}). 
\end{align*}

Thus, after collecting all the terms we have,\\
\begin{align*}
    \sum\limits_{k=\tau_t}^{t}E\Vert z_k \Vert^2 &\leq \mathcal{O}(t^{\sigma}) +\mathcal{O}(\log^2 t \cdot t^{1-\nu}) + \mathcal{O}(t^{1 + \nu -\beta}) +\mathcal{O}(\log^2 t \cdot t^{\sigma - 2\nu + 1}) + \mathcal{O}(t^{\sigma - \beta + 1})\\
    &\qquad + L_{*}U_{G}\sqrt{\sum\limits_{k=\tau_t}^{t}E\Vert z_k\Vert^2}\sqrt{\sum\limits_{k=\tau_t}^{t}\frac{a(k)^2}{b(k)^2}E[\Vert \bar{M}(\theta_k,v_k,\gamma(k))\Vert^2]}\\
    &\qquad + L_{*}BU_{G}\sqrt{\sum\limits_{k=\tau_t}^{t}E\Vert z_k\Vert^2}\sqrt{\sum\limits_{k=\tau_t}^{t}\frac{a(k)^2}{b(k)^2}E[(L(\theta_k,\gamma(k))- L_k)^2]}\\
    &=  \mathcal{O}(t^{\sigma}) +\mathcal{O}(\log^2 t \cdot t^{\sigma - 2\nu + 1}) + \mathcal{O}(t^{\sigma - \beta + 1})\\
    &\qquad + L_{*}U_{G}\sqrt{\sum\limits_{k=\tau_t}^{t}E\Vert z_k\Vert^2}\sqrt{\sum\limits_{k=\tau_t}^{t}\frac{a(k)^2}{b(k)^2}E[\Vert \bar{M}(\theta_k,v_k,\gamma(k))\Vert^2]}\\
    &\qquad + L_{*}BU_{G}\sqrt{\sum\limits_{k=\tau_t}^{t}E\Vert z_k\Vert^2}\sqrt{\sum\limits_{k=\tau_t}^{t}\frac{a(k)^2}{b(k)^2}E[(L(\theta_k,\gamma(k))- L_k)^2]}\\
\end{align*}

After applying the squaring technique, we obtain,
\begin{align*}
     \sum\limits_{k=\tau_t}^{t}E\Vert z_k \Vert^2 &= \mathcal{O}(t^{\sigma}) +\mathcal{O}(\log^2 t \cdot t^{\sigma-2\nu + 1}) + \mathcal{O}(t^{1 + \sigma -\beta})+ L_{*}U_{G}\sqrt{\sum\limits_{k=\tau_t}^{t}E\Vert z_k\Vert^2}\sqrt{\sum\limits_{k=\tau_t}^{t}\frac{a(k)^2}{b(k)^2}E[\Vert \bar{M}(\theta_k,v_k,\gamma(k))\Vert^2]}\\
     &\qquad +\mathcal{O}\bigg(\sum\limits_{k=\tau_t}^{t}\frac{a(k)^2}{b(k)^2}E[(L(\theta_k,\gamma(k))- L_k)^2]\bigg)\\
\end{align*}
Again applying the squaring technique we have,
\begin{align*}
   \sum\limits_{k=\tau_t}^{t}E\Vert z_k \Vert^2 &= \mathcal{O}(t^{\sigma}) +\mathcal{O}(\log^2 t \cdot t^{\sigma-2\nu + 1}) + \mathcal{O}(t^{1 + \sigma -\beta}) + \mathcal{O}\bigg(\sum\limits_{k=\tau_t}^{t}\frac{a(k)^2}{b(k)^2}E[\Vert \bar{M}(\theta_k,v_k,\gamma(k))\Vert^2]\bigg) \\
    &\qquad + \mathcal{O}\bigg(\sum\limits_{k=\tau_t}^{t}\frac{a(k)^2}{b(k)^2}E[(L(\theta_k,\gamma(k))- L_k)^2]\bigg)\\
\end{align*}

Putting the results of the convergence of average cost estimate and actor  in the above equality we have,
\begin{align*}
    \frac{1}{1+t-\tau_t}\sum\limits_{k=\tau_t}^{t}E\Vert z_k \Vert^2 &= \mathcal{O}(t^{\sigma - 1}) + \mathcal{O}(\log^2 t \cdot t^{\sigma - 2\nu}) + \mathcal{O}(t^{\sigma - \beta}) + \mathcal{O}(t^{2\sigma - \nu - 1}) + \mathcal{O}(\log^2 t \cdot t^{2\sigma - 3\nu}) + \mathcal{O}(t^{2\sigma - \nu - \beta})\\
    &=  \mathcal{O}(\log^2 t \cdot t^{\sigma - 2\nu})  + \mathcal{O}(t^{2\sigma - \nu - 1}) + \mathcal{O}(\log^2 t \cdot t^{2\sigma - 3\nu}) + \mathcal{O}(t^{2\sigma - \nu - \beta})
\end{align*}

So, we can observe that $E\Vert z_t\Vert^2 \rightarrow 0$ as $t \rightarrow \infty$, if the following conditions are satisfied:
\begin{align*}
    2\sigma - \nu &< \beta, \\
    2\sigma &< 3 \nu.
\end{align*}

By optimizing over the parameters $\nu$, $\sigma$ and $\beta$ we obtain, 
$\nu = 0.5$ , $\sigma = 0.5 + \delta$ and $\beta = 1$, where $\delta > 0$ can be chosen 
arbitrarily small. Consequently, we arrive at
\[
\frac{1}{1+t-\tau_t}\sum_{k=\tau_t}^{t}\mathbb{E}\,\|z_k\|^2 
= \mathcal{O}\!\left(\log^2 t \cdot t^{\,2\delta- 0.5}\right).
\]
Now, 
\begin{align*}
    2\delta & >0 \\
    \Rightarrow 2\delta-0.5 & > -0.5\\
    \Rightarrow \frac{1}{2\delta-0.5} & < -2
\end{align*}

We may express 
\[
\frac{1}{2\delta - 0.5} \;=\; -2 - \bar{\delta},
\]
where $\bar{\delta} > 0$ can be chosen arbitrarily small as $\delta \to 0^{+}$.  

Thus, in order for the mean squared error of the critic to be upper bounded by $\epsilon$, namely,
\[
\frac{1}{1+t-\tau_t}\sum_{k=\tau_t}^{t}\mathbb{E}\,\|z_k\|^2 
= \mathcal{O}\!\left(\log^2 T \cdot T^{\,2\delta - 0.5}\right) \;\leq\; \epsilon,
\]
it suffices to take
\[
T \;=\; \tilde{\mathcal{O}}\!\left(\epsilon^{-(2+\bar{\delta})}\right),
\]
with $\bar{\delta} > 0$ arbitrarily small.

The sample complexity obtained above can be further improved in the case 
$\bar{\delta} = 0$, which corresponds to choosing $\sigma = \nu$. Now , if $\nu = \sigma$, then the actor and critic evolve on the same timescale.  However, our setting involves a two-timescale critic–actor algorithm, with the actor operating on the faster timescale.  
Accordingly, we may choose the learning rates as : $a(t) = \frac{c_a (\ln (t+1))^{1/2}}{(1+t)^\nu}, b(t) = \frac{c_b}{(1+t)^\nu}, c(t) = \frac{c_c}{(1+t)^\beta}, d(t) = \frac{c_d (\ln (t+1))^{1/2}}{(1+t)^\nu}$ where $0.5 \leq \nu < \beta \leq 1$.\\

We provide below the finite-time analysis incorporating the updated learning rates.

\section{Finite Time Analysis with modified learning rates}

\subsection{Convergence of Average Cost Estimate}\label{average_reward_convergence_modified}

\subsubsection*{Proof of 
Theorem \ref{cost_convergence_2}:}

Looking back at the terms of inequality (\ref{avg_cost}), we have the following:

\begin{align*}
    I_1=& \sum\limits_{k=\tau_t}^{t}\frac{1}{2d(k)}(y_k^2-y_{k+1}^2)\\
    =&\  \sum\limits_{k=\tau_t+1}^{t}y_k^2(\frac{1}{2d(k)}-\frac{1}{2d(k-1)})+\frac{1}{2d(\tau_t)}y_{\tau_t}^2-\frac{1}{d(t)}y_{t+1}^2\\
    \leq &\ \frac{2U_r^2}{d(t)}\\
    = &\  \frac{2}{c_d \cdot \ln ^{0.5} (t + 1)}U_r^2 (1+t)^{\nu}
\end{align*}

We are assuming $\tau_t \geq 4$.
Now for term $I_2$ we can have the analysis similar to lemma 6 in \cite{panda_and_bhatnagar} and get,
\begin{align*}
    &\mathbb{E}[y_t(r_t-L(\theta_t,\gamma(t)))] \\
    &= \mathcal{O}(E\vert \gamma_p(t) - \gamma_p(t-\tau) \vert) +\mathcal{O}(E\Vert \theta_t - \theta_{t-\tau} \Vert) +\mathcal{O}(E\vert L_t - L_{t-\tau} \vert)\\
     &\qquad+\mathcal{O}(\sum_{i=t-\tau}^{t} E\Vert\theta_i - \theta_{t-\tau} \Vert)+\mathcal{O}(bk^{\tau - 1})
\end{align*}
where 
\begin{align*}
    \vert \gamma_p(t) - \gamma_p(t-\tau) \vert &= \max\limits_{i=1,2,...,N}\vert \gamma_i(t) - \gamma_i(t-\tau) \vert,\\
    t &\geq \tau \geq 0.
\end{align*}

Hence we have,
\begin{align*}
   I_2 =  &\sum\limits_{k=\tau_t}^{t}\mathbb{E}[y_k(r_k-L(\theta_k,\gamma(k)))]\\
    &= \mathcal{O}(\tau_t^2 \sum\limits_{k=\tau_t}^{t} a(k-\tau_t))\\
    &= \mathcal{O}(\tau_t^2 \ln^{0.5}(t+1)\sum\limits_{k=\tau_t}^{t} \frac{1}{(1 + k)^{\nu}})\\
    & = \mathcal{O}(\log^{2.5} t \cdot t^{1-\nu})
\end{align*}

\begin{align*}
    I_3=& \sum\limits_{k=\tau_t}^{t}\frac{1}{d(k)}\mathbb{E}[y_k(L(\theta_k,\gamma(k))-L(\theta_{k+1},\gamma(k+1)))]\\
    \leq &\ \sum\limits_{k=\tau_t}^{t}\frac{1}{d(k)}\mathbb{E}[L_{J'}U_r\Vert \theta_k-\theta_{k+1}\Vert^2+|y_k|\Vert \theta_k-\theta_{k+1}\Vert \Vert M(\theta_k,v_k,\gamma(k))\Vert]\\
     &\qquad + \sum\limits_{k=\tau_t}^{t}\frac{1}{d(k)}\mathbb{E}[y_k\langle \mathbb{E}_{\theta_k}[(V^{\theta_{k},\gamma(k)}(s_{k+1}) - v(k)^T\phi(s_{k+1}) - V^{\theta_{k},\gamma(k)}(s_{k}) \\
     &\qquad+ v(k)^T\phi(s_{k}))\nabla \log\pi_{\theta_k}(a_k|s_k)] , \theta_k - \theta_{k+1} \rangle]\\
     &\qquad + \sum\limits_{k=\tau_t}^{t}\frac{1}{d(k)} \mathbb{E}[y_k(L(\theta_{k+1},\gamma(k)) - L(\theta_{k+1},\gamma(k+1)))]\\
    \leq &\ \sum\limits_{k=\tau_t}^{t}\mathbb{E}[L_{J'}U_rG^2\frac{a(k)^2}{d(k)} + G\frac{c_a {\log t}^{0.5}}{c_d}|y_k|\Vert M(\theta_k,v_k,\gamma(k))\Vert]\\
     &\qquad + \sum\limits_{k=\tau_t}^{t}\frac{1}{d(k)}\mathbb{E}[y_k\langle \mathbb{E}_{\theta_k}[(V^{\theta_{k},\gamma(k)}(s_{k+1}) - v(k)^T\phi(s_{k+1}) - V^{\theta_{k},\gamma(k)}(s_{k})\\
     &\qquad+ v(k)^T\phi(s_{k}))\nabla \log\pi_{\theta_k}(a_k|s_k)] , \theta_k - \theta_{k+1} \rangle]\\
     &\qquad + \sum\limits_{k=\tau_t}^{t}\frac{1}{d(k)} \mathbb{E}[y_k(L(\theta_{k+1},\gamma(k)) - L(\theta_{k+1},\gamma(k+1)))]\\
    \leq &\ \frac{2L_{J'}U_rG^2c_a^2 \ln^{0.5} (t+1)}{c_d}(1+t-\tau_t)^{1-\nu}+ G\frac{c_a }{c_d}(\sum\limits_{k=\tau_t}^{t}\mathbb{E}y_t^2)^{\frac{1}{2}}(\sum\limits_{k=\tau_t}^{t}\mathbb{E}\Vert \bar{M}(\theta_k,v_k,\gamma(k))\Vert^2)^\frac{1}{2}\\
    &\qquad + \sum\limits_{k=\tau_t}^{t}\frac{1}{d(k)}\mathbb{E}[y_k\langle \mathbb{E}_{\theta_k}[(V^{\theta_{k},\gamma(k)}(s_{k+1}) - v(k)^T\phi(s_{k+1}) - V^{\theta_{k},\gamma(k)}(s_{k})\\
     &\qquad+ v(k)^T\phi(s_{k}))\nabla \log\pi_{\theta_k}(a_k|s_k)] , \theta_k - \theta_{k+1} \rangle]\\
     &\qquad + \sum\limits_{k=\tau_t}^{t}\frac{1}{d(k)} \mathbb{E}[y_k(L(\theta_{k+1},\gamma(k)) - L(\theta_{k+1},\gamma(k+1)))]\\
    = &\ \frac{2L_{J'}U_rG^2c_a^2 \ln^{0.5} (t+1)}{c_d}(1+t-\tau_t)^{1-\nu}+ G\frac{c_a }{c_d}(\sum\limits_{k=\tau_t}^{t}\mathbb{E}y_t^2)^{\frac{1}{2}}(\sum\limits_{k=\tau_t}^{t}\mathbb{E}\Vert \bar{M}(\theta_k,v_k,\gamma(k))\Vert^2)^\frac{1}{2}\\
    &\qquad +  \underbrace{\sum\limits_{k=\tau_t}^{t}\frac{c_a}{c_d}E[y_k\langle W(v_k,\theta_k,\gamma(k)) , -\delta_k \nabla_{\theta} \log \pi_{\theta_{k}}(s_k|a_k) + E_{\theta_k}[\delta_k \nabla_{\theta} \log \pi_{\theta_{k}}(s_k|a_k)] \rangle]}_{I_a}\\
    &\qquad + \underbrace{\sum\limits_{k=\tau_t}^{t}\frac{c_a}{c_d}E[y_k\langle W(v_k,\theta_k,\gamma(k)) , -E_{\theta_k}[\delta_k \nabla_{\theta} \log \pi_{\theta_{k}}(s_k|a_k)] \rangle]}_{I_b} + \mathcal{O}(t^{1 + \nu - \beta})
\end{align*}

For term $I_a$, we have,

\begin{align*}
    I_a = \mathcal{O}(\ln^{0.5} t \cdot \tau_t^2 \cdot t^{1-\nu}).
\end{align*}

For term $I_b$, we have,

\begin{align*}
    &\sum\limits_{k=\tau_t}^{t}\frac{c_a}{c_d}E[y_k\langle W(v_k,\theta_k,\gamma(k)) , -E_{\theta_k}[\delta_k \nabla_{\theta} \log \pi_{\theta_{k}}(s_k|a_k)] \rangle]\\
    &= \frac{c_a}{c_d}\sum\limits_{k=\tau_t}^{t}E[y_k\langle W(v_k,\theta_k,\gamma(k)) , -\bar{M}(\theta_k,v_k,\gamma(k)) \rangle]\\
    &\qquad + \frac{c_a}{c_d}\sum\limits_{k=\tau_t}^{t}E[y_k\langle W(v_k,\theta_k,\gamma(k)) , y_kE_{\theta_k}[\nabla_{\theta} \log \pi_{\theta_{k}}(s_k|a_k)] \rangle]\\
    &\leq U_w \frac{c_a}{c_d}(\sum\limits_{k=\tau_t}^{t}\mathbb{E}y_t^2)^{\frac{1}{2}}(\sum\limits_{k=\tau_t}^{t}\mathbb{E}\Vert \bar{M}(\theta_k,v_k,\gamma(k))\Vert^2)^\frac{1}{2}\\
    &\qquad+ \frac{c_a}{c_d}\sum\limits_{k=\tau_t}^{t}E[y_k^2\langle W(v_k,\theta_k,\gamma(k)) , E_{\theta_k}[\nabla_{\theta} \log \pi_{\theta_{k}}(s_k|a_k)] \rangle]\\
    &\leq U_w \frac{c_a}{c_d}(\sum\limits_{k=\tau_t}^{t}\mathbb{E}y_t^2)^{\frac{1}{2}}(\sum\limits_{k=\tau_t}^{t}\mathbb{E}\Vert \bar{M}(\theta_k,v_k,\gamma(k))\Vert^2)^\frac{1}{2} + \frac{c_a}{c_d}U_w B \sum\limits_{k=\tau_t}^{t}E[y_k^2].
\end{align*}
Hence collecting all the terms, we have,
\begin{align*}
    I_3 = &\frac{2L_{J'}U_rG^2c_a^2 \ln^{0.5} (t+1)}{c_d}(1+t-\tau_t)^{1-\nu}+ G\frac{c_a }{c_d}(\sum\limits_{k=\tau_t}^{t}\mathbb{E}y_t^2)^{\frac{1}{2}}(\sum\limits_{k=\tau_t}^{t}\mathbb{E}\Vert \bar{M}(\theta_k,v_k,\gamma(k))\Vert^2)^\frac{1}{2}\\
    &\qquad + \mathcal{O}(\ln^{0.5} t \cdot \tau_t^2 \cdot t^{1-\nu}) + \mathcal{O}(t^{1 + \nu - \beta})\\
    &\qquad + U_w \frac{c_a}{c_d}(\sum\limits_{k=\tau_t}^{t}\mathbb{E}y_t^2)^{\frac{1}{2}}(\sum\limits_{k=\tau_t}^{t}\mathbb{E}\Vert \bar{M}(\theta_k,v_k,\gamma(k))\Vert^2)^\frac{1}{2} + \frac{c_a}{c_d}U_w B \sum\limits_{k=\tau_t}^{t}E[y_k^2] 
\end{align*}

For term $I_4$, we have
\begin{align*}
    I_4=& \sum\limits_{k=\tau_t}^{t}\frac{1}{d(k)}\mathbb{E}[(L(\theta_k,\gamma(k))-L(\theta_{k+1},\gamma(k+1)))^2]\\
    = & \mathcal{O}(\ln^{0.5} t \cdot t^{1-\nu}).
\end{align*}

For term $I_5$, we have
\begin{align*}
    I_5=& \sum\limits_{k=\tau_t}^{t}d(k)\mathbb{E}[(r_k-L_k)^2]\\
    = & \mathcal{O}(\ln^{0.5} t \cdot t^{1-\nu}).
\end{align*}

Hence putting together terms $I_1 - I_5$ we have,
\begin{align*}
    \sum\limits_{k=\tau_t}^{t} \mathbb{E}[y_k^2]&\leq \frac{2}{c_d \cdot \ln ^{0.5} (t + 1)}U_r^2 (1+t)^{\nu} + \mathcal{O}(\log^{2.5} t \cdot t^{1-\nu}) + \frac{2L_{J'}U_rG^2c_a^2 \ln^{0.5} (t+1)}{c_d}(1+t-\tau_t)^{1-\nu}\\
    &\qquad+ G\frac{c_a }{c_d}(\sum\limits_{k=\tau_t}^{t}\mathbb{E}y_t^2)^{\frac{1}{2}}(\sum\limits_{k=\tau_t}^{t}\mathbb{E}\Vert \bar{M}(\theta_k,v_k,\gamma(k))\Vert^2)^\frac{1}{2}\\
    &\qquad + \mathcal{O}(\ln^{0.5} t \cdot \tau_t^2 \cdot t^{1-\nu}) + \mathcal{O}(t^{1 + \nu - \beta})\\
    &\qquad +  U_w \frac{c_a}{c_d}(\sum\limits_{k=\tau_t}^{t}\mathbb{E}y_t^2)^{\frac{1}{2}}(\sum\limits_{k=\tau_t}^{t}\mathbb{E}\Vert \bar{M}(\theta_k,v_k,\gamma(k))\Vert^2)^\frac{1}{2}+ \frac{c_a}{c_d}U_w B \sum\limits_{k=\tau_t}^{t}E[y_k^2]\\
    &\qquad +\mathcal{O}(\ln^{0.5} t \cdot t^{1-\nu})
\end{align*}

\begin{align*}
    \Rightarrow &\bigg( 1 - \frac{c_a}{c_d}U_w B\bigg)\sum\limits_{k=\tau_t}^{t} \mathbb{E}[y_k^2]\\
    &\qquad\leq  \frac{2}{c_d \cdot \ln ^{0.5} (t + 1)}U_r^2 (1+t)^{\nu} + \mathcal{O}(\log^{2.5} t \cdot t^{1-\nu}) + \frac{2L_{J'}U_rG^2c_a^2 \ln^{0.5} (t+1)}{c_d}(1+t-\tau_t)^{1-\nu}\\
    &\qquad+ G\frac{c_a }{c_d}(\sum\limits_{k=\tau_t}^{t}\mathbb{E}y_t^2)^{\frac{1}{2}}(\sum\limits_{k=\tau_t}^{t}\mathbb{E}\Vert \bar{M}(\theta_k,v_k,\gamma(k))\Vert^2)^\frac{1}{2}\\
    &\qquad + \mathcal{O}(\ln^{0.5} t \cdot \tau_t^2 \cdot t^{1-\nu}) + \mathcal{O}(t^{1 + \nu - \beta})\\
    &\qquad + U_w \frac{c_a}{c_d}(\sum\limits_{k=\tau_t}^{t}\mathbb{E}y_t^2)^{\frac{1}{2}}(\sum\limits_{k=\tau_t}^{t}\mathbb{E}\Vert \bar{M}(\theta_k,v_k,\gamma(k))\Vert^2)^\frac{1}{2}\\
    &\qquad +\mathcal{O}(\ln^{0.5} t \cdot t^{1-\nu})
\end{align*}

In order for the left-hand side to remain positive, the condition $\left(1 - \tfrac{c_a}{c_d} U_w B \right) > 0$
must hold. Therefore, the parameters $c_a$ and $c_d$ should be chosen so that the condition is satisified.

Hence, we obtain:

\begin{align*}
    \sum\limits_{k=\tau_t}^{t} \mathbb{E}[y_k^2] &\leq \mathcal{O}(\log^{-0.5} t \cdot  t^{\nu}) + \mathcal{O}(\log^{2.5} t \cdot t^{1-\nu}) + \frac{(G+ U_w )}{\bigg( 1 - \frac{c_a}{c_d}U_w B\bigg)}\frac{c_a }{c_d}(\sum\limits_{k=\tau_t}^{t}\mathbb{E}y_t^2)^{\frac{1}{2}}(\sum\limits_{k=\tau_t}^{t}\mathbb{E}\Vert \bar{M}(\theta_k,v_k,\gamma(k))\Vert^2)^\frac{1}{2}\\
    &\qquad + \mathcal{O}(t^{1 + \nu - \beta})
\end{align*}

After applying the squaring technique (see page 23 of \citep{wu2022finitetimeanalysistimescale}), we have,

\begin{align}\label{ineq1}
    \sum\limits_{k=\tau_t}^{t} \mathbb{E}[y_k^2] &\leq \mathcal{O}(\log^{-0.5} t \cdot  t^{\nu}) + \mathcal{O}(\log^{2.5} t \cdot t^{1-\nu})  + \mathcal{O}(t^{1 + \nu - \beta}) \notag\\
    &\qquad + 2\frac{(G + U_w)^2}{(1 - \frac{c_a}{c_d}U_w B)^2}\frac{c_a^2}{c_d^2}\sum\limits_{k=\tau_t}^{t}\mathbb{E}\Vert \bar{M}(\theta_k,v_k,\gamma(k))\Vert^2. 
\end{align}

\subsection{Convergence of Actor}\label{actor_convergence_modified}
\subsubsection*{Proof of 
Theorem \ref{actor_convergence_2}:}

Looking back at inequality (\ref{actor_convergence_ineq}), we have the following:

\begin{align*}
    &\lambda_G\sum\limits_{k=\tau_t}^{t}E\Vert \bar{M}(\theta_k,v_k,\gamma(k)) \Vert^2 \\
    &\leq \underbrace{\sum\limits_{k=\tau_t}^{t}\frac{1}{a(k)}E[(L(\theta_{k+1},\gamma(k)) -L(\theta_{k},\gamma(k)) + Q_k - Q_{k+1} )]}_{I_1} - \underbrace{\sum\limits_{k=\tau_t}^{t}E[I(O_k,\theta_k,L_k,v_k,\gamma(k),G(k))]}_{I_2} \\
    &\qquad \underbrace{-\sum\limits_{k=\tau_t}^{t}E[\Xi(O_k,\theta_k,v_k,\gamma(k),G(k))]}_{I_3} \\
    &\qquad  \underbrace{-\sum\limits_{k=\tau_t}^{t}E[\langle(V^{\theta_k,\gamma(k)}(s_{k+1})-V^{\theta_{k+1},\gamma(k+1)}(s_{k+1}))\nabla\log\pi_{\theta_k}(a_k|s_k) ,  M(\theta_k,v_k,\gamma(k),G(k))\rangle]}_{I_4}\\
    &\qquad \underbrace{-\sum\limits_{k=\tau_t}^{t}E[\langle ( \phi(s_{k+1})^T v_{k+1} - \phi(s_{k+1})^T v_{k})\nabla \log\pi_{\theta_k}(a_k|s_k) ,  M(\theta_k,v_k,\gamma(k),G(k))\rangle]}_{I_5}\\
    &\qquad \underbrace{- \sum\limits_{k=\tau_t}^{t}E[\langle(V^{\theta_{k+1},\gamma(k+1)}(s_{k+1}) - \phi(s_{k+1})^T v_{k+1})\nabla \log\pi_{\theta_t}(a_k|s_k) ,  M(\theta_k,v_k,\gamma(k),G(k)) \rangle]}_{I_6} \notag\\
    & \qquad + \underbrace{\sum\limits_{k=\tau_t}^{t}\frac{a(k+1)}{a(k)} E[\langle(V^{\theta_{k+1},\gamma(k+1)}(s_{k+1}) - \phi(s_{k+1})^T v_{k+1})\nabla \log\pi_{\theta_{k+1}}(a_{k+1}|s_{k+1})
    ,  M(\theta_{k + 1},v_{k+1},\gamma(k+1),G(k+1)) \rangle]}_{I_7} \\
    &\qquad \underbrace{-\sum\limits_{k=\tau_t}^{t}E[\langle \nabla L(\theta_k,\gamma(k)) , E_{\theta_k}[(L(\theta_k,\gamma(k)) - L_k)G(k)^{-1}\nabla \log\pi_{\theta_k}(a_k|s_k)] \rangle]}_{I_8} \\
    &\qquad +\underbrace{M_{L}\sum\limits_{k=\tau_t}^{t}a(k)E[\Vert\delta_{k}G(k)^{-1}\nabla \log\pi_{\theta_k}(a_k|s_k)\Vert^2]}_{I_9}.
\end{align*}

Now, for term $I_1$ we have,
\begin{align*}
    &\sum\limits_{k=\tau_t}^{t}\frac{1}{a(k)}E[(L(\theta_{k+1},\gamma(k)) -L(\theta_{k},\gamma(k)) + Q_k - Q_{k+1} )]\\
    &= \sum\limits_{k=\tau_t}^{t}E[(A_{k+1}- A_k)/a(k)]\\
    &= \mathcal{O}(1/a(t))\\
    &= \mathcal{O}((\log t)^{-0.5}\cdot t^{\nu})
\end{align*}
where $A_k = L(\theta_k,\gamma(k))- Q_k$.

We are assuming $\tau_t \geq 4$.

For term $I_2$, we have,
\begin{align*}
    I_2 = \mathcal{O}((\log t)^{2.5}\cdot t^{1-\nu})
\end{align*}
For term $I_3$, we have,
\begin{align*}
    I_3 = \mathcal{O}(\log ^{2.5} t \cdot t^{1-\nu})
\end{align*}

For term $I_4$ we have,

\begin{align*}
    &-\sum\limits_{k=\tau_t}^{t}E[\langle(V^{\theta_k,\gamma(k)}(s_{k+1})-V^{\theta_{k+1},\gamma(k+1)}(s_{k+1}))\nabla\log\pi_{\theta_k}(a_k|s_k) ,  M(\theta_k,v_k,\gamma(k),G(k))\rangle]\\
    & = \mathcal{O}((\log t)^{0.5}t^{1-\nu})
\end{align*}

For term $I_5$ we have,

\begin{align*}
    &-\sum\limits_{k=\tau_t}^{t}E[\langle ( \phi(s_{k+1})^T v_{k+1} - \phi(s_{k+1})^T v_{k})\nabla \log\pi_{\theta_k}(a_k|s_k) ,  M(\theta_k,v_k,\gamma(k),G(k))\rangle]\\
    & = \mathcal{O}(t^{1-\nu})
\end{align*}

For term $I_6$ and $I_7$ summed together we have,
\begin{align*}
    &- \sum\limits_{k=\tau_t}^{t}E[\langle(V^{\theta_{k+1},\gamma(k+1)}(s_{k+1}) - \phi(s_{k+1})^T v_{k+1})\nabla \log\pi_{\theta_k}(a_k|s_k) ,  M(\theta_k,v_k,\gamma(k),G(k)) \rangle]\\
    &\qquad +\sum\limits_{k=\tau_t}^{t}\frac{a(k+1)}{a(k)} E[\langle(V^{\theta_{k+1},\gamma(k+1)}(s_{k+1}) - \phi(s_{k+1})^T v_{k+1})\nabla \log\pi_{\theta_{k+1}}(a_{k+1}|s_{k+1})\\
    &\qquad \qquad \qquad,  M(\theta_{k + 1},v_{k+1},\gamma(k+1),G(k+1)) \rangle]\\
    & = \mathcal{O}(\sum\limits_{k=\tau_t}^{t}E\Vert \theta_{k+1} - \theta_k\Vert) +  \mathcal{O}(\sum\limits_{k=\tau_t}^{t}E\Vert v_{k+1} - v_k\Vert) + \mathcal{O}(\sum\limits_{k=\tau_t}^{t}E\Vert \gamma(k+1) - \gamma(k)\Vert)\\
    &\qquad + \mathcal{O}(\sum\limits_{k=\tau_t}^{t}E\Vert G(k+1) - G(k)\Vert) + \mathcal{O}\bigg(\sum\limits_{k=\tau_t}^{t}\frac{a(k) - a(k+1)}{a(k)}\bigg)\\
    &= \mathcal{O}((\log t)^{1/2} t^{1-\nu}) + \mathcal{O}\bigg(\sum\limits_{k=\tau_t}^{t}\frac{a(k) - a(k+1)}{a(k)}\bigg)\\
    &= \mathcal{O}((\log t)^{1/2}t^{1-\nu}) + \mathcal{O}\bigg(\sum\limits_{k=\tau_t}^{t}\frac{\frac{c_a (\ln (k+1))^{1/2}}{(1+k)^\nu} - \frac{c_a (\ln  (k+2))^{1/2}}{(2+k)^\nu}}{\frac{c_a (\ln (k+1))^{1/2}}{(1+k)^\nu}}\bigg)\\
    &=  \mathcal{O}((\log (t))^{1/2}t^{1-\nu}) + \mathcal{O}\bigg(\sum\limits_{k=\tau_t}^{t}\frac{\frac{c_a ((\ln (k+1))^{1/2}}{(1+k)^\nu} - \frac{c_a ((\ln (k+1))^{1/2}}{(2+k)^\nu}}{\frac{c_a ((\ln (k+1))^{1/2}}{(1+k)^\nu}}\bigg)\\
    &= \mathcal{O}((\log t)^{1/2}t^{1-\nu})
\end{align*}

For term $I_8$ we have,

\begin{align*}
    -&\sum\limits_{k=\tau_t}^{t}E[\langle \nabla L(\theta_k,\gamma(k)) , E_{\theta_k}[(L(\theta_k,\gamma(k)) - L_k)G(k)^{-1}\nabla \log\pi_{\theta_k}(a_k|s_k)] \rangle]\\
    =&\ \sum\limits_{k=\tau_t}^{t}E[\langle E_{\theta_k}[(r(s,a,\gamma(k)) - L(\theta_k,\gamma(k)) + V^{\theta_k,\gamma(k)}(s^{'}) - V^{\theta_k,\gamma(k)}(s))\nabla \log \pi_{\theta_k}(a|s)] \\
    &\qquad \qquad \qquad, ( L_k - L(\theta_k,\gamma(k)))G(k)^{-1}E_{\theta_k}[\nabla \log\pi_{\theta_k}(a_k|s_k)]] \rangle]\notag\\
    = &\ \sum\limits_{k=\tau_t}^{t}E[\langle E_{\theta_k}[(r(s,a,\gamma(k)) - L(\theta_k,\gamma(k)) + (\phi(s^{'}) - \phi(s))^Tv(k))\nabla \log \pi_{\theta_k}(a|s)]\\
    &\qquad, ( L_k - L(\theta_k,\gamma(k)))G(k)^{-1}E_{\theta_k}[\nabla \log\pi_{\theta_k}(a_k|s_k)]] \rangle]\notag\\
    &+ \sum\limits_{k=\tau_t}^{t}E[\langle E_{\theta_k}[( V^{\theta_k,\gamma(k)}(s^{'})- \phi(s^{'})^Tv_{k} + \phi(s)^Tv_{k} - V^{\theta_k,\gamma(k)}(s))\nabla \log \pi_{\theta_k}(a|s)] \\
    &\qquad \qquad, ( L_k - L(\theta_k,\gamma(k)))G(k)^{-1}E_{\theta_k}[\nabla \log\pi_{\theta_k}(a_k|s_k)]] \rangle]\notag\\
    \leq  &\ BU_{G}\sqrt{\sum\limits_{k=\tau_t}^{t}E\Vert \bar{M}(\theta_k,v_k,\gamma(k))\Vert^2}\sqrt{\sum\limits_{k=\tau_t}^{t}E\vert L_k - L(\theta_k,\gamma(k)) \vert^2} + I_{8a}.
\end{align*}

where 
\begin{align*}
    I_{8a} = &\sum\limits_{k=\tau_t}^{t}E[\langle E_{\theta_k}[( V^{\theta_k,\gamma(k)}(s^{'})- \phi(s^{'})^Tv_{k} + \phi(s)^Tv_{k} - V^{\theta_k,\gamma(k)}(s))\nabla \log \pi_{\theta_k}(a|s)] \\
    &\qquad \qquad, ( L_k - L(\theta_k,\gamma(k)))G(k)^{-1}E_{\theta_k}[\nabla \log\pi_{\theta_k}(a_k|s_k)]] \rangle]
\end{align*}

Now, for the term $I_{8a}$, we have,

\begin{align*}
   I_{8a}= I_{8a1} + I_{8a2}.
\end{align*}

where,

\begin{align*}
     I_{8a1} = \sum\limits_{k=\tau_t}^{t}E[\langle E_{\theta_k}[ \bar{W}(O_k,\theta_k,v_k,\gamma(k))] - \bar{W}(O_k,\theta_k,v_k,\gamma(k))]  , ( L_k - L(\theta_k,\gamma(k)))G(k)^{-1}E_{\theta_k}[\nabla \log\pi_{\theta_k}(a_k|s_k)]] \rangle]
\end{align*}

and,

\begin{align*}
    I_{8a2} &= \sum\limits_{k=\tau_t}^{t}E[\langle ( V^{\theta_k,\gamma(k)}(s_{k+1})- \phi(s_{k+1})^Tv_{k} + \phi(s_k)^Tv_{k} - V^{\theta_k,\gamma(k)}(s_k))\nabla \log \pi_{\theta_k}(a_k|s_k)\\
    &\qquad \qquad , ( L_k - L(\theta_k,\gamma(k)))G(k)^{-1}E_{\theta_k}[\nabla \log\pi_{\theta_k}(a_k|s_k)]] \rangle]
\end{align*}

After analysing the term $I_{8a1}$ similar to term $I_{8a1}$ in \cite{Panda_Bhatnagar_2025}, we get,

\begin{align*}
    I_{8a1} = \mathcal{O}(\log ^{2.5} t \cdot t^{1-\nu}).
\end{align*}

For the term $I_{8a2}$, we have (see \cite{Panda_Bhatnagar_2025}),
\begin{align*}
    I_{8a2} = \mathcal{O}(\log ^{0.5} t \cdot t^{1-\nu}) + \mathcal{O}(\log ^{-0.5} t \cdot t^{\nu}) 
\end{align*}

Hence, putting all these results back in , we obtain,

\begin{align*}
    I_8 \leq  &\ BU_{G}\sqrt{\sum\limits_{k=\tau_t}^{t}E\Vert \bar{M}(\theta_k,v_k,\gamma(k))\Vert^2}\sqrt{\sum\limits_{k=\tau_t}^{t}E\vert L_k - L(\theta_k,\gamma(k)) \vert^2} + \mathcal{O}(\log ^{2.5} t \cdot t^{1-\nu})+ \mathcal{O}(\log ^{-0.5} t \cdot t^{\nu}).
\end{align*}

For term $I_9$, we have,

\begin{align*}
  &M_{L}\sum\limits_{k=\tau_t}^{t}a(k)E[\Vert\delta_{k}G(k)^{-1}\nabla \log\pi_{\theta_k}(a_k|s_k)\Vert^2] \\
    & = \mathcal{O}(\log ^{0.5} t \cdot t^{1-\nu}).
\end{align*}

Now, gathering all the terms we have,

\begin{align*}
    \sum\limits_{k=\tau_t}^{t}E\Vert \bar{M}(\theta_k,v_k,\gamma(k)) \Vert^2 &\leq \mathcal{O}((\log t)^{-0.5}\cdot t^{\nu}) +  \mathcal{O}(\log ^{2.5} t \cdot t^{1-\nu})\\
    &\qquad + \frac{BU_{G}}{\lambda_G}\sqrt{\sum\limits_{k=\tau_t}^{t}E\Vert \bar{M}(\theta_k,v_k,\gamma(k))\Vert^2}\sqrt{\sum\limits_{k=\tau_t}^{t}E\vert L_k - L(\theta_k,\gamma(k)) \vert^2}
\end{align*}

After applying the squaring technique we have,

\begin{align*}
    \sum\limits_{k=\tau_t}^{t}E\Vert \bar{M}(\theta_k,v_k,\gamma(k)) \Vert^2& \leq \mathcal{O}((\log t)^{-0.5}\cdot t^{\nu}) +  \mathcal{O}(\log ^{2.5} t \cdot t^{1-\nu}) + 2\frac{B^2U_{G}^2}{\lambda_G^2}\sum\limits_{k=\tau_t}^{t}E\vert L_k - L(\theta_k,\gamma(k)) \vert^2\\
    &\leq \mathcal{O}((\log t)^{-0.5}\cdot t^{\nu}) +  \mathcal{O}(\log ^{2.5} t \cdot t^{1-\nu}) + \mathcal{O}(t^{1 + \nu - \beta})\\
    &\qquad + 4\frac{B^2U_{G}^2}{\lambda_G^2}\frac{(G + U_w)^2}{(1 - \frac{c_a}{c_d}U_w B)^2}\frac{c_a^2}{c_d^2}\sum\limits_{k=\tau_t}^{t}\mathbb{E}\Vert M(\theta_k,v_k,\gamma(k))\Vert^2
\end{align*}

The last inequality follows from \ref{ineq1}.

Now if we select the values for $c_a$ and $c_d$ such that $4\frac{B^2U_{G}^2}{\lambda_G^2}\frac{(G + U_w)^2}{(1 - \frac{c_a}{c_d}U_w B)^2}\frac{c_a^2}{c_d^2} < 1$, we shall obtain,
 \begin{align*}
     \sum\limits_{k=\tau_t}^{t}E\Vert \bar{M}(\theta_k,v_k,\gamma(k)) \Vert^2 &= \mathcal{O}((\log t)^{-0.5}\cdot t^{\nu}) +  \mathcal{O}(\log ^{2.5} t \cdot t^{1-\nu}) + \mathcal{O}(t^{1 + \nu - \beta}).
 \end{align*}
 Dividing by $(1 + t - \tau_t)$  and assuming $t \geq 2\tau_t + 1$, we have,
 \begin{align}\label{ineq2}
     \frac{1}{(1 + t - \tau_t)}\sum\limits_{k=\tau_t}^{t}E\Vert \bar{M}(\theta_k,v_k,\gamma(k))\Vert^2 &= \mathcal{O}((\log t)^{-0.5} \cdot t^{\nu - 1}) + \mathcal{O}(\log^{2.5} t \cdot t^{-\nu}) + \mathcal{O}(t^{ \nu - \beta}).
 \end{align}

\subsection{Convergence of the Critic}

\subsubsection*{Proof of 
Theorem \ref{critic_convergence_2}:}

Revisiting inequality (\ref{critic_convg_ineq}) we have,

\begin{align*}
    \lambda\sum\limits_{k=\tau_t}^{t}E\Vert z_k \Vert^2 &\leq \underbrace{\sum\limits_{k=\tau_t}^{t}\frac{1}{2b(k)}(E\Vert z_k \Vert^2 - E\Vert z_{k+1} \Vert^2)}_{I_1} + \underbrace{\sum\limits_{k=\tau_t}^{t}E[\langle z_k ,  \delta_k\phi(s_k) - E_{\theta_k}[\delta_k\phi(s_k)]\rangle]}_{I_2}\\
    &\qquad + \underbrace{\sum\limits_{k=\tau_t}^{t}\frac{1}{b(k)}E[\langle z_k , v^{*}(\theta_k,\gamma(k)) - v^{*}(\theta_{k+1},\gamma(k+1)) 
    + (\nabla v_k^*)^T(\theta_{k+1} - \theta_k)\rangle]}_{I_3}\\
    &\qquad+ \underbrace{\sum\limits_{k=\tau_t}^{t}\frac{1}{b(k)}E[\langle z_k , (\nabla v_k^*)^T(\theta_{k} - \theta_{k+1}) \rangle]}_{I_4} +\underbrace{\sum\limits_{k=\tau_t}^{t}b(k)E[\delta_k^2\Vert \phi(s_k)\Vert^2]}_{I_5} \\
    &\qquad + \underbrace{\sum\limits_{k=\tau_t}^{t}\frac{1}{b(k)}E\Vert v^{*}(\theta_k,\gamma(k)) - v^{*}(\theta_{k+1},\gamma(k+1))  \Vert^2}_{I_6}.
\end{align*}

For term $I_1$ we have,
\begin{align*}
    &\sum\limits_{k=\tau_t}^{t}\frac{1}{2b(k)}(E\Vert z_k \Vert^2 - E\Vert z_{k+1} \Vert^2) = \mathcal{O}( t^{\nu})
    \end{align*}

For term $I_2$ we have,
\begin{align*}
    I_2 = \mathcal{O}(\log^{2.5} t \cdot t^{1-\nu})
\end{align*}

For term $I_3$ above, we have,
\begin{align*}
   &\sum\limits_{k=\tau_t}^{t}\frac{1}{b(k)}E[\langle z_k , v^{*}(\theta_k,\gamma(k)) - v^{*}(\theta_{k+1},\gamma(k+1)) 
    + (\nabla v_k^*)^T(\theta_{k+1} - \theta_k)\rangle]\\
    &= \sum\limits_{k=\tau_t}^{t}\frac{1}{b(k)}E[\langle z_k , v^{*}(\theta_k,\gamma(k)) - v^{*}(\theta_{k+1},\gamma(k))  + (\nabla v_k^*)^T(\theta_{k+1} - \theta_k)\rangle]\\
    &\qquad + \sum\limits_{k=\tau_t}^{t}\frac{1}{b(k)}E[\langle z_k , v^{*}(\theta_{k+1},\gamma(k)) - v^{*}(\theta_{k+1},\gamma(k+1))\rangle]\\
    &\leq \frac{L_{m}}{2}\sum_{k=\tau_t}^{t}\frac{1}{b(k)}E\Vert z_k\Vert \Vert \theta_{k+1} - \theta_k \Vert^2 + \sum\limits_{k=\tau_t}^{t}\frac{1}{b(k)}E[\langle z_k , v^{*}(\theta_{k+1},\gamma(k)) - v^{*}(\theta_{k+1},\gamma(k+1))\rangle]\\
    &=\mathcal{O}(\sum_{k=\tau_t}^{t}\frac{a(k)^2}{b(k)}) + \mathcal{O}(\sum_{k=\tau_t}^{t}\frac{c(k)}{b(k)})\\
    &= \mathcal{O}(\log t \cdot t^{1-\nu}) + \mathcal{O}(t^{\nu - \beta + 1})
\end{align*}

For term $I_4$ we have,

\begin{align*}
    &\sum\limits_{k=\tau_t}^{t}\frac{1}{b(k)}E[\langle z_k , (\nabla v_k^*)^T(\theta_{k} - \theta_{k+1}) \rangle]\\
    &= -\sum_{k=\tau_t}^{t} \frac{1}{b(k)}E\langle z_k , (\nabla v_k^*)^T a(k) \delta_k G(k)^{-1}\nabla_{\theta} \log \pi_{\theta_{k}}(a_k|s_k) \rangle\\
    &= -\sum_{k=\tau_t}^{t} \frac{1}{b(k)}E\langle z_k , (\nabla v_k^*)^T a(k) (r(s_k,a_k,\gamma(k)) - L_k + \phi(s_{k+1})^{\top} v_{k} - \phi(s_k)^{\top} v_{k})G(k)^{-1} \nabla_{\theta} \log \pi_{\theta_{k}}(a_k|s_k) \rangle\\
    &= -\sum_{k=\tau_t}^{t} \frac{1}{b(k)}E\langle z_k , (\nabla v_k^*)^T a(k) (r(s_k,a_k,\gamma(k)) - L(\theta_k,\gamma(k)) + \phi(s_{k+1})^{\top} v_{k} - \phi(s_k)^{\top} v_{k}) G(k)^{-1}\nabla_{\theta} \log \pi_{\theta_{k}}(a_k|s_k) \rangle\\
    &\qquad - \sum_{k=\tau_t}^{t} \frac{1}{b(k)}E\langle z_k , (\nabla v_k^*)^T a(k) (L(\theta_k,\gamma(k)) - L_k) G(k)^{-1}\nabla_{\theta} \log \pi_{\theta_{k}}(a_k|s_k) \rangle\\
    & = -\sum_{k=\tau_t}^{t} \frac{a(k)}{b(k)}E\langle z_k , (\nabla v_k^*)^T (r(s_k,a_k,\gamma(k)) - L(\theta_k,\gamma(k)) + \phi(s_{k+1})^{\top} v_{k} - \phi(s_k)^{\top} v_{k}) G(k)^{-1}\nabla_{\theta} \log \pi_{\theta_{k}}(a_k|s_k) \rangle\\
    &\qquad + \sum_{k=\tau_t}^{t} \frac{a(k)}{b(k)}E\langle z_k , (\nabla v_k^*)^T E_{\theta_k}[(r(s_k,a_k,\gamma(k)) - L(\theta_k,\gamma(k)) + \phi(s_{k+1})^{\top} v_{k} - \phi(s_k)^{\top} v_{k}) G(k)^{-1}\nabla_{\theta} \log \pi_{\theta_{k}}(a_k|s_k) ]\rangle\\
    &\qquad -  \sum_{k=\tau_t}^{t} \frac{a(k)}{b(k)}E\langle z_k , (\nabla v_k^*)^T E_{\theta_k}[(r(s_k,a_k,\gamma(k)) - L(\theta_k,\gamma(k)) + \phi(s_{k+1})^{\top} v_{k} - \phi(s_k)^{\top} v_{k}) G(k)^{-1}\nabla_{\theta} \log \pi_{\theta_{k}}(a_k|s_k) ]\rangle\\
    &\qquad - \sum_{k=\tau_t}^{t} \frac{a(k)}{b(k)}E\langle z_k , (\nabla v_k^*)^T  (L(\theta_k,\gamma(k)) - L_k) G(k)^{-1}\nabla_{\theta} \log \pi_{\theta_{k}}(a_k|s_k) \rangle\\
    &= \sum_{k=\tau_t}^{t}E[\frac{a(k)}{b(k)}\Psi(O_k,v_k,\theta_k,\gamma(k),G(k))]\\
    &\qquad-  \sum_{k=\tau_t}^{t} \frac{a(k)}{b(k)}E\langle z_k , (\nabla v_k^*)^T E_{\theta_k}[(r(s_k,a_k,\gamma(k)) - L(\theta_k,\gamma(k)) + \phi(s_{k+1})^{\top} v_{k} - \phi(s_k)^{\top} v_{k}) G(k)^{-1}\nabla_{\theta} \log \pi_{\theta_{k}}(a_k|s_k) ]\rangle\\
    &\qquad - \sum_{k=\tau_t}^{t} \frac{a(k)}{b(k)}E\langle z_k , (\nabla v_k^*)^T  (L(\theta_k,\gamma(k)) - L_k) G(k)^{-1}\nabla_{\theta} \log \pi_{\theta_{k}}(a_k|s_k) \rangle\\
    &\leq \frac{c_a}{c_b} \log^{0.5} t \sum_{k=\tau_t}^{t}\vert E[\Psi(O_k,v_k,\theta_k,\gamma(k),G(k))]\vert + L_{*}U_{G}\frac{c_a}{c_b}\log^{0.5} t\sqrt{\sum\limits_{k=\tau_t}^{t}E\Vert z_k\Vert^2}\sqrt{\sum\limits_{k=\tau_t}^{t}E[\Vert \bar{M}(\theta_k,v_k,\gamma(k))\Vert^2]}\\
    &\qquad + L_{*}BU_{G}\frac{c_a}{c_b}\log^{0.5} t\sqrt{\sum\limits_{k=\tau_t}^{t}E\Vert z_k\Vert^2}\sqrt{\sum\limits_{k=\tau_t}^{t}E[(L(\theta_k,\gamma(k))- L_k)^2]} \\
    &= \mathcal{O}(\log^{2.5} t \cdot t^{- \nu + 1}) + L_{*}U_{G}\frac{c_a}{c_b}\log^{0.5} t\sqrt{\sum\limits_{k=\tau_t}^{t}E\Vert z_k\Vert^2}\sqrt{\sum\limits_{k=\tau_t}^{t}E[\Vert \bar{M}(\theta_k,v_k,\gamma(k))\Vert^2]}\\
    &\qquad + L_{*}BU_{G}\frac{c_a}{c_b}\log^{0.5} t\sqrt{\sum\limits_{k=\tau_t}^{t}E\Vert z_k\Vert^2}\sqrt{\sum\limits_{k=\tau_t}^{t}E[(L(\theta_k,\gamma(k))- L_k)^2]}.
    \end{align*}

    For the term $I_5$, we have,
\begin{align*}
    \sum\limits_{k=\tau_t}^{t}b(k)E[\delta_k^2\Vert \phi(s_k)\Vert^2] = \mathcal{O}(t^{1-\nu}).
\end{align*}

Next, for the term $I_6$, we have,

\begin{align*}
   &\sum\limits_{k=\tau_t}^{t}\frac{1}{b(k)}E\Vert v^{*}(\theta_k,\gamma(k)) - v^{*}(\theta_{k+1},\gamma(k+1))  \Vert^2\\
   &= \mathcal{O}(\sum\limits_{k=\tau_t}^{t}\frac{a(k)^2}{b(k)}) + \mathcal{O}(\sum\limits_{k=\tau_t}^{t}\frac{c(k)^2}{b(k)})\\
   &= \mathcal{O}(\log t \cdot t^{1 - \nu }). 
\end{align*}

After gathering all the terms we have,

\begin{align*}
     \lambda\sum\limits_{k=\tau_t}^{t}E\Vert z_k \Vert^2 &\leq \mathcal{O}( t^{\nu}) +  \mathcal{O}(\log^{2.5} t \cdot t^{1-\nu})  + \mathcal{O}(t^{\nu - \beta + 1}) \\
     &\qquad +  L_{*}U_{G}\frac{c_a}{c_b}\log^{0.5} t\sqrt{\sum\limits_{k=\tau_t}^{t}E\Vert z_k\Vert^2}\sqrt{\sum\limits_{k=\tau_t}^{t}E[\Vert \bar{M}(\theta_k,v_k,\gamma(k))\Vert^2]}\\
    &\qquad + L_{*}BU_{G}\frac{c_a}{c_b}\log^{0.5} t\sqrt{\sum\limits_{k=\tau_t}^{t}E\Vert z_k\Vert^2}\sqrt{\sum\limits_{k=\tau_t}^{t}E[(L(\theta_k,\gamma(k))- L_k)^2]}
\end{align*}

After applying the square technique we have,

\begin{align*}
    \sum\limits_{k=\tau_t}^{t}E\Vert z_k \Vert^2  &= \mathcal{O}( t^{\nu}) +  \mathcal{O}(\log^{2.5} t \cdot t^{1-\nu})  + \mathcal{O}(t^{\nu - \beta + 1})\\
    &\qquad + \mathcal{O}(\log^{0.5} t \cdot \sum\limits_{k=\tau_t}^{t}E[\Vert \bar{M}(\theta_k,v_k,\gamma(k))\Vert^2]) + \mathcal{O}(\log^{0.5} t \cdot \sum\limits_{k=\tau_t}^{t}E[(L(\theta_k,\gamma(k))- L_k)^2])\\
    &= \mathcal{O}( t^{\nu}) +  \mathcal{O}(\log^{3} t \cdot t^{1-\nu})  + \mathcal{O}(\log^{0.5} t \cdot t^{\nu - \beta + 1}) 
\end{align*}

Assuming $t \geq 2\tau_{t} - 1$, we have,

\begin{align*}
    \frac{1}{1+t-\tau_t}\sum\limits_{k=\tau_t}^{t}E\Vert z_k \Vert^2 = \mathcal{O}( t^{\nu - 1}) +  \mathcal{O}(\log^{3} t \cdot t^{-\nu})  + \mathcal{O}(\log^{0.5} t \cdot t^{\nu - \beta}) 
\end{align*}

Optimising over the values of $\nu$ and $\beta$ we have $\nu = 0.5$ and $\beta = 1$. Hence we have  the following :-
\begin{align*}
    \frac{1}{1+t-\tau_t}\sum_{k=\tau_t}^{t}E\Vert z_k \Vert^2 &= \mathcal{O}(\log^3 t \cdot t^{ - 0.5})
\end{align*}

Therefore in order for the mean squared error of the critic to be upper bounded by $\epsilon$, namely,

\begin{align*}
     \frac{1}{1+t-\tau_t}\sum_{k=\tau_t}^{t}E\Vert z_k \Vert^2  =  \mathcal{O}(\log^3 T \cdot T^{- 0.5}) \leq \epsilon,
\end{align*}
we need to set $T = \tilde{\mathcal{O}}(\epsilon^{-2})$.

\section{CPU details}

\begin{table}[h!]
\centering
\begin{tabular}{|l|l|}
\hline
\textbf{Component} & \textbf{Details} \\ \hline
Architecture       & x86\_64 \\ \hline
CPU op-mode(s)     & 32-bit, 64-bit \\ \hline
Byte Order         & Little Endian \\ \hline
Address sizes      & 48 bits physical, 48 bits virtual \\ \hline
CPU(s)             & 256 (2 sockets $\times$ 64 cores/socket $\times$ 2 threads/core) \\ \hline
Threads per core   & 2 \\ \hline
Cores per socket   & 64 \\ \hline
Socket(s)          & 2 \\ \hline
NUMA nodes         & 2 \\ \hline
Model name         & AMD EPYC 7713 64-Core Processor \\ \hline
Base Frequency     & 2.82 GHz \\ \hline
Max Frequency      & 3.72 GHz \\ \hline
Min Frequency      & 1.50 GHz \\ \hline
Caches             & L1d: 4 MiB, L1i: 4 MiB, L2: 64 MiB, L3: 512 MiB \\ \hline
Virtualization     & AMD-V \\ \hline
NUMA node0 CPUs    & 0--63, 128--191 \\ \hline
NUMA node1 CPUs    & 64--127, 192--255 \\ \hline
\end{tabular}
\caption{Computing infrastructure of the server (CPU details)}
\label{tab:server_cpu}
\end{table}

\end{document}